\documentclass[10pt]{article} 
\usepackage[preprint]{tmlr}


\usepackage{amsmath,amsfonts,bm}









\def\eqref#1{equation~\ref{#1}}









\def\1{\bm{1}}








\def\vtheta{{\bm{\theta}}}
\def\vphi{{\bm{\phi}}}

\def\vs{{\bm{s}}}

\def\vv{{\bm{v}}}

\def\vx{{\bm{x}}}



\DeclareMathAlphabet{\mathsfit}{\encodingdefault}{\sfdefault}{m}{sl}
\SetMathAlphabet{\mathsfit}{bold}{\encodingdefault}{\sfdefault}{bx}{n}

\def\gA{{\mathcal{A}}}

\def\gD{{\mathcal{D}}}

\def\gN{{\mathcal{N}}}
\def\gO{{\mathcal{O}}}

\def\gX{{\mathcal{X}}}
\def\gY{{\mathcal{Y}}}



\def\sR{{\mathbb{R}}}










\DeclareMathOperator*{\argmax}{arg\,max}


\usepackage{microtype}
\usepackage{graphicx}
\usepackage{caption}
\usepackage{subcaption}
\usepackage{booktabs} 
\usepackage{colortbl}
\usepackage{tabularx, booktabs, makecell}

\definecolor{mydarkblue}{rgb}{0,0.08,0.45} 
\usepackage[
    pagebackref,breaklinks,colorlinks,allcolors=mydarkblue
]{hyperref}                      
\usepackage{xurl}



\usepackage{amsmath}
\usepackage{amssymb}
\usepackage{mathtools}
\usepackage{amsthm}
\usepackage{apxproof}
\usepackage{thmtools,thm-restate}

\usepackage{tikz}
\usepackage{pgfplots}\pgfplotsset{compat=newest}
\usetikzlibrary{shapes.geometric}
\usetikzlibrary{arrows.meta,bending,positioning}
\usetikzlibrary{calc}
\usetikzlibrary{shapes,fit,positioning}
\usetikzlibrary{shapes.geometric, shapes.misc, calc}


\usepackage[capitalize,nameinlink]{cleveref}
\crefname{section}{Sec.}{Secs.}
\crefname{table}{Table}{Tables}
\crefname{figure}{Fig.}{Figs.}
\crefname{algorithm}{Alg.}{Algs.}

\renewcommand{\paragraph}[1]{\noindent\textbf{#1}~~}

\usepackage{wrapfig}

\usepackage{comment}

\theoremstyle{plain}
\newtheorem{theorem}{Theorem}[section]
\newtheorem{proposition}[theorem]{Proposition}
\newtheorem{lemma}[theorem]{Lemma}

\theoremstyle{definition}
\newtheorem{definition}[theorem]{Definition}

\theoremstyle{remark}

\newcommand{\cmark}{\textcolor{green!50!black}{\ding{51}}}
\newcommand{\xmark}{\textcolor{red!70!black}{\ding{55}}}
\usepackage{pifont}  

\usepackage{xspace}

\newcommand{\settingDirectData}{$\mathbf{S_{\text{data}}}$\xspace}
\newcommand{\settingAssumed}{$\mathbf{S_{\text{prior}}}$\xspace}
\newcommand{\settingLearned}{$\mathbf{S_{\text{learned}}}$\xspace}

\newcommand{\classifier}{f}
\newcommand{\pretrainedmodel}{f_{\vtheta}^{\text{pre}}}
\newcommand{\taskwisemechanism}{\mathcal{M}}
\newcommand{\assumedlabels}{\mathcal{O}^{\text{prior}}}
\newcommand{\learnedlabels}{\mathcal{O}^{\text{learned}}}
\newcommand{\privatelabels}{\mathcal{O}^{\text{data}}}

\usepackage{paralist}
\usepackage{enumitem}
\newlist{inlinelist}{enumerate*}{1}
\setlist*[inlinelist,1]{%
      label=(\em\roman*),
  }
\usepackage{graphicx}
\usepackage{multirow} 

\let\AND\relax
\usepackage{algorithm, algorithmic}
\usepackage{fancyhdr}

\usepackage{natbib}
\usepackage{eso-pic}
\usepackage{forloop}
\usepackage{url}
\usepackage{tocloft}
\setlength{\cftbeforesecskip}{6pt}

\title{Privacy Leakage via Output Label Space and Differentially Private Continual Learning}



\author{\name Marlon Tobaben\thanks{These authors contributed equally.}\\
\addr University of Helsinki\\
\addr CSC – IT Center for Science \\
\AND
\name Talal Alrawajfeh\footnotemark[1] \\
\addr University of Helsinki \\
\AND
\name Marcus Klasson\\
\addr Ericsson Research \\
\AND
\name Mikko Heikkilä\\
\addr  University of Helsinki \\
\AND
\name Arno Solin\\
\addr Aalto University \\
\AND
\name Antti Honkela\\
\addr  University of Helsinki \\
}


%

\begin{document}

\maketitle

\begin{abstract}
Differential privacy (DP) is a formal privacy framework that enables training machine learning (ML) models while protecting individuals' data. As pointed out by prior work, ML models are part of larger systems, which can lead to so-called privacy side-channels even if the model training itself is DP. We identify the output label space of a classification model as such a privacy side-channel and show a concrete privacy attack that exploits it. The side-channel becomes highly relevant in continual learning (CL), where the output label space changes over time. 
To reason about privacy guarantees in CL, we introduce a formalisation of DP for CL, which also clarifies how our approach differs from existing approaches.
We propose and evaluate two methods for eliminating this side-channel: applying an optimal DP mechanism to release the labels in the sensitive data, and using a large public label space. We explore the trade-offs of these methods through adapting pre-trained models. We demonstrate empirically that our models consistently achieve higher accuracy under DP than previous work over both Split-CIFAR-100 and Split-ImageNet-R, with a stronger privacy model.
\end{abstract}

\addtocontents{toc}{\protect\setcounter{tocdepth}{0}}
\section{Introduction}

\label{sec:intro} 
Differential privacy~\citep[DP;][]{dwork2006calibrating} enables training machine learning (ML) models that are
robust to various privacy attacks \citep{DBLP:conf/sp/ShokriSSS17, DBLP:conf/sp/BalleCH22, haim2022reconstructing}. 
However, ML models are part of a larger systems which has other components. These components, e.g., model output filtering or input preprocessing, can leak privacy despite the actual model being privacy-preserving. \citet{Debenedetti2024SideChannel} calls any privacy-leaking component, a privacy side-channel. These side-channels can completely invalidate privacy guarantees w.r.t. the sensitive data. Regardless whether a concrete attack exists, all operations accessing sensitive data must adhere to the DP definition (see \cref{sec:related_work}). We focus on the output label space of a classifier, i.e., the labels which it can classify, as a privacy side-channel.

\begin{figure}[b]
\vspace{1mm}
\newcommand{\stickperson}{%
  \begin{tikzpicture}[baseline=-.5ex,scale=0.5,thick]
    \draw[rounded corners=.5,line width=.5pt] (-.25,.2) rectangle (.25,-.6);
    \draw (0,0) circle (0.1);
    \draw (0,-0.1) -- (0,-0.3);
    \draw (-0.15,-0.15) -- (0,-0.2);
    \draw (0,-0.2) -- (0.15,-0.15);    
    \draw (0,-0.3) -- (-0.1,-0.5);
    \draw (0,-0.3) -- (0.1,-0.5);
  \end{tikzpicture}%
}
\vspace{-7mm}
  \centering
  \begin{subfigure}[c]{0.67\linewidth}
    \resizebox{\textwidth}{!}{%
\begin{tikzpicture}

  \newlength{\halfheight}
  \newlength{\pointer}

  \tikzset{
    output/.style={
      font=\scriptsize\strut~~,
      inner xsep=4.5pt,      
      inner ysep=3pt,      
      line width=.75pt,
      rounded corners=1pt,
      minimum width=2.5em,
      draw=none,
      path picture={
      \begingroup
        \pgfpointdiff{\pgfpointanchor{path picture bounding box}{south west}}
                     {\pgfpointanchor{path picture bounding box}{north east}}
        \pgfgetlastxy{\bbwidth}{\bbheight}
        \pgfmathsetlength{\halfheight}{0.5*\bbheight}
        \pgfmathsetlength{\pointer}{0.6em}
        \begin{scope}[shift={(path picture bounding box.west)}]
          \draw[rounded corners=1pt, line width=.75pt]
            (-.5*\pointer,.3*\pointer) --
            ++(\pointer,.8*\halfheight) --
            ++(.75*\bbwidth,0) --
            ++(0,-.8*\bbheight) --
            ++(-.75*\bbwidth,0) -- 
            cycle;
        \end{scope}
      \endgroup
      }
    },
    arr/.style={
      ->,
      draw=black,
      thick
    }
  }
  
  \definecolor{myblue}{HTML}{1971c2}
  \definecolor{myred}{HTML}{e03131}
  \colorlet{myorange}{orange}

  \node[line width=1.5pt,draw=myblue,rounded corners=.25em, 
        minimum width=5cm,minimum height=1.25cm] (D0) at (0,0) {};
  \node[anchor=north west,font=\strut,inner sep=2pt,myblue] at (D0.north west) {$\mathcal{D}_t$};
  
  \node (D0I0) at ($(D0) - (1.25,-.25)$) {\stickperson~\stickperson~\stickperson};
  \node[draw=myblue,below of=D0I0,yshift=1.25em,output] (D0L0) {Healthy};

  \node[right of=D0I0,xshift=1.0em] (D0I1) {\stickperson~\stickperson};
  \node[draw=myblue,below of=D0I1,yshift=1.25em,output] (D0L1) {Type~1};
  
  \node[line width=1.5pt,draw=myred,rounded corners=.25em, 
        minimum width=5cm,minimum height=1.25cm] (D1) at (0,-2) {};
  \node[anchor=north west,font=\strut,inner sep=2pt,myred] at (D1.north west) {$\mathcal{D}_t'$};
  
  \node (D1I0) at ($(D1) - (1.25,-.25)$) {\stickperson~\stickperson~\stickperson};
  \node[draw=myred,below of=D1I0,yshift=1.25em,output] (D1L0) {Healthy};

  \node[right of=D1I0,xshift=1.0em] (D1I1) {\stickperson~\stickperson};
  \node[draw=myred,below of=D1I1,yshift=1.25em,output] (D1L1) {Type~1};  
  
  \node[right of=D1I1,xshift=1.0em,myorange] (D1I2) {\stickperson};
  \node[draw=myorange,below of=D1I2,yshift=1.25em,output] (D1L2) {Type~2};  


  \node (I0) at ($(D0.east)+(1.75,0)$) {\stickperson};
  \node[font=\strut,shape=diamond,draw=myblue,line width=1.5pt,rounded corners=1pt] (C0) at ($(I0.east)+(1,0)$) {$f_t$};
  \node[draw=myblue,output] (O0a) at ($(C0.east)+(1,.3)$) {Healthy};  
  \node[draw=myblue,output] (O0b) at ($(C0.east)+(1,-.3)$) {Type~1};  
  
  \draw[arr] (I0) -- (C0);
  \draw[arr] (C0.east) -- (O0a.west);
  \draw[arr] (C0.east) -- (O0b.west);
  
  \node[rotate=90,anchor=south,inner sep=1em,text width=10em,align=center,font=\footnotesize] at ($(D0.west)!.5!(D1.west)$) {Adjacent data sets in continual learning (CL)};
  

  \node (I1) at ($(D1.east)+(1.75,0)$) {\stickperson};
  \node[font=\strut,shape=diamond,draw=myred,line width=1.5pt,rounded corners=1pt] (C1) at ($(I1.east)+(1,0)$) {$f_t'$};
  \node[draw=myred,output] (O1a) at ($(C1.east)+(1,.5)$) {Healthy};  
  \node[draw=myred,output] (O1b) at ($(C1.east)+(1,0)$) {Type~1};  
  \node[draw=myorange,output] (O1c) at ($(C1.east)+(1,-.5)$) {Type~2};  
  
  \draw[arr] (I1) -- (C1);
  \draw[arr] (C1.east) -- (O1a.west);
  \draw[arr] (C1.east) -- (O1b.west);  
  \draw[arr] (C1.east) -- (O1c.west);

  \draw[->,shorten >=5pt,shorten <=5pt,line width=1.5pt] (D0) -- node[above,font=\scriptsize\strut]{Train} (I0);
  \draw[->,shorten >=5pt,shorten <=5pt,line width=1.5pt] (D1) -- node[above,font=\scriptsize\strut]{Train} (I1);

  \node[font=\footnotesize\bf] at ($(C0)!0.5!(C1)$) {Classifier};

  \node[anchor=west,align=center,inner sep=2pt,dashed] (DP) at ($(C0)!0.5!(C1) + (3.5,0)$) {Which data \\ set was used?\\[1em]
  
  \tikz[baseline=-.25ex,solid]{\node[anchor=base,draw=myorange,output](x){Type~2};\draw[thick,black](x.north west)--(x.south east);\draw[thick,black](x.south west)--(x.north east);} $\to {\color{myblue}\mathcal{D}_t}$ \\
  \tikz[baseline=-.25ex]\node[anchor=base,draw=myorange,output,solid]{Type~2}; $\to {\color{myred}\mathcal{D}_t'}$ \\[1em]
  Not differentially \\ private!};
  
  \draw[arr,dashed,myorange] (O1c) -- ++(0,-.5) -| node[below,font=\scriptsize,xshift=-4.5em]{Output label space leakage} (DP.south);

\end{tikzpicture}}
  \end{subfigure}
  \hfill
  \begin{subfigure}[c]{0.3\linewidth}
    \raggedleft
    \includegraphics[width=0.9\linewidth]{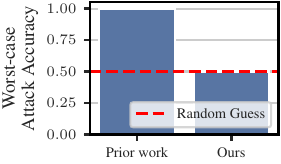}
    \label{fig:mi-game}
  \end{subfigure}
  \vspace*{-1.5em}
  \caption{Attack on output label space: Challenger 
  uses either the dataset \textcolor[HTML]{1971c2}{$\mathcal{D}_t$} or adjacent \textcolor[HTML]{e03131}{$\mathcal{D}_t'$} (one more point
  \protect\tikz[baseline=-.6ex]{\protect\node[anchor=base,orange,scale=.7]{\protect\stickperson};})
  for training a classifier \textcolor[HTML]{1971c2}{$\classifier_t$} or \textcolor[HTML]{e03131}{$\classifier_t'$}. The classifier output label space is released to an attacker and an attacker
  can guess the dataset (see right for worst-case attack accuracy).
  Observing the classifier output space can leak catastrophically 
  when one of the datasets contains one more label %
  \protect\tikz[baseline=-.4ex]{\protect\draw[rounded corners=1pt, line width=.75pt,orange,scale=.7]
            (-.5*6pt,.3*6pt) --
            ++(6pt,.4*1em) --
            ++(.75*2em,0) --
            ++(0,-.8*1em) --
            ++(-.75*2em,0) -- 
            cycle;}~.
}\label{fig:1}
\end{figure}

As illustrated in \cref{fig:1}, there are two training datasets, where the second dataset has one more sample than the first, with a new label that does not appear elsewhere. To classify the first dataset, a classifier must have ``Healthy'' and ``Type I'' in its output label space. To classify the second dataset, the second classifier must additionally have the ``Type 2'' label in its output label space. An attacker with prior knowledge about the additional sample can infer which training dataset was used with 100\% accuracy (see right of \cref{fig:1}), breaking formal DP guarantees. Refer to \cref{tab:summary-prior-work-comparison} for a list of methods affected by this side-channel.

This privacy side-channel is relevant to all DP classification problems, but particularly relevant in the context of continual learning  \citep[CL;][]{mccloskey1989catastrophic,de2021continual,wang2024comprehensive}. CL studies models that learn from data arriving as a sequential stream of tasks. 
Therefore, the output label space needs to be constantly updated as new labels appear with the sequential stream of tasks.
CL methods can run over years, which means that new labels can be unknown at the beginning of training. Furthermore, 
CL is a difficult problem setting as it faces the challenge of catastrophic forgetting, where a model loses performance on earlier tasks as it learns new ones \citep{french1999catastrophic}.
Prior work combining DP and CL only considers preventing privacy leakage through the model weights~\citep{Farquhar2019DPCL,desai2021continual,hassanpour2022differential,Lai2022LifelongDP}.
However, naively adding the new labels to the model's output label space during training can lead to a catastrophic privacy failure, even if the model weights are trained with a DP algorithm. This has been overlooked in existing work (potentially due to the focus on standard benchmarks with a fixed list of labels).

To avoid the privacy side-channel, we offer two DP methods to protect the model's output label space. Prior DP CL work uses the entire privacy budget for training weights. However, our first method (\cref{sec:dp-slearn}), inspired by the private partition selection problem \citep{DPPartitionProblemDesfontaines2022,DPPartitionProblemGopi2020,DPPartitionProblemKorolova2009}, splits the budget between selecting class labels from the sensitive dataset and training the weights. Therefore, in this case, there is a trade-off where spending more of the privacy budget on selecting the labels will leave less of the budget for training the weights. The second method that we propose decouples the class labels from the sensitive dataset by using a large data-independent prior labels set that likely covers all or most dataset labels (\cref{sec:setting-assumed-proof}). Unmatched labels can be remapped using known label relationships, or can be simply dropped.

To demonstrate the methods with challenging enough benchmarks, we propose DP adaptations of CL methods based on pre-training.
Improving model utility using pre-trained models has been studied separately in DP, and in CL (see \cref{sec:related_work}). However, to the best of our knowledge, there is no prior work that uses pre-trained models in DP CL.
Our contributions can be summarized as follows:
\begin{enumerate}[noitemsep,topsep=0pt,leftmargin=*]
    \item \textbf{Classifiers' output label space as privacy side-channel:}
        We show that naively releasing the output label space is not DP (\cref{fig:1,sec:classifier_label_space}), as it can leak privacy.
        We offer two DP alternatives that eliminate the side-channel:
        \begin{inlinelist} 
        \item spending part of the privacy budget on releasing the labels (\cref{sec:dp-slearn}), and
        \item leveraging a large public prior label set (\cref{sec:setting-assumed-proof}).
    \end{inlinelist}
    \item \textbf{DP CL experiments:} We adapt CL methods using pre-trained models to DP (\cref{sec:methods}) and evaluate the two DP alternatives for choosing the output label space in different CL settings (\cref{sec:experiments}). We also provide recommendations for choosing among the alternatives (\cref{sec:discussion}). 
\end{enumerate}

\section{Related Work}
\label{sec:related_work}

\paragraph{Differential Privacy} 
DP provides provable privacy guarantees,  but the main challenge with DP is the unavoidable trade-off between privacy and utility. 
Recently, using pre-trained models has been shown to mitigate this trade-off 
\citep{kurakin2022toward,de2022unlocking, tobaben2023Efficacy,mehta2023large, cattan2022,li2022large, yu2022differentially,tito2023DocVQA,wahdany2024beyond}, with most state-of-the-art DP models relying on the assumption that any pre-training data is public. 
However, if any private information is contained in the pre-training data, the DP privacy guarantees w.r.t. the fine-tuning data become meaningless~\citep{tramer2022considerations}, but low-risk pre-training data exist~\citep{kuznetsova2020open,Asano2021PASS}.
All operations that utilize sensitive data (directly or indirectly) must adhere to the definition of DP. 
For instance, model selection and hyperparameter tuning can leak privacy \citep{Thakurta2013PrivateModelSelection, Liu2019Hyperparameters}.
Other interesting examples are data pre-processing~\citep{Hu2024PrivatePreprocessing}, data discretisation \citep{Ganev2025PrivateDiscretization}, and even the selection of pre-trained model weights~\citep{Gu02025PrivatePublicDatasetSelection}.

\paragraph{Private Partition Problem} An example for the private partition problem~\citep{DPPartitionProblemKorolova2009, DPPartitionProblemGopi2020, DPPartitionProblemDesfontaines2022} is creating histograms under DP, as the partitioning into bins can leak privacy if it depends on the sensitive data. We adapt the optimal DP mechanism, in terms of preserving as much classes as possible, by \citet{DPPartitionProblemDesfontaines2022}, to determine the output label space.

\paragraph{Privacy Side-Channels} 
\citet{Debenedetti2024SideChannel} introduce privacy side-channels which are attacks that exploit privacy-leaking components apart from the ML model, e.g., training data filtering or input pre-processing.
Through these privacy side-channels, the DP protection of ML models can be invalidated, \citet{Debenedetti2024SideChannel} show that this can lead to near-perfect membership inference attacks.

\paragraph{Continual Learning} 
Pre-trained models in CL settings have been combined with replay~\citep{ostapenko2022continual}, prompt tuning~\citep{wang2022dualprompt,Wang22LearningtoPrompt}, prototype classifiers~\citep{Jason22ASimpleBaseline,mcdonnell2023ranpac}, and expandable Parameter-Efficient Fine-Tuning (PEFT) adapters~\citep{zhao2024safe,zhou2024expandable,zhou2024revisiting}. 
We propose DP-variants of prototype classifiers and expandable adapters with pre-trained models to enhance the utility and mitigate forgetting with DP. We experiment in both the standard class-incremental learning setting without task labels~\citep{Ven22ThreeTypes}, as well as in blurry task boundary settings~\citep{aljundi2019gradient,Koh2022IBlurry,Moon2023SIBlurry}.


\paragraph{Differentially Private Continual Learning}
Previous works combining DP with CL have leveraged episodic memories~\citep{lopez2017gradient} a.k.a. memory buffers, or DP synthetic samples:
\begin{itemize}[leftmargin=*,noitemsep,topsep=-4pt]
    \item \textbf{Episodic memories:} \citet{desai2021continual,Lai2022LifelongDP,hassanpour2022differential} use episodic memories to mitigate catastrophic forgetting. However, this introduces privacy risks.
    \item \textbf{DP synthetic samples:} \citet{Farquhar2019DPCL} train a generative model under DP, while \citet{Chen22PrivateSet} learn a small set of synthetic samples optimised towards the downstream task. However, these methods have only reported results on MNIST or CIFAR-10. 
    We do not use synthetic data, but recognize this as a potential future work due to powerful DP generation methods~\citep{lin2024APIImages}.
\end{itemize}

\citet{desai2021continual} operate in a setting similar to ours but rely on memory buffers. We include their method as a baseline in \cref{fig:desai_comparison} and demonstrate that we outperform their method without using memory buffers. 
\citet{hassanpour2022differential} use an easier CL setting where the task label is provided to the model at inference time (i.e., the task-incremental setting). Their method depends on specific architectural assumptions, and requires the model to be trained under certain constraints, which makes their approach incompatible with general pretrained models. In addition, they employ a memory buffer during training to mitigate catastrophic forgetting.
\citet{Lai2022LifelongDP} also use memory buffers and their method is restricted to a specific network architecture, preventing its application to general pretrained models.

Another limitation of prior work is the direct link of the output label space to the sensitive data which results in a privacy side-channel (see \cref{fig:1}). Our proposed methods eliminate this privacy side-channel. To the best of our knowledge, we are the first to discuss this limitation of applying DP to CL.

\begin{table}[h!]
    \caption{Comparison between prior work and our methods.}
    \label{tab:summary-prior-work-comparison}
    \centering
    \resizebox{\linewidth}{!}{%
    \begin{tabular}{@{} l c c l c c @{}}
        \toprule
        \textbf{Method}
        & \textbf{Task agnostic}
        & \makecell[c]{\textbf{Applicable to}\\\textbf{pretrained models}}
        & \makecell[c]{\textbf{State shared between tasks}\\\textbf{$\mathcal{S}_t$}}
        & \makecell[c]{\textbf{No non-DP}\\\textbf{state shared}}
        & \makecell[c]{\textbf{No label}\\\textbf{side-channel}} \\
        \midrule

        \cite{desai2021continual}
        & \cmark
        & \cmark
        & \makecell[l]{Non-DP memory buffer \\ DP weights}
        & \xmark
        & \xmark \\

        \midrule

        \cite{hassanpour2022differential}
        & \xmark
        & \xmark
        & \makecell[l]{Non-DP memory buffer \\ DP weights}
        & \xmark
        & \xmark \\

        \midrule
        
        \cite{Chen22PrivateSet}
        & \cmark
        & \cmark
        & \makecell[l]{DP memory buffer \\ DP weights}
        & \cmark
        & \xmark \\

        \midrule

        \cite{Lai2022LifelongDP}
        & \cmark
        & \xmark
        & \makecell[l]{Non-DP memory buffer \\ DP weights}
        & \xmark
        & \xmark \\
        
        \midrule
        
        PEFT Ensemble (ours)
        & \cmark
        & \cmark
        & \makecell[l]{DP weights}
        & \cmark
        & \cmark \\

        \midrule

        Cosine Classifier (ours)
        & \cmark
        & \cmark
        & \makecell[l]{DP prototypes}
        & \cmark
        & \cmark \\

        \bottomrule
    \end{tabular}%
    }
\end{table}

To make the comparison more precise, we formalise DP CL as a sequence of mechanisms that train a DP CL model.
Denote the privacy mechanism at task $t = 1, \ldots, T$ (where $T$ may be $\infty$) by
\begin{equation}\label{eq:statful-task-mechanism}
    \mathcal{M}_t : \left(\mathcal{D}_t;\mathcal{S}_{t - 1}\right) \mapsto \left(\mathcal{Z}_{t}; \mathcal{S}_{t}\right),
\end{equation}
where $\mathcal{D}_t$ is the sensitive dataset at task $t$, $\mathcal{Z}_t$ is the released output (e.g., DP model parameters), and $\mathcal{S}_t$ is a state shared throughout the tasks.
This state may include any sensitive or non-sensitive information retained by the mechanism such as memory buffers and DP model parameters.
The sequence of mechanisms $(\mathcal{M}_t)_{t=1}^{T}$ is DP as long as all released outputs $(\mathcal{Z}_t)_{t=1}^{T}$ satisfy DP and no private information contained in the states $(\mathcal{S}_t)_{t=1}^{T}$ are released.
However, in practice, retaining non-DP state over extended periods of time is problematic: such state can either unintentionally leak information, invalidating DP guarantees, or violate data minimization and storage limitation requirements imposed by data protection regulations (e.g., GDPR).
For this reason, we adopt a privacy model in which no non-DP state derived from sensitive data is retained across tasks.
\cref{tab:summary-prior-work-comparison} summarizes the main differences between previous work and our methods. 

\section{Background}
\label{sec:background}


\paragraph{Differential Privacy} DP formalises the idea of protecting individuals' sensitive data.
\begin{definition}[DP; \citealt{dwork2006calibrating, dwork2006epsilondelta}]
\label{def:DP}
A randomised algorithm $\gA$ is $(\epsilon, \delta, \simeq)$-DP for a binary relation $\simeq$ over the space of possible datasets $\mathbb{D}$, if for any two datasets $\gD, \gD' \in \mathbb{D}$ that satisfy $\simeq$, denoted $\gD \simeq \gD'$, and for any subset $S$ of possible outcomes of $\mathcal{A}$, denoted $S \subseteq \mathrm{Range}(\gA)$,
\begin{align}\label{eq:epsilon-delta-dp}
    \mathrm{Pr}\left[\gA(\gD) \in S\right] \leq \exp{(\epsilon)} \times \mathrm{Pr}\left[\gA(\gD') \in S\right] + \delta.
\end{align}
Any two datasets $\gD \simeq \gD'$ are said to be adjacent datasets and $\simeq$ is called the adjacency relation.
\end{definition}

In deep learning, the algorithm $\gA$ can be, for instance, a DP optimisation method such as DP-SGD~\citep{dp-sgd-rajkumar-2012,dp-sgd-song-2013,abadi2016deep}, which minimises the empirical loss under DP constraints. DP-SGD works by clipping per-sample gradients and adding noise to their sum in each iteration, resulting in a set of privacy-preserving model updates. Applying DP mechanisms multiple times  is referred to as composition~\citep{McSherry10ParallelComposition,DworkAdaptiveComposition2010,WhitehouseFullyAdaptiveComposition2023}. The process of quantifying the total privacy loss under composition is known as privacy accounting~\citep{koskela2020,Gopi2021}. 

\paragraph{Task-wise DP} In CL, each task $t$ is determined by a dataset $\mathcal{D}_t$. In this work, we generalise the standard sample-level add/remove adjacency relation to be applicable for CL. Formally, we introduce this as task-wise adjacency in the following definition. We discuss the limitations of prior adjacency relations and definitions for DP CL in \cref{sec:dpcl-definitions}.
\begin{definition}[Task-wise Adjacency]\label{def:task-wise-adjacency-minimal}
    Let $\simeq$ be an adjacency relation between datasets, $I \subseteq \mathbb{N}$ be a set of task indices that can be either finite or infinite. A sequence of datasets $(\mathcal{D}_t)_{t \in I}$ is said to be task-wise adjacent to another sequence of datasets $(\mathcal{D}_t')_{t \in I}$, denoted $(\mathcal{D}_t)_{t \in I} \simeq_{\text{tw}} (\mathcal{D}_t')_{t \in I}$, if there exists $t^{*} \in I$ such that $\mathcal{D}_{t^{*}} \simeq \mathcal{D}_{t^{*}}'$, and $\mathcal{D}_t = \mathcal{D}_t'$ for all $t \neq t^{*}$.
\end{definition}
Our experiments assume that an individual's data can appear in only one $\gD_t$. Task-wise adjacency means that only one task $t^{*}$ can have a different adjacent datasets $\gD_{t^{*}} \simeq \gD_{t^{*}}'$, but $t^{*}$ is unknown in advance, and this corresponds to parallel composition when we protect all the tasks. Parallel composition here means that the privacy level does not degrade by introducing more tasks. To formalize this notion of DP, we introduce the following definition.

\begin{definition}[Task-wise DP]\label{def:task-wise-dp-minimal}
    Let $\simeq$ be an adjacency relation between datasets, and let $(\mathcal{A}_t)_{t\in I}$ be any sequence of mechanisms, such that $I \subseteq \mathbb{N}$ is a set of task indices. Define $\mathcal{A}$ as the mechanism obtained by composing $(\mathcal{A}_t)_{t\in I}$, that is, for any sequence of datasets $(\mathcal{D}_t)_{t \in I}$, 
    \begin{equation}\label{composed-mechanism-minimal}
        \mathcal{A} : (\mathcal{D}_t)_{t \in I} \mapsto (\mathcal{A}_t(\mathcal{D}_t))_{t \in I}.
    \end{equation}
    The sequence of mechanisms $(\mathcal{A}_t)_{t\in I}$ is said to satisfy task-wise $(\epsilon, \delta)$-DP, if for any two task-wise adjacent sequences of datasets $(\mathcal{D}_t)_{t \in I} \simeq_{\text{tw}} (\mathcal{D}_t')_{t \in I}$, and for any $S \subseteq \mathrm{Range}(\mathcal{A})$:
    \begin{equation}
        \mathrm{Pr}\left[\mathcal{A}\left((\gD_t)_{t \in I}\right) \in S\right] \leq \exp{(\epsilon)} \times \mathrm{Pr}\left[\mathcal{A}\left((\gD_t')_{t \in I}\right) \in S\right] + \delta.
    \end{equation}
\end{definition}

To translate task-wise $(\epsilon, \delta)$-DP guarantees to $(\epsilon, \delta)$-DP guarantees, we would need the following definition.
\begin{definition}[Task assignment function]
    Any function $\mathrm{assign}: \mathcal{X} \times \mathcal{Y} \rightarrow I$ is called a task assignment function, that is, it splits any $\mathcal{D}$ into the following sets:
    \begin{equation}
        \mathcal{D}_t = \{ (\pmb{x}, y) \in \mathcal{D} : \mathrm{assign}\left((\pmb{x}, y)\right) = t \}.
    \end{equation}
    This provides a way to transform $\mathcal{D}$ into a sequence of datasets $\left( \mathcal{D}_t \right)_{t \in I}$. We will denote $\mathrm{assign}\left[\mathcal{D}\right] = \left( \mathcal{D}_t \right)_{t \in I}$,
    using square brackets, to clearly distinguish it from point-wise function evaluation.
\end{definition}
Since for any dataset $\mathcal{D}$, we can use a task assignment function to obtain a sequence of tasks $\mathrm{assign}\left[\mathcal{D}\right] = \left( \mathcal{D}_t \right)_{t \in I}$, then we can precompose it with any mechanism as in \cref{composed-mechanism-minimal} to obtain $\mathcal{A}: (\mathrm{assign}[\mathcal{D}])  \mapsto (\mathcal{A}_t(\mathcal{D}_t))_{t \in I}$. This mechanism can be generally written as $\mathcal{A}\circ \mathrm{assign}[\cdot]$. We can now introduce the following theorem.
\begin{restatable}{theorem}{taskwisedpisdp}\label{theorem:task-wise-is-dp}
    The sequence of mechanisms $(\mathcal{A}_t)_{t \in I}$ is task-wise $(\epsilon, \delta)$-DP iff $\mathcal{A} \circ \mathrm{assign}[\cdot]$ is $(\epsilon, \delta)$-DP, where $\mathcal{A}$ is defined in \cref{composed-mechanism-minimal}, for any task assignment function $\mathrm{assign}: \mathcal{X} \times \mathcal{Y} \rightarrow I$. 
\end{restatable}
\begin{restatable}{corollary}{mechanismtaskwisedpisdp}\label{cor:task-wise-dp-mechanisms}
    If a sequence of mechanisms $(\mathcal{A}_t)_{t \in I}$ is task-wise $(\epsilon, \delta)$-DP, then for each $t \in I$, $\mathcal{A}_t$ is $(\epsilon, \delta)$-DP.
\end{restatable}
The proofs of \cref{theorem:task-wise-is-dp} and \cref{cor:task-wise-dp-mechanisms} can be both found in \cref{sec:task-dp} under their restatements. Both results can be used to translate between task-wise $(\epsilon, \delta)$-DP and $(\epsilon, \delta)$-DP. For composition results, especially when an individuals' data can appear in multiple tasks, one can apply \cref{cor:task-wise-dp-mechanisms}, and then any well known composition result (e.g., advanced composition). In other words, a sequence of mechanisms $\left(\mathcal{A}_t\right)_{t \in I}$ satisfying task-wise DP is only a property of the mechanisms themselves, and it does not limit their applicability to more general cases, e.g., when more than one task can have a different adjacent datasets. A generalization of \cref{def:task-wise-adjacency-minimal}, where multiple tasks can have different adjacent datasets, can be found in \cref{def:task-wise-adjacency}. Moreover, a generalization of \cref{def:task-wise-dp-minimal} can be found in \cref{def:task-wise-dp}. The full discussion can be found in \cref{sec:task-level_composition}, and more details regarding composition under task-wise DP can be found in \cref{sec:task-level_composition}. 

\paragraph{Continual Learning} 
In the general CL setting, we aim to learn a function 
$f_{\vtheta} : \mathcal{X} \rightarrow \mathcal{Y}$, parameterized by $\vtheta$, 
from a sequence of supervised classification tasks~\citep{de2021continual}. Let the set of task indices be $I = \{1, \ldots, T\}$, each task $t \in \{1, \ldots, T\}$ (where $T$ can be $\infty$) is defined by a dataset $\gD_t = \{(\vx_{t}^{(k)}, y_{t}^{(k)})\}_{k=1}^{N_t}$, where $\vx_{t}^{(k)} \in \gX_t$ is the $k$-th input, $\gX_t$ is the set of inputs, $y_{t}^{(k)} \in \gY_t$ is the class label, $\gY_t$ is the set of labels, and $N_t \in \mathbb{N}$ is the number of samples. For the CL setting we consider, the learning algorithm has the following constraints:
\begin{inparaenum}[(i)]
    \item At task $t$, it can only access the dataset $\gD_t$ and no other datasets;
    \item it cannot store any samples between tasks (e.g., through a memory buffer);
    \item the total number of tasks $T$ and the complete label space $\mathcal{Y} = \cup_{t=1}^{T} \gY_t$ are unknown in advance. 
\end{inparaenum}
As a result, the algorithm does not directly produce a single final function $f_{\vtheta}$ after all the tasks, but instead learns an intermediate function $f_{\vtheta_t}: \gX \rightarrow \gY_{1:t}$, parametrised by $\vtheta_t$, after each task $t$, where $\gY_{1:t} = \cup_{i=1}^{t} \gY_{i}$. We call the range of $f_{\vtheta_t}$, namely $\gY_{1:t}$, the output (label) space.

In this CL setting, task labels \( t \in \{1, \ldots, T\} \) are not available at inference time. Given a test sample \((\pmb{x}, y)\), the model must therefore correctly classify \(y\) without knowing the task under which the corresponding class was originally learned. This requires the model to retain knowledge of classes learned in earlier tasks. Moreover, the same class may appear in multiple tasks. For privacy reasons, all training samples are discarded upon completion of each task (point (ii) above).

\paragraph{Blurry Tasks}
Prior work in CL~\citep{Koh2022IBlurry,Moon2023SIBlurry} differentiates between \emph{disjoint classes}, where the data of a class is only contained in one task, and \emph{blurry classes}, that are split over multiple tasks.
The blurry sample ratio controls the blurriness with a value of $0$ being equivalent to having only disjoint classes and $100$, where data of blurry classes is uniformly distributed over all tasks.

%
\section{Classifier Output Space Privacy}\label{sec:theory_sec_output}

As discussed at the end of \cref{sec:background}, our goal is to learn intermediate classifiers $\classifier_{\vtheta_t}: \gX \rightarrow \gY_{1:t}$ from a sequence of datasets $\gD_t$, $t = 1, \ldots, T$. However, releasing a classifier requires publishing both the model weights $\vtheta_t$ and the output label space $\gY_{1:t}$. Since each $\mathcal{Y}_t$ is a function of the sensitive data $\gD_t$, namely the projection function $(\vx_{t}^{(k)}, y_{t}^{(k)}) \mapsto y_{t}^{(k)}$, publishing $\mathcal{Y}_t$ leaks sensitive information as a privacy side-channel even if $\vtheta$ is DP, as illustrated in \cref{fig:1}.

To address this issue, we reframe the problem by denoting the output label space for each task $t$ as $\gO_t$. This allows us to separate the released $\gO_t$ from the sensitive labels $\gY_t$, and to explore alternative choices for $\gO_t$ that ensure the classifier remains DP. Additionally, for each task $t$, we will denote the parameters of the classifier as $\vtheta_t$ which implicitly depend on $\mathcal{O}_t$. In \cref{eq:statful-task-mechanism}, we can set $\mathcal{Z}_t = \left(\vtheta_t, \gO_t\right)$. We do not retain any private information in the states $\mathcal{S}_t$, for any $t$. Moreover, since we only retain the DP outputs of the mechanisms, namely $\mathcal{Z}_t$, the states $\mathcal{S}_t$ are redundant, and for simplicity we will only denote the mechanism in \cref{eq:statful-task-mechanism} as $\mathcal{M}_t:(\mathcal{D}_t) \mapsto \mathcal{Z}_t$. We will call this the classifier release mechanism:
\begin{equation}\label{naive-mechanism-single-release-eq}
    \taskwisemechanism_t: (\mathcal{D}_t) \mapsto (\vtheta_t,  \gO_t).
\end{equation}
The following section (\cref{sec:classifier_label_space}) explores two alternatives to the leaky $\mathcal{O}_t = \gY_{1:t}$;
\begin{inlinelist}
    \item applying a DP mechanism to obtain a DP output label space, and
    \item utilising prior knowledge.
\end{inlinelist} We note that our theory holds not only for CL but for any classification problem, i.e., when $T = 1$.

\subsection{Choosing Classifier Output Space \texorpdfstring{$\mathcal{O}_t$}{Ot}}\label{sec:classifier_label_space}

As discussed at the beginning of \cref{sec:theory_sec_output}, releasing the DP parameters $\vtheta_t$, given a particular $\mathcal{O}_t$, together with $\gO_t$ itself might still not be DP. 
One way to select $\gO_t$ is by taking the labels directly from the task's dataset $\mathcal{D}_t$, i.e. setting $\mathcal{O}_t := \gY_{1:t}$. This is the standard approach used in existing work on DP CL (see \cref{sec:related_work}). 
For notational consistency, define $\privatelabels_t = \{y : (\pmb{x},y) \in \mathcal{D}_t\} = \mathcal{Y}_t$. Generally, we can either set $\mathcal{O}_t := \privatelabels_t$ or  $\privatelabels_{1:t}$, i.e., either the set of sensitive labels in the current task, or the set of all sensitive labels up to the current task. In the following proposition we show that choosing the output label space based on the sensitive data is not DP. The full proof of \cref{proposition-labels-release} is in \cref{app:proposition-labels-release-proof}.
\begin{proposition}\label{proposition-labels-release}
    For any $t$, the classifier release mechanism \( \taskwisemechanism_t : (\mathcal{D}_t) \mapsto (\vtheta_t, \gO_t) \) is not $(\epsilon, \delta)$-DP for $0 \leq \delta < 1$ if $\mathcal{O}_t = \privatelabels_t$ or $\mathcal{O}_t = \privatelabels_{1:t}$, where $\privatelabels_t = \{y: (\pmb{x}, y) \in \mathcal{D}_t\}$.
\end{proposition}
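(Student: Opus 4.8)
The plan is to exploit the observation that in setting \settingDirectData the output label space $\gO_t$ is a \emph{deterministic} function of the task data, so releasing it is tantamount to releasing a non-constant statistic in the clear — and such a statistic can never be $(\epsilon,\delta)$-DP with $\delta<1$. Concretely, since the negation of \cref{def:DP} only requires \emph{some} neighbour pair $\gD_t\simeq\gD_t'$ and \emph{some} measurable outcome set $S\subseteq\mathrm{Range}(\taskwisemechanism_t)$ violating \cref{eq:epsilon-delta-dp}, I would just construct one such witness, and the same witness will work simultaneously for both candidate choices of $\gO_t$.

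First I would build the neighbours. Fix $t$. In the union case $\gO_t=\bigcup_{k=1}^{t}\privatelabels_k$, first fix arbitrary preceding datasets $\gD_1,\dots,\gD_{t-1}$ and let $\gY^-=\bigcup_{k=1}^{t-1}\privatelabels_k$; in the case $\gO_t=\privatelabels_t$ take $\gY^-=\emptyset$. Choose any label $y^\star\notin\gY^-$, any feature $\vx^\star$, and any $\gD_t$ with $y^\star\notin\privatelabels_t$; set $\gD_t'=\gD_t\cup\{(\vx^\star,y^\star)\}$, which is add/remove-adjacent to $\gD_t$. Then $y^\star\notin\gO_t$ while $y^\star\in\gO_t'$, in both cases for $\gO$.

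Next I would pick the distinguishing event $S=\{(\vtheta,\gO)\in\mathrm{Range}(\taskwisemechanism_t):y^\star\in\gO\}$, a measurable cylinder in the label-space coordinate. Because $\taskwisemechanism_t$ outputs $\privatelabels_t$ (resp.\ $\bigcup_{k\le t}\privatelabels_k$) deterministically, $\Pr[\taskwisemechanism_t(\gD_t)\in S]=0$ and $\Pr[\taskwisemechanism_t(\gD_t')\in S]=1$, regardless of how $\vtheta_t$ is produced. Feeding the ordered pair $(\gD_t',\gD_t)$ into \cref{eq:epsilon-delta-dp} would force $1\le\exp(\epsilon)\cdot 0+\delta=\delta$, contradicting $\delta<1$; hence $\taskwisemechanism_t$ fails $(\epsilon,\delta)$-DP for every $\epsilon>0$ and every $\delta\in[0,1)$.

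The hard part is not any calculation — this is essentially the folklore fact that a non-trivial deterministic function of the data is never private. What needs care is (i) legitimising the construction in the union case, where I must argue the realised label sets $\privatelabels_k$ can be arranged so that some label is genuinely new at task $t$ — the CL constraint $\gY_t\supseteq\bigcup_{i<t}\gY_i$ only forces the nominal label space to grow, not the labels actually present in $\gD_t$; (ii) invoking the symmetry of add/remove adjacency so that the ``bad'' ordering $(\gD_t',\gD_t)$ is an admissible neighbour pair; and (iii) making explicit that the conclusion is independent of the weight-release mechanism, so that the leakage is intrinsic to setting $\gO_t=\privatelabels_t$ and cannot be patched by privatising $\vtheta_t$ better.
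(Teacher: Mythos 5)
Your proposal is correct and follows essentially the same route as the paper's proof: construct an adjacent pair $\mathcal{D}_t' = \mathcal{D}_t \cup \{(\vx^\star, y^\star)\}$ with $y^\star$ absent from $\gO_t$ but present in $\gO_t'$, pick an event that the deterministic label-set release hits with probability $1$ on one side and $0$ on the other, and contradict $\delta<1$. The only cosmetic difference is that you use a cylinder event on the joint output $(\vtheta_t,\gO_t)$ directly, whereas the paper first projects onto the label-set coordinate via post-processing and then uses the singleton $S=\{\gO_t'\}$; your added care about the union case and the adjacency ordering is sound but not a different argument.
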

\begin{proofsketch} 
    Consider the following counterexample. Let $\mathcal{D}_t \simeq \mathcal{D}_t'$, where $\mathcal{D}_t'$ has one more example than $\mathcal{D}_t$, namely $(\pmb{x}^{*}, y^{*})$. Also, assume that $y^{*} \in \mathcal{O}_t'$ but $y^{*} \notin \mathcal{O}_t$. This means that the classifier $\taskwisemechanism_t(\mathcal{D}_t')$ has one more label in its range than $\taskwisemechanism_t(\mathcal{D}_t)$. This makes the sets of possible outcomes of $\taskwisemechanism_t(\mathcal{D}_t)$ and $\taskwisemechanism_t(\mathcal{D}_t')$ disjoint, which is not DP.
\end{proofsketch}

We propose two alternatives for setting $\gO_t$. The first is to learn the labels by a DP mechanism, denoted as $\learnedlabels_t$.  The second possibility is to construct a set of labels, denoted as $\assumedlabels_t$, which reflects our prior knowledge about the task and does not depend on $\mathcal{D}_t$. This knowledge can change and get updated in the following tasks.  Thus, we have three possible settings for $\gO_t$:
\begin{compactenum}\label{settings}
    \item[\textbf{\settingLearned}:]\label{settingLearned} $\gO_t := \learnedlabels_t$, i.e., labels are learned from the private dataset $\mathcal{D}_t$ using a separate DP mechanism.
    \item[\textbf{\settingAssumed}:]\label{settingAssumedProofRef} $\gO_t := \assumedlabels_t$, i.e., a prior label set is provided for each task $t$. The labels in this case are public, and $\mathcal{O}_t$ might contain all or most of the true labels in $\privatelabels_t$. 
    Unmatched labels in $\gY_t$ can be remapped to one of the labels in $\assumedlabels_t$ or dropped without changing the DP guarantees w.r.t. $\mathcal{D}_t$.
\end{compactenum}
In \cref{sec:dp-slearn,sec:setting-assumed-proof}, we will discuss \settingLearned and  \settingAssumed in more detail, and argue that they preserve the classifier's DP guarantees. For further details regarding DP CL mechanisms, see \cref{app:other-details-dpcl}.

\subsection{DP Release of Output Label Space Using A Separate Mechanism (\texorpdfstring{\hyperref[settingLearned]{\settingLearned}}{Slearned})}\label{sec:dp-slearn}
\begin{figure}
\includegraphics[width=0.5\linewidth]{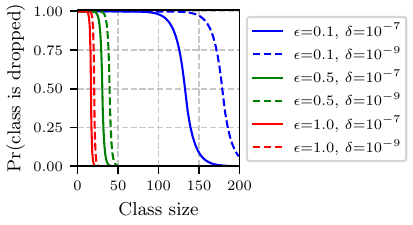}
    \centering
    \caption{Probability that a new label is not added to the output label space $\learnedlabels_t$. Even with $\epsilon=1.0$ and $\delta=10^{-7}$, classes having fewer than $13$ samples are discarded with at least 99\% probability, and thus cannot be learned.}
    \label{fig:class-lost-lower-bound}
\end{figure}
We introduce a DP mechanism $\mathcal{L}$ that takes any $\mathcal{D}_t$ as input and outputs $\learnedlabels_t$. Since, in each task $t$, we can group the inputs $\pmb{x}$ into classes according to their label $y$, denoted as $\mathcal{C}_t(y) = \{ \pmb{x}: (\pmb{x}, y) \in \mathcal{D}_t \}$, then the problem of releasing the labels is similar to the private partition selection problem \citep{DPPartitionProblemDesfontaines2022,DPPartitionProblemGopi2020,DPPartitionProblemKorolova2009}. In a classification task, each input $\pmb{x}$ is associated with a unique label $y$, and thus every individual's data $\pmb{x}$ contributes to exactly one partition $\mathcal{C}_t(y)$.
Under this setup, \citet{DPPartitionProblemDesfontaines2022} proposed an optimal DP mechanism for selecting partitions, which in our case corresponds to the set of labels. The optimality is in terms of preserving as much classes as possible, especially small classes.
For each $y \in \privatelabels_t$, let $Z_{y} \sim k{\text-}\mathrm{TSGD}(p)$ be i.i.d. random variables distributed according to the $k$-truncated symmetric geometric distribution \citep{DPPartitionProblemDesfontaines2022}. The optimal DP label release mechanism can be defined through a noisy thresholding operation
\begin{equation}\label{eq:optimal-dp-mechanism}
    \learnedlabels_t := \mathcal{L}(\mathcal{D}_t) := \{ y \in \privatelabels_t : \left\lvert \mathcal{C}_t(y) \right\rvert + Z_y > k \}.
\end{equation}
According to Theorem 6 of \citet{DPPartitionProblemDesfontaines2022}, if we set $p = 1 - \exp(-\epsilon)$ and $k = \left\lceil \frac{1}{\epsilon} \ln \left( \frac{\exp(\epsilon) + 2\delta - 1}{(\exp(\epsilon) + 1)\delta} \right) \right\rceil$, then $\mathcal{L}$ is $(\epsilon, \delta)$-DP. In \cref{app:label-release-mechanism-delta-bound-proof}, we provide an optimal lower bound on the probability of dropping a class given its size for any DP mechanism, providing further justification on the optimality of this mechanism. We also provide a general method for constructing DP mechanisms that protect the set of labels (see \cref{def:label-release-mechanism}, \cref{thm:label-release-mechanism-delta-bound}). A drawback of this DP mechanism, and any mechanism that provides the labels, is the non-zero probability of dropping classes, which increases as class size or privacy budget decrease, as illustrated in \cref{fig:class-lost-lower-bound}. Another drawback is that the privacy budget must be split with the DP  training mechanism, further reducing utility (see \cref{sec:exp_blurry}).


\subsection{Data-Independent Output Label Space Using Prior Knowledge (\texorpdfstring{\hyperref[settingAssumedProofRef]{\settingAssumed}}{Sprior})}
\label{sec:setting-assumed-proof}

Choosing the output label space based on data-independent prior knowledge is DP, since it does not depend on the sensitive dataset. The prior labels can include all or most of the true labels in the sensitive training set. Let $\mathcal{O}_t = \assumedlabels_t$ (the set of public labels in task $t$). 
Since $\mathcal{O}_t$ 
might not contain all labels in $\privatelabels_t$, we must re-write \hyperref[naive-mechanism-single-release-eq]{Eq. (\ref{naive-mechanism-single-release-eq})} to allow remapping the labels not in $\mathcal{O}_t$. Let $\mathcal{R}^{\text{label}}_t:\privatelabels_t \rightarrow \mathcal{O}_t \cup \{ \mathrm{drop} \}$ be any function mapping a label $y \in \privatelabels_t$ to a public label in $\mathcal{O}_t$ or to the special symbol $\mathrm{drop}$, used to discard it. Now, we define the task remapping function $\mathcal{R}^{\text{task}}_t$:
\begin{equation}\label{eq:remapped-dataset}
    \mathcal{R}^{\text{task}}_t(\mathcal{D}_t) = \{ (\vx, \mathcal{R}^{\text{label}}_t(y)) : (\vx, y) \in \mathcal{D}_t \text{ and } \mathcal{R}^{\text{label}}_t(y) \neq \mathrm{drop} \},
\end{equation} 
i.e., we transform $(\vx, y)$ in $\mathcal{D}_t$ to $(\vx, \mathcal{R}^{\text{label}}_t(y))$, or drop it if $\mathcal{R}^{\text{label}}_t(y) = \mathrm{drop}$. The function $\mathcal{R}^{\text{label}}_t$ may use label relationships (e.g., a hierarchy) to map mismatched labels, drop them, or map them to a dummy label (e.g., $\mathrm{unknown}$). We show that these operations do not interfere with DP in the following:
\begin{proposition}\label{prop:assumed-labels-is-dp}
    For any $t$, let $\mathcal{R}^{\text{label}}_t:\privatelabels_t \rightarrow \assumedlabels_t \cup \{ \mathrm{drop} \}$ be any function. The classifier release mechanism 
    \(
        \taskwisemechanism_t: (\mathcal{R}^{\text{task}}_t(\mathcal{D}_t); \assumedlabels_{t}) \mapsto (\vtheta_t, \assumedlabels_t)
    \)
    is $(\epsilon, \delta)$-DP w.r.t. $\mathcal{D}_t$, if $\pmb{\theta}_t$ is obtained by an $(\epsilon, \delta)$-DP mechanism from the remapped data $\mathcal{R}^{\text{task}}_t(\mathcal{D}_t$).
\end{proposition}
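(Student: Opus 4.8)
The plan is to decompose $\taskwisemechanism_t$ into three stages and invoke the standard closure properties of $(\epsilon,\delta)$-DP. Write $\gA$ for the given $(\epsilon,\delta)$-DP mechanism that produces the weights $\vtheta_t$ from a dataset. The composed map whose privacy I must control is $\mathcal{D}_t \mapsto \bigl(\gA(\mathcal{R}^{\text{task}}_t(\mathcal{D}_t)),\,\assumedlabels_t\bigr)$: first the record-wise relabel/drop operation $\mathcal{R}^{\text{task}}_t$ of \cref{eq:remapped-dataset}, then $\gA$, then appending the fixed public set $\assumedlabels_t$. The middle stage is $(\epsilon,\delta)$-DP by hypothesis; appending the constant $\assumedlabels_t$ is post-processing and cannot degrade the guarantee; so the whole claim reduces to showing that the first stage is a \emph{data-independent} preprocessing that maps add/remove-neighbouring datasets to datasets that are again neighbouring or identical.

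For that step I would first note that, although $\mathcal{R}^{\text{label}}_t$ is written with domain $\privatelabels_t$, it encodes prior knowledge about task $t$ rather than the realized sample, so without loss of generality I may treat it as a fixed function on the whole label universe $\gY$ (extending it arbitrarily but data-independently off $\privatelabels_t$); then $\mathcal{R}^{\text{task}}_t$ acts on each record $(\vx,y)$ independently of the other records and of which dataset it belongs to. Now take any neighbouring pair as in \cref{def:DP}, say $\mathcal{D}_t' = \mathcal{D}_t \uplus \{(\vx^{*},y^{*})\}$ as multisets, and split into two cases: if $\mathcal{R}^{\text{label}}_t(y^{*}) = \mathrm{drop}$ the extra record is discarded and $\mathcal{R}^{\text{task}}_t(\mathcal{D}_t') = \mathcal{R}^{\text{task}}_t(\mathcal{D}_t)$ exactly; otherwise $\mathcal{R}^{\text{task}}_t(\mathcal{D}_t') = \mathcal{R}^{\text{task}}_t(\mathcal{D}_t) \uplus \{(\vx^{*},\mathcal{R}^{\text{label}}_t(y^{*}))\}$, again a single-record difference. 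Reasoning with multisets is essential here, so that distinct original labels collapsing to the same public label in $\assumedlabels_t$ does not corrupt the count.

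To finish I would fix any neighbouring $\mathcal{D}_t \simeq \mathcal{D}_t'$ and any outcome set $S \subseteq \mathrm{Range}(\taskwisemechanism_t)$, set $S_{\assumedlabels_t} = \{\vtheta : (\vtheta,\assumedlabels_t)\in S\}$, and use that the second output coordinate is the constant $\assumedlabels_t$:
\[
\mathrm{Pr}[\taskwisemechanism_t(\mathcal{D}_t)\in S] \;=\; \mathrm{Pr}\!\left[\gA(\mathcal{R}^{\text{task}}_t(\mathcal{D}_t)) \in S_{\assumedlabels_t}\right].
\]
If the two remapped datasets coincide, the right-hand side is unchanged under $\mathcal{D}_t \leftrightarrow \mathcal{D}_t'$ and \cref{eq:epsilon-delta-dp} holds trivially; otherwise they are neighbours, and the $(\epsilon,\delta)$-DP property of $\gA$ gives $\mathrm{Pr}[\gA(\mathcal{R}^{\text{task}}_t(\mathcal{D}_t))\in S_{\assumedlabels_t}] \le e^{\epsilon}\,\mathrm{Pr}[\gA(\mathcal{R}^{\text{task}}_t(\mathcal{D}_t'))\in S_{\assumedlabels_t}] + \delta = e^{\epsilon}\,\mathrm{Pr}[\taskwisemechanism_t(\mathcal{D}_t')\in S] + \delta$; swapping the roles of the two datasets yields the matching bound, which is exactly $(\epsilon,\delta)$-DP w.r.t.\ $\mathcal{D}_t$.

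The one place that needs care — everything else being the textbook fact that DP is closed under data-independent pre- and post-processing — is the adjacency-preservation claim for $\mathcal{R}^{\text{task}}_t$: in particular observing that the $\mathrm{drop}$ branch produces literally identical datasets (not some spurious fractional difference) and that the whole argument must be run with multisets. I would also add one sentence noting that \settingAllAssumed is covered verbatim, since there $\assumedlabels_t = \assumedlabels$ is just a particular (still data-independent) choice of the public label set, and that the same reasoning shows the remapping and the use of label relationships ``does not change the DP guarantees w.r.t.\ $\mathcal{D}_t$'' as claimed in \cref{sec:setting-assumed-proof}.
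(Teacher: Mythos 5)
Your proposal is correct and follows essentially the same route as the paper's proof: a case split on whether the extra record is dropped (remapped datasets identical) or remapped (remapped datasets still adjacent), followed by invoking the $(\epsilon,\delta)$-DP guarantee of the weight-producing mechanism on the remapped data and treating the constant public label set $\assumedlabels_t$ as post-processing. Your added care about extending $\mathcal{R}^{\text{label}}_t$ data-independently beyond $\privatelabels_t$ and about reasoning with multisets tightens two points the paper handles only implicitly (it instead posits an associated function $\mathcal{R}^{\text{label}'}_t$ on the enlarged label set that agrees with $\mathcal{R}^{\text{label}}_t$ off the new point), but the substance of the argument is the same.
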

\begin{proofsketch}
    We show that the mechanism $\mathcal{M}_t$ is $(\epsilon, \delta)$-DP by analysing the cases for the remapped adjacent dataset and by using the original $(\epsilon, \delta)$-DP guarantees of the mechanism that provides the DP weights $\vtheta_t$. The full proof can be found in \cref{app:assumed-labels-is-dp-proof}.
\end{proofsketch}

The utility impact of label remapping is inherently problem-dependent. For example, in the ICD-11~\citep{icd11} medical coding, merging fine-grained diagnostic categories into broader parent classes leads to a loss in fine-grained classification performance. At the same time, such remapping may improve generalization, since higher-level labels are easier to learn and have more training examples per class.


\section{DP CL Methods using Pre-Trained Models}
\label{sec:methods}
We adapt two families of CL methods utilising pre-trained models to DP CL. These are prototype classifiers~\citep{Jason22ASimpleBaseline,mcdonnell2023ranpac} in \cref{sec:cosine_classifier}, and expandable PEFT adapters~\citep{zhao2024safe,zhou2024expandable,zhou2024revisiting} in \cref{sec:peft_ensemble}. Pre-trained models are known to be able to boost utility in both DP and CL communities and are thus a fitting choice for studying the combination of DP and CL but have not been proposed for the combination of both.

\subsection{Cosine Similarity Classifier}\label{sec:cosine_classifier}
We use a pre-trained model $\pretrainedmodel: \mathcal{X} \rightarrow \mathbb{R}^{K}$ as a frozen feature extractor without additional training during CL~\citep{Jason22ASimpleBaseline} to map inputs $\vx$ to feature vectors $\vv = \pretrainedmodel(\vx) \in \sR^{K}$. 
The idea is to accumulate class-specific sums of these vectors under DP, and then classify points according to their cosine similarity with the class sums (see \cref{alg:cosine}).
\begin{algorithm}[H]
\caption{Cosine Classifier}\label{alg:cosine}
\begin{algorithmic}[1]
    \REQUIRE Number of tasks $T$, per-task DP noise level $\sigma$
    \FOR{$t \in [T]$}
        \STATE Get task dataset $\gD_{t}$ and set of labels $\gO_t$
    \STATE \textbf{// Compute DP sum for each $o \in \gO_t$ with class task data $\gD_{t,o}$ and add to cumulative sum}:
    \FOR{$o \in \gO_t$}
        \STATE $\vs_{o} \leftarrow \vs_{o} + \left( \sum_{\vx \in \gD_{t,o}} \frac{\pretrainedmodel(\vx)}{\|\pretrainedmodel(\vx)) \|_2}\right) + \mathcal{N}(0, \sigma^2 I)$ 
    \ENDFOR
    \ENDFOR
\end{algorithmic}
\textbf{Output:}$\{\vs_{1},  \dots, \vs_{|\cup_{t=1}^T \gO_t|}\}$ (set of
cumulative DP sums)
\end{algorithm}
We denote $\gD_{t,o}$ as all samples from class $o$ at task $t$. At each task $t=1,\ldots,T$, we accumulate a per-class sum of features (normalized to bound the sensitivity of each summand) with the Gaussian mechanism~\citep{Balle18Gaussian}.
This will result in a vector 
$\vs_{t,o} \in \mathbb R^{K}$ for each of the classes $o \in \gO_t$, and writing $\vs_{0,o} = \mathbf{0}_K$ for any $o$, implies that: 
\begin{align}\label{eq:sum_of_features}
    \vs_{t,o} {=}
    \begin{cases}
       \vs_{t-1,o} {+} \left(\sum\limits_{\vx \in \gD_{t,o}} \frac{\pretrainedmodel(\vx)}{\|\pretrainedmodel(\vx)\|_2 } \right) {+} \pmb{z}_t & \text{if $o \in \gO_t$} \\
       \vs_{t-1,o}
      & \text{if $o \not\in \gO_t$} 
    \end{cases}
\end{align}
where $\pmb{z}_t \sim \gN(\pmb{0}, \sigma^2 \mathbf{I})$ is Gaussian noise with scale $\sigma$ corresponding to the desired $(\epsilon,\delta$)-DP privacy budget, and $\gO_t$ is the set of classes for task $t$. 
We compute the sum-of-features rather than the mean-of-features~\citep{Jason22ASimpleBaseline,rebuffi2017icarl} to avoid the need to release the number of examples per class under DP, which would require adding more noise. 

To predict the label of a test sample $\vx^{*}$ after training up to some time step $t \leq T$, we assign it the class label 
that maximizes the cosine similarity of $\vv^{*}$ with the corresponding per-class feature sum:
\begin{equation}
\label{eq:cosine_similarity}
    \hat{y}^{*} = \textstyle\argmax_{o \in \cup_{k=1}^t \gO_k} \mathrm{CosineSimilarity}(\vv^{*}, \vs_{t,o}) .
\end{equation}
We only need to store the per-class sums and the pre-trained model, thus memory requirements would be $K|\gO_t|$ (cumulative sum) + $\mathrm{dim}(\theta)$ (pre-trained model weights). The memory requirements scale as $O(|\cup_{t=1}^T \gO_t|)$, growing with the size of the output label space.

\subsection{Parameter-Efficient Fine-Tuning (PEFT) Ensemble}\label{sec:peft_ensemble}
We construct an ensemble of prediction models by fine-tuning task-specific models $\classifier_t: \mathcal{X} \rightarrow \mathcal{O}_t$ on the data set $\gD_t$, $t=1,\dots,T$ using DP-SGD (see ~\cref{alg:peft-ensemble}). 
\begin{algorithm}[H]
    \caption{{PEFT Ensemble}}\label{alg:peft-ensemble}
    \begin{algorithmic}[1]
        \REQUIRE Number of tasks $T$, DP-SGD hypers $\xi$
        \FOR{$t \in [T]$}
            \STATE Get task dataset $\gD_{t}$ and set of labels $\gO_t$
            \STATE \textbf{// Initialise model $\classifier_t$ (predicting $\gO_t$)}:
            \STATE $\classifier_t \leftarrow \text{init($\pretrainedmodel$)}$
            \STATE \textbf{// Train model $\classifier_t$ on current task data $\gD_t$}
            \STATE $\classifier_t \leftarrow \text{DP-SGD}(\classifier_t; \gD_t, \xi)$
        \ENDFOR
    \end{algorithmic}
    \textbf{Output:} $\{\classifier_1 \dots \classifier_T\}$ (a set of $T$ DP models)
\end{algorithm}
To avoid having to store $T$ copies of the full model, we can fine-tune either only the classifier head, or, more generally, use Parameter-Efficient Fine-Tuning~(PEFT, \citealt{houlsby2019parameter}) with adaptation methods such as LoRA~\citep{hu2021lora}. 
In this case, we need to store only the task-specific adapter weights and the final classification layers, as well as a single copy of the pre-trained model. Thus the memory requirements would be 
$T(|\gO_t|K+|\gO_t|)$ (last layer) +  
$T$ $\mathrm{dim}(\theta_{\text{PEFT}})$ (adapter weights)
+ $\mathrm{dim}(\theta)$ weights pre-trained model. The memory requirements and compute therefore scale as $O((\max_t | \gO_t| + \mathrm{dim}(\theta_{\text{PEFT}}))T)$.
Throughout the paper we employ parameter-efficient FiLM~\citep{perez2018film} adapters, as this approach has been found effective in prior works on transfer learning, including with DP \citep{shysheya2022fit,tobaben2023Efficacy}. 

Concretely, considering fine-tuning the last layer only, denote the feature vector of the pre-trained model by $\vv = \pretrainedmodel(\vx) \in \sR^{K}$. 
Then the full task-specific model is $\classifier_t (\vx) = g_{\vphi_t}(\vv)$, where 
$g_{\vphi_t}: \mathbb{R}^K\rightarrow \gO_t$ is the output head with trainable parameters $\vphi_t$. With FiLM we additionally fine-tune a subset of 
the backbone normalisation layers' parameters of the pre-trained model to shift and scale the activations throughout the backbone. For example, for the model considered in the experiments the FiLM parameters are $0.04\%$ of the total number of pre-trained model parameters.

At test time, say at $t \in [T]$, we predict the label of a test sample $\vx^{*}$ by assigning the class label with the largest logit over all the tasks and all the classes seen so far:
\begin{equation}\label{eq:argmax-aggregation-rule}
	\hat{y}^{*} = \underset{o \in \cup_{k=1}^t \gO_k, l \in \{1,\dots,t\}}{\textstyle{\argmax}}
     \classifier_{l}(\vx^{*})_o,
\end{equation}
where $\classifier_{l}(\vx^{*})_o$ denotes the logit corresponding to label $o$ for the $l$-th model in the ensemble. In \cref{app:aggregation_rules} we consider alternatives to $\argmax$ including the aggregation rule by \citet{zhao2024safe}.


\section{Experiments}

\label{sec:experiments}
We evaluate how our proposed solutions to update labels in CL perform using our proposed methods using pre-trained models under DP. We first focus on an idealistic CL setting where all data of a class is in one task only (\cref{sec:exp_idealistic}) and then move on to a more realistic setting where the classes are not distributed uniformly (\cref{sec:exp_blurry}). In \cref{sec:exp_add}, we benchmark our methods on standard CL settings.

In all experiments, we utilise a ViT-Base-16~\citep[ViT-B;][]{dosovitskiy2020image} network pre-trained on the ImageNet-21K~\citep{ILSVRC15} dataset. 
We assume that the pre-training data is public and that the task datasets $\gD_t$ are sensitive and need to be protected with DP. 
All experiments are in the class-incremental learning setting where no task labels are available at test time~\citep{Ven22ThreeTypes}. 
We run five repeats with different data splits, DP noise and hyperparameter optimization and plot the second worst, median and second best seed.
See \cref{app:experimental_details} for full details.


\paragraph{Datasets} We experiment with the following benchmarks for CL: 
Split-CIFAR-100 which is CIFAR-100~\citep{krizhevsky2009learning} split into 10 tasks with 10 classes/task and
Split-ImageNet-R~\citep{wang2022dualprompt} which is ImageNet-R~\citep{hendrycks2021many} split into 10 tasks with 20 classes/task. We refer to the non-CL versions of the datasets as base datasets.

\paragraph{Task-wise DP}
We apply task-wise DP with add/remove base adjacency (see \cref{sec:background}), by assuming that each data point appears in only one task, then the final privacy guarantees are obtained through parallel composition. Therefore, if a mechanism is $(\epsilon, \delta)$-DP, then the composition is also $(\epsilon, \delta)$-DP.
%

\paragraph{Metrics} 
We report accuracy and forgetting from \citet{Chaudhry2018Riemannian} for evaluation like prior CL work~\citep{mirzadeh2021linear,yoon2022online}. The average accuracy measures the test set accuracy across all seen tasks, where Final Acc. denotes the average accuracy of all $T$ tasks for the final model. Forgetting is given by the difference between the highest accuracy of a task and its accuracy at the current task. See \cref{app:experimental_details} for formal definitions.

\paragraph{Label Methods}
We use the following setup for the label methods proposed in \cref{sec:theory_sec_output}:
\begin{itemize}[leftmargin=*,noitemsep,topsep=-4pt]
    \item \textbf{Release Labels} (\settingLearned in \cref{sec:dp-slearn}): At every task $t$, we only update the DP CL methods with the labels that have been released at that task $t$. To determine the optimal split in privacy budget between the label release mechanism and the actual DP training, we run a grid search that allocates between $0.01$ and $1.0$ for releasing the labels and the rest for the DP training. 
    \item \textbf{Public Labels} (\settingAssumed in \cref{sec:setting-assumed-proof}): We artificially enlarge the label space updated at every task to simulate a large but imprecise prior public label space. We experiment with two sizes of enlarged label space that are $10\times$ and $1000 \times$ the number of labels  of the base dataset and contain all labels of the full base dataset. We do not experiment with a label space that changes between tasks $t$, but theoretically that is possible as long as the changes are decoupled from changes in the sensitive dataset.
\end{itemize}

\paragraph{Baselines} We consider two baselines (\settingDirectData in \cref{sec:classifier_label_space}) to compare our label methods against:
\begin{itemize}[leftmargin=*,noitemsep,topsep=-4pt]
    \item \textbf{Label Oracle:} At every task $t$, we only update labels contained in the task dataset $\gD_t$.
    \item \textbf{Base Dataset:} We update all labels of the full base dataset at every task $t$.
\end{itemize}


\subsection{Comparison of Methods Using Disjoint Split}\label{sec:exp_idealistic}

We start with an idealistic baseline where all the data of a class is only contained in the dataset $\gD_t$ of one task $t$. Prior work~\citep{Koh2022IBlurry,Moon2023SIBlurry} describes this setting as disjoint split as any given class will only appear in one task $t$ and never re-appear. See tabular results in \cref{app:tabularresults}.

\paragraph{Comparison with \citet{desai2021continual}} 
\Cref{fig:desai_comparison} shows a comparison with an enhanced implementation of the method of \citet{desai2021continual} (see \cref{sec:desai2021-baseline} for further details) to fine-tune the same ViT-B model with PEFT. In this benchmark, all models share the same output label space at every task~$t$, containing all classes from the base dataset. For each seed, we randomly permute the class order so that no class is tied to a fixed task. Our main finding is that both of our methods (Cosine Classifier and PEFT Ensemble) consistently outperform the method of \citet{desai2021continual} at $\epsilon = 1$ and $\epsilon = 8$, despite not using any memory buffers (see \cref{sec:background}).
\begin{figure}[h!]
    \centering
    \includegraphics[width=\linewidth]{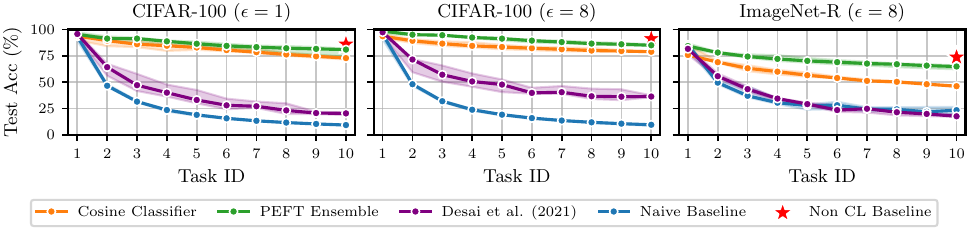}
    
    \caption{Comparison with \citet{desai2021continual}: our methods (Cosine Classifier and PEFT Ensemble) clearly outperform the enhanced implementation of \citet{desai2021continual} without needing memory buffers in the Base Dataset setting. We show the median test accuracy per task on Split-CIFAR-100 and Split-ImageNet-R. The error bars are the min/max accuracies obtained over five repeats with different class ordering, and with $\delta = 10^{-5}$.}
    \label{fig:desai_comparison}
\end{figure}

\paragraph{Comparison of Label Methods} In \cref{fig:compare_approaches}, we provide a first overview of the different methods. The main finding is that in this disjoint split, DP releasing labels can be superior (for Cosine Classifier under some settings) or has similar performance than using public labels. Releasing labels performs better because then labels in a model only get updated when the class actually occurs in the task $t$ (i.e. in one task) and not in each of the 10 tasks as with the public labels. Only a very small percentage of the privacy budget ($< 10\%$) is required to release the labels, because of the large number of examples per class per task. The reduced privacy budget remaining for the actual DP training has a very small effect on the utility of the CL methods. In \cref{sec:exp_blurry} we will show the impact of a less optimal split. 
\begin{figure}[h!]
    \centering
    \includegraphics[width=\linewidth]{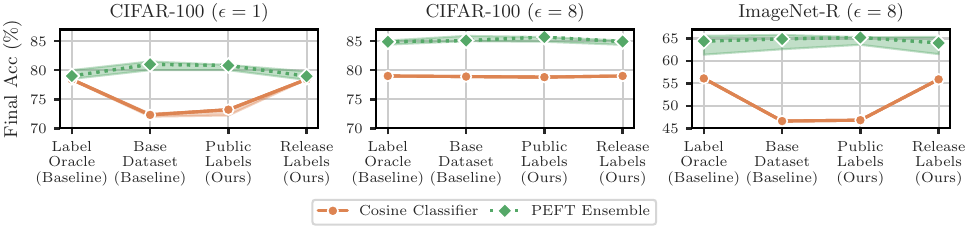}
    \caption{Comparison of Label Methods: Release Labels is superior or comparable to Public Labels ($1000 \times$ size) in the disjoint split. The PEFT Ensemble that uses DP-SGD is only slightly affected by the additional noise due to spending privacy budget on learning the labels or the larger label space. Results are the median over five repeats at $\delta=10^{-5}$ with the errorbars showing 2nd/4th best seed.}
    \label{fig:compare_approaches}
\end{figure}

\paragraph{Details on Release Labels} An obvious drawback of releasing the labels is determining the split in privacy budget between learning the labels and the actual DP training. In \cref{fig:explain_learning}, we visualize the trade-off. Allocating a too small DP budget on releasing the labels (left of red dotted line) results in suboptimal performance as not all data is used for learning, while spending too much DP budget (right of red dotted line) increases the DP noise during training leading eventually to dropping test accuracy.
\begin{figure}[h!]
    \centering
    \includegraphics[width=\linewidth]{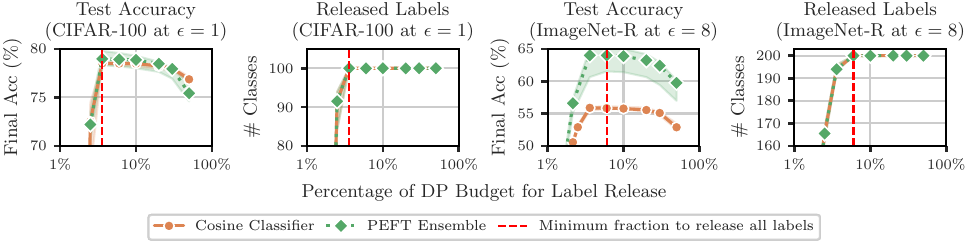}
    \caption{Details on Release Labels: The fraction of DP for label release needs to be tuned for achieving optimal test accuracy (at the red dotted line). Allocating too little or too much DP budget results in suboptimal performance as either not all labels are learned or too much noise is added during training. Lines are median over five repeats at $\delta=10^{-5}$ with the error bars showing 2nd/4th best seed.}
    \label{fig:explain_learning}
\end{figure}

\textbf{Details on Public Labels} In \cref{fig:explain_prior}, we compare different sizes of prior label space and their impact on the utility. The drop of utility with stronger privacy (Cosine Classifier with $\epsilon=1$)  originates from more frequently updating occurring labels and not from unused additional labels (see difference between the baselines and no drop between base dataset and $10-1000 \times$ more labels).
\begin{figure}[h!]
    \centering
    \includegraphics[width=\linewidth]{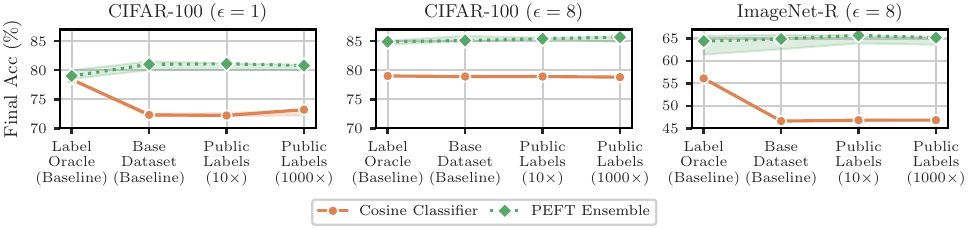}
    \caption{Details on Public Labels: Unused additional labels do not further degrade utility, but the cost from the public label method comes from more frequently updating labels (see difference between the baselines and no drop between base dataset  and $10 \times$ more labels). Results are the median over five repeats at $\delta=10^{-5}$ with the error bars showing 2nd/4th best seed.}
    \label{fig:explain_prior}
\end{figure}


\subsection{Impacts on Non-uniformly Distributed Labels (Blurry Tasks)}\label{sec:exp_blurry}
In \cref{sec:exp_idealistic} each class was only contained in one task, but in many real life examples this is not the case. We consider i-Blurry~\citep{Koh2022IBlurry} with 50 disjoint classes and different blurry sample ratios (see right plot of \cref{fig:exp_blurry_comp}).
\begin{figure}[h!]
    \centering
    \includegraphics[width=\linewidth]{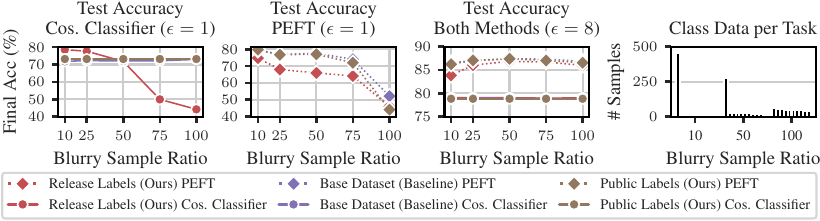}
    \caption{Impact of Blurriness: Increased blurriness degrades the utility at $\epsilon=1$ when releasing the labels. For PEFT Ensemble this is the case with all label methods at $\epsilon=1$. At $\epsilon=8$ blurriness has a much smaller effect. Right panel shows the number of samples per blurry class per task. We show the median over five repeats at $\delta=10^{-5}$  (see more in \cref{tab:datafig6} and \cref{fig:exp_blurry_comp_add}). Public Label is $1000 \times$ size.}
    \label{fig:exp_blurry_comp}
\end{figure}

We experiment with Split-CIFAR-100 in \cref{fig:exp_blurry_comp}. The main finding is that under tighter privacy ($\epsilon=1$) releasing labels degrades with higher blurry sample ratio, because because a higher privacy fraction is required to release the labels leaving less privacy budget for the DP training. Under weaker privacy a large enough fraction can be allocated to release the labels without distorting the DP training. The PEFT Ensemble degrades with higher blurry sample ratio at $\epsilon=1$ regardless of the label method as too little data is available for training models at every task, while the Cosine Classifier is invariant.

\subsection{Additional Experiments}\label{sec:exp_add}
To establish our DP methods we provide experiments on established CL benchmarks in \cref{app:domainshift}. Comparing to two baselines, we show that the Cosine Classifier is a viable alternative when storage or compute are limited and the domain shift to the pre-training data is small. However, with larger domain shift, PEFT is the only option to achieve good privacy/utility trade-offs.


\section{Discussion and Conclusion}
\label{sec:discussion}

We studied an attack on the output label space as a  privacy-side channel, introduced two methods to eliminate the side-channel, and finally studied the two methods by adapting pre-trained models.

\paragraph{Recommendations for Label Methods} 
The utility of the label methods depends on the privacy budget and the classifier (\cref{fig:compare_approaches}).
The \textbf{Release Labels} method requires tuning the DP budget split between the label release mechanism and the DP training mechanism (\cref{fig:explain_learning}). Small classes might get dropped under high privacy (\cref{fig:exp_blurry_comp}). The \textbf{Public Labels} method can be used when prior information about the labels is available. The size of the public label space has only a minor effect (\cref{fig:explain_prior}). Thus, we recommend using \textbf{Release Labels} when the classes are large or no prior information is available. 

\paragraph{Privacy budget split heuristic for Release Labels}
As a simple heuristic motivated by our baselines, when $\epsilon \ge 1$ one may allocate $10\%$ of the target privacy budget to \textbf{Release Labels}. If data-independent prior knowledge of class sizes is available (e.g., each class has at least 500 samples), one can use the output distribution of the mechanism in \cref{eq:optimal-dp-mechanism} to compute the minimum budget split that releases all labels with probability close to $1$. This split, however, may not yield optimal accuracy, as a slightly lower privacy budget (higher noise) in the training mechanism might improve test accuracy due to regularisation from the DP noise. Without such data-independent prior knowledge, the minimum split cannot be computed from sensitive class sizes, since doing so is not DP.

\paragraph{Recommendations for DP CL methods}
Overall, the PEFT Ensemble tends to outperform the Cosine Classifier in non-blurry settings, as it can adapt the feature extractor to newly observed data. It also achieves better performance in blurry settings under lower privacy constraints (\(\epsilon = 8\)); see \cref{fig:compare_approaches,fig:explain_learning,fig:explain_prior,fig:exp_blurry_comp}. However, under stronger privacy constraints (\(\epsilon = 1\)), the Cosine Classifier generally outperforms the PEFT Ensemble across most blurry settings (\cref{fig:exp_blurry_comp}). These results suggest that the Cosine Classifier is preferable when privacy requirements are strict, computational and memory resources are limited, and the pre-training data closely matches the sensitive dataset.

\paragraph{Limitations}
We do not use methods based on synthetic data and leave them for future work. We also leave the theoretical analysis of the optimal privacy budget split between the label release mechanism and the DP training to future work.

\section{Reproducibility}
We provide proofs and detailed discussion of the results of \cref{sec:theory_sec_output} in \cref{app:theory-details}. \cref{app:other-details-dpcl} expands on DP CL privacy definitions and theoretical grounding for the methods that we use.
\cref{alg:cosine,alg:peft-ensemble} provide pseudo-codes of our proposed DP CL methods which are based on pre-trained models.
We provide experimental details in \cref{app:experimental_details} and pointers to accessing the used models and datasets in \cref{sec:licenses}. The detailed tabular results in \cref{app:tabularresults,app:domainshift} help verify the correctness of reproduced experiments. We also provide the source code of the experiments at \url{https://github.com/PROBIC/private-continual-learning}.

\clearpage

\section*{Acknowledgments}
    This work was supported by the Research Council of Finland (Flagship programme: Finnish Center for Artificial Intelligence, FCAI, Grant 356499, Grant 359111, and Grant 339730), the Strategic Research Council at the Research Council of Finland (Grant 358247) as well as the European Union (Project 101070617). Views and opinions expressed are however those of the author(s) only and do not necessarily reflect those of the European Union or the European Commission. Neither the European Union nor the granting authority can be held responsible for them.
    The authors wish to thank the CSC -- IT Center for Science, Finland for supporting this project with computational and data storage resources.
    We thank Rui Li and Aki Rehn for the helpful discussions and Luigi Acerbi for thoughtful comments on the draft version.

\bibliography{arxiv.bbl}
\bibliographystyle{tmlr}

\appendix
\setcounter{figure}{0}                       
\renewcommand\thefigure{A\arabic{figure}}
\setcounter{table}{0}
\renewcommand{\thetable}{A\arabic{table}}
\setcounter{equation}{0}
\renewcommand{\theequation}{A\arabic{equation}}
\setcounter{algorithm}{0}
\renewcommand{\thealgorithm}{A\arabic{algorithm}}

\addtocontents{toc}{\protect\setcounter{tocdepth}{2}}

%
%

\section*{Appendices}
%
We provide an overview of the notation in \cref{app:notation} and expand the discussion on existing DP CL definitions in \cref{sec:dpcl-definitions}. We define task-wise DP and revisit the task-wise adjacency relation in \cref{sec:task-dp}, then provide some composition results in \cref{sec:task-level_composition}.  In \cref{app:theory-details}, we provide more details and the proofs for \cref{sec:theory_sec_output}. 
\cref{app:experimental_details} contains experimental details about our experiments in \cref{sec:experiments}, \cref{app:tabularresults} has tabular results for the figures in \cref{sec:experiments}, and \cref{app:domainshift} contains results for checking that our methods work well in established CL settings. \cref{sec:licenses} lists the licenses and access possibilities for the used models and datasets in \cref{sec:experiments}. \cref{app:llm_usage} explains the usage of LLMs in our work.


\subsection*{List of Appendices}
\renewcommand{\contentsname}{}
\vspace*{-2em}
\tableofcontents 

\vfill
\clearpage

\section{Notation}\label{app:notation}

\begin{table}[H]
\centering
\caption{Notations}
\label{table-notations}
\begin{tabular}{||c | l ||} 
 \hline
 $t$ & task index \\ 
 $T$ & total number of tasks \\ 
 $I$ & set of task indices \\
 $\mathrm{units}(\mathcal{D}_t)$ & a mapping that provides the privacy unit of a dataset \\
  $\vx_{t}^{(k)}$ & $k$\textsuperscript{th} sample features in task $t$ \\ 
 $y_{t}^{(k)}$ & $k$\textsuperscript{th} sample label in task $t$ \\ 
 $N_t$ & total number of samples in task $t$ \\ 
 $m$ & input data dimensionality\\ 
 $\gX$, ($\gX_t$) & (task-specific) feature space  \\ 
 $\gO, (\gO_t)$ & (task-specific) general output space \\
$\mathcal{Y}$, $\mathcal{Y}_t$, $\privatelabels$, $(\privatelabels_t)$ & true (task-specific) label space  \\ 
 $\assumedlabels$, ($\assumedlabels_t$) & publicly known (task-specific) label set \\ 
 $\learnedlabels_t$ & (task-specific) labels learned from the dataset by a DP mechanism \\
 $\gD$, ($\gD_t$) & (task-specific) dataset (features and labels) \\ 
 $\varepsilon, \delta$ & DP privacy parameters \\ 
 $\gD \simeq \gD'$ & DP neighboring datasets \\ 
 $\gA$ & randomized algorithm \\  
 $\mathrm{Range}(\gA)$ & set of all possible outcomes for $\gA$ \\ 
 $S$ & outcome event for a randomized algorithm \\ 
 $\taskwisemechanism_t$, $\mathcal{A}_t$ & DP algorithm for releasing the classifier in task $t$ \\ 
 $\taskwisemechanism$ & DP algorithm for composing the classifiers for tasks $1, \ldots , T$ \\ 
  $\mathcal{S}_t$ & State of a mechanism/algorithm at task $t$ \\
 $\mathcal{Z}_t$ & Output of a mechanism at task $t$ \\
 $\mathcal{R}^{\text{label}}_t$ & (task-specific) label remapping function \\
 $\mathcal{R}^{\text{task}}_t$ & (task-specific) task dataset remapping function \\
 $\mathcal{L}$ & label release mechanism  \\
 $\vtheta$ & model parameters \\ 
 $\pretrainedmodel$ & pre-trained model parameterized by $\vtheta$ \\ 
 $\classifier_{\vtheta}$ & model parameterized by $\vtheta$ \\ 
 \settingDirectData & setting where the dataset labels are directly released \\ 
 \settingAssumed & setting where public labels are used for each task \\ 
 \textbf{\settingLearned} & setting where the labels are learned from the dataset  \\ 
  $s_{t,o}$ & noisy feature sum for task $t$, class $o$ \\  
 $K$ & feature extractor dimensionality (omitting classifier layer) \\ 
 $\vv =\pretrainedmodel(\vx) \in \sR^{K}$ & feature vector from pre-trained model \\ 
 $\gD_{t,o}$ & samples with class $o$ in task $t$ \\ 
 $\xi$ & DP-SGD algorithm-specific parameters \\ 
 $g_{\vphi_t}: \sR^{K} \rightarrow \gO_t$ & task-specific head parameterized by $\phi_t$ \\
 $\pmb{z}_t \sim \gN(\pmb{0}, \sigma^2 I)$ & Gaussian noise with variance $\sigma^2$ \\  
 \hline
\end{tabular}
\end{table}


\clearpage
\section{Other Details Related to DP CL Mechanisms}\label{app:other-details-dpcl}

\subsection{Limitations of Current DP CL Theory}\label{sec:dpcl-definitions}
Regarding CL, defining what is meant for a learning algorithm to be DP is not trivial, as the data is not available to the algorithm at once. Instead, the algorithm receives a datasets $\mathcal{D}_t$ gradually over time $t = 1, \ldots, T$, and releases intermediate classifiers $f_{\theta_t}: \mathcal{X} \rightarrow \mathcal{O}_t$, $t = 1, \ldots, T$. In this section, we discuss existing approaches and their limitations. In the next section, we introduce an alternative definition.

\paragraph{DP CL with Global Adjacency}
\cite{desai2021continual} introduced an adjacency relation for DP CL based on the union of the datasets from all the tasks $t = 1, \ldots, T$ and their episodic memories. For the following definition, for each task $t$, $\mathcal{D}_t$ is a dataset, $\mathfrak M_t$ is an episodic memory, and for any two sets $A$ and $A'$, $\lVert A - A' \rVert_1 \leq 1$ means that they are adjacent such that one of the sets differs from the other by at most one sample.
\begin{definition}[Definition~2 in \citet{desai2021continual}]
\label{def:desai2021-continual-definition}
    Two databases $D=(\mathcal D, \mathfrak M)$ and $D'=(\mathcal D', \mathfrak M')$, where $\mathcal D=\cup_{t=1}^T$, $\mathcal D_t,\mathcal D'=\cup_{t=1}^T \mathcal D_t'$, $\mathfrak M =\cup_{t=1}^T \mathfrak M_t$, and $\mathfrak M' =\cup_{t=1}^T \mathfrak M_t'$, are called continual adjacent, if $\lVert \mathcal D - \mathcal D' \rVert_1 \leq 1$ and $\lVert \mathfrak M - \mathfrak M'\rVert_1 \leq 1$.
\end{definition}
If we disregard the episodic memories ($\mathfrak M = \mathfrak M' = \varnothing$), the main issue with \cref{def:desai2021-continual-definition}, is that in the general case where multiple adjacent datasets $\mathcal{D}_t, \mathcal{D}_t'$ are allowed to differ (i.e., $\lVert \mathcal{D}_t - \mathcal{D}_t' \rVert = 1$), then the global adjacency relation $\lVert \mathcal D - \mathcal D' \rVert_1 \leq T$ does not imply that $\lVert \mathcal D_t - \mathcal D_t' \rVert_1 \leq 1$. Moreover, even in the restricted case of $\lVert \mathcal D - \mathcal D' \rVert_1 \leq 1$, encoding the adjacency at the global level loses information about which task $t$ has differing adjacent datasets $\mathcal{D}_t \neq \mathcal{D}_t'$. We can even choose $\gD_t$ and $\gD_t'$ such that $\gD_t \cap \gD_t' = \varnothing, t=1,\dots,T$, while still satisfying $\lVert \cup_t \gD_t - \cup_t \gD_t' \rVert_1 \leq 1$.

The privacy accounting in \cite{desai2021continual} is done for each task separately, and then they calculate the total privacy guarantees over all tasks through basic sequential composition, i.e., $\epsilon = \sum_{t=1}^T \epsilon_t$ \citep[Lemmas 1 \& 2]{desai2021continual}. Therefore, their privacy accounting does not strictly correspond to their \cref{def:desai2021-continual-definition}, and it could benefit from a more granular adjacency at the task level. 

\paragraph{DP CL for Data Streams}
\cite{epasto2023dpclustering} introduced a DP definition for data streams, where two streams $\mathcal{S} = (x_1, \ldots, x_T)$ and $\mathcal{S}' = (x_1', \ldots, x_T')$ are neighbouring or adjacent if there exists at most one timestamp $t^{*}$ for which $x_{t^{*}} \neq x_{t^{*}}'$ and $x_t = x_t'$, for all $t \neq t^{*}$, see also \citep{hassanpour2022differential, dwork_pan-private_2010, lecuyer_privacy_2019}. However, their definition is not suitable for our setting, where we operate on task datasets $\mathcal{D}_t$, $t = 1, \ldots, T$. \cite{hassanpour2022differential} studied DP in CL for streaming data and proposed a method that provides meaningful DP guarantees in that context. Still, their method introduces complications when applying DP-SGD on a task level. In particular, standard privacy accounting techniques, especially those relying on subsampling amplification, assume that each minibatch is sampled i.i.d. from a larger dataset. This assumption does not hold under streaming adjacency (see \cite{choquette-choo_near_2024} for a discussion on amplification in streaming settings). For these reasons, the streaming adjacency is not well-suited for our proposed approaches.

\paragraph{Lifelong DP}
Regarding lifelong learning with DP (Lifelong DP), \cite{Lai2022LifelongDP} proposed a task level adjacency relation for the $\epsilon$-Lifelong learning, 
\begin{definition}[Definition~2 in \cite{Lai2022LifelongDP}]\label{def:lifelong-adjacency}
    Given any two lifelong databases $\mathrm{data}_T = \{\mathcal{D}, \mathfrak M\}$ and $\mathrm{data}_T' = \{\mathcal{D}', \mathfrak M'\}$, where $\mathcal{D} = \{ \mathcal{D}_1, \ldots, \mathcal{D}_T\}$, $\mathcal{D}'=\{ \mathcal{D}_1', \ldots, \mathcal{D}_T'\}$, $\mathbb M = \{ \mathbb M_1, \ldots, \mathbb M_T \}$, $\mathbb M' = \{ \mathbb M_1', \ldots, \mathbb M_T' \}$, $\mathbb{M}_t = \cup_{i = 1}^{t - 1} \mathfrak M_{i}$, and $\mathbb{M}_t' = \cup_{i = 1}^{t - 1} \mathfrak M_{i}'$. $\mathrm{data}_T$ and $\mathrm{data}_T'$ are called lifelong neighboring databases if, $\forall t \in \{1, \ldots, T\}$: \begin{inparaenum}[(i)]
    \item $\mathcal{D}_t$ and $\mathcal{D}_t'$ differ by at most one tuple; and
    \item $\mathfrak M_t$ and $\mathfrak M_t'$ differ by at most one tuple.
    \end{inparaenum}
\end{definition}
If we disregard the episodic memories ($\mathfrak M_t = \mathfrak M_t' = \varnothing$)  in \cref{def:lifelong-adjacency}, then this adjacency relation would be suitable for our setting. \cite{Lai2022LifelongDP} also introduced a definition for $\epsilon$-Lifelong DP,
\begin{definition}[Definition~3 in \cite{Lai2022LifelongDP}]\label{epsilon-lifelongDP}
    $\epsilon$-Lifelong DP. Given a lifelong database $\mathrm{data}_T$, a randomized algorithm $\mathcal{A}$ achieves $\epsilon$-Lifelong DP, if for any of two lifelong neighbouring databases $(\mathrm{data}_T, \mathrm{data}_T')$, for all possible outputs $\{\vtheta_t\}_{t = 1}^{T} \in \mathrm{Range}(\mathcal{A})$, $\forall T \in \mathbb{N}$ we have that
    \begin{equation}
        \begin{split}
            & \mathrm{Pr}[\mathcal(\mathrm{data}_T) = \{\vtheta_t\}_{t = 1}^{T}] \leq e^{\epsilon} \mathrm{Pr}[\mathcal(\mathrm{data}_T')_{i \in [1, m]} = \{\vtheta_t\}_{t = 1}^{T}] \\
            & \nexists (\epsilon' < \epsilon, t \leq T): \mathrm{Pr}[\mathcal(\mathrm{data}_t)=\{\vtheta_i\}_{i = 1}^{t}] \leq e^{\epsilon'} \mathrm{Pr}[\mathcal (\mathrm{data}_t') = \{\vtheta_i\}_{i = 1}^{t}]
        \end{split}
    \end{equation} 
    where $\mathrm{Range}(\mathcal{A})$ denotes every possible output of $\mathcal{A}$.
\end{definition}
The issue with \cref{epsilon-lifelongDP} is that it corresponds to one type of composition, i.e., parallel composition. Also, it does not allow for (approximate) $(\epsilon, \delta)$-DP.

To summarise the limitations of existing approaches for defining DP CL:
\begin{enumerate}
    \item The global adjacency relation over the union of all tasks is not suitable for continual learning, and the adjacency of the task datasets $\mathcal{D}_t$, $\mathcal{D}_t'$ cannot be inferred from the unions $\mathcal{D} = \cup_{t = 1}^{T} \mathcal{D}_t$, $\mathcal{D}' = \cup_{t= 1}^{T} \mathcal{D}'_t$  \citep{desai2021continual}.
    \item DP CL definitions and methods for data streams cannot be naturally translated to the setting we consider, i.e., tasks are defined based on datasets $\mathcal{D}_t$, $t = 1, \ldots, T$ \citep{epasto2023dpclustering,hassanpour2022differential}. Particularly, the method of \cite{hassanpour2022differential} does not account for subsampling amplification which introduces complications when applying DP-SGD on the level of a task.
    \item $\epsilon$-Lifelong DP \citep{Lai2022LifelongDP} has a suitable adjacency relation for our setting, if we discard the episodic memories; however, their definition does not allow for (approximate) $(\epsilon, \delta)$-DP and it is restricted to parallel composition over the tasks.
\end{enumerate}

\subsection{Task-Wise DP for DP CL}\label{sec:task-dp}
As discussed in the previous section \cref{sec:dpcl-definitions}, some of the existing works on DP CL, including \cite{desai2021continual,Lai2022LifelongDP}, introduce a definition for DP with CL along with a dataset adjacency relation; however, these works have several limitations. In \cite{desai2021continual}, the data adjacency relation depends on the datasets from both the current and future tasks, and in \cite{Lai2022LifelongDP}, their privacy definition is restricted to only one type of composition and does not allow for (approximate) $(\epsilon, \delta)$-DP. To guarantee privacy in CL, our basic approach is to define task-wise adjacency, given in \cref{def:task-wise-adjacency}, which is a more general than what was given in \cref{def:task-wise-adjacency-minimal}, and task-wise DP, given in \cref{def:task-wise-dp}. We gave a minimal definition for the task-wise adjacency relation in the main text because our experiments only apply parallel composition, and thus it was sufficient. However, one can still use various composition methods \citep{McSherry10ParallelComposition, DworkAdaptiveComposition2010, WhitehouseFullyAdaptiveComposition2023}, to account for the total privacy.

\begin{definition}[Task-wise Adjacency]\label{def:task-wise-adjacency}
    Let $\simeq$ be an adjacency relation between datasets, $I \subseteq \mathbb{N}$ be a set of task indices that can be either finite or infinite, and $1 \leq n \leq \lvert I \rvert$. A sequence of datasets $(\mathcal{D}_t)_{t \in I}$ is said to be $n$ task-wise adjacent to another sequence of datasets $(\mathcal{D}_t')_{t \in I}$, denoted $(\mathcal{D}_t)_{t \in I} \simeq_{\text{tw}(n)} (\mathcal{D}_t')_{t \in I}$, if there exists $J \subseteq I$ such that $\lvert J \rvert = n$, and for any $t \in J$, we have $\mathcal{D}_t \simeq \mathcal{D}_t'$, and that $\mathcal{D}_t = \mathcal{D}_t'$ if $t \in I \setminus J$. We say that the order of task-wise adjacency is $n$. When the order of adjacency is $1$, then it can be omitted, i.e. $\simeq_{\text{tw}} :\equiv \simeq_{\text{tw}(1)}$.
\end{definition}
\cref{def:task-wise-adjacency} is inspired by both the definitions in \cite{epasto2023dpclustering} and \cite{Lai2022LifelongDP}, and it differs from both by being able to control the number of possible different adjacent task datasets. If the order of adjacency is $1$, i.e., for only one $t^{*} \in I$, we have $\mathcal{D}_{t^*} \simeq \mathcal{D}_{t^*}'$ and $\mathcal{D}_t = \mathcal{D}_t'$ if $t \neq t^{*}$, as in \cref{def:task-wise-adjacency-minimal}. However, $t^{*}$ is unknown in advance, and is arbitrary. 

\begin{definition}[Task-wise DP]\label{def:task-wise-dp}
    Let $\simeq$ be an adjacency relation between datasets, and let $(\mathcal{A}_t)_{t\in I}$ be any sequence of mechanisms, such that $I \subseteq \mathbb{N}$ is a set of task indices. Define $\mathcal{A}$ as the mechanism obtained by composing $(\mathcal{A}_t)_{t\in I}$, that is, for any sequence of datasets $(\mathcal{D}_t)_{t \in I}$, 
    \begin{equation}\label{composed-mechanism}
        \mathcal{A} : (\mathcal{D}_t)_{t \in I} \mapsto (\mathcal{A}_t(\mathcal{D}_t))_{t \in I}.
    \end{equation}
    Given $n$ such that $1 \leq n \leq \lvert I \rvert$, the sequence of mechanisms $(\mathcal{A}_t)_{t\in I}$ is said to satisfy task-wise $(\epsilon, \delta, \simeq_{\text{tw}(n)})$-DP, if for any two $n$ task-wise adjacent sequences of datasets $(\mathcal{D}_t)_{t \in I} \simeq_{\text{tw}(n)} (\mathcal{D}_t')_{t \in I}$, and for any $S \subseteq \mathrm{Range}(\mathcal{A})$:
    \begin{equation}
        \mathrm{Pr}\left[\mathcal{A}\left((\gD_t)_{t \in I}\right) \in S\right] \leq \exp{(\epsilon)} \times \mathrm{Pr}\left[\mathcal{A}\left((\gD_t')_{t \in I}\right) \in S\right] + \delta.
    \end{equation}
\end{definition}
For the composed mechanism $\mathcal{A}$ defined in \cref{composed-mechanism}, 
\begin{equation}
    \mathrm{Range}(\mathcal{A}) = \prod\limits_{t \in I} \mathrm{Range}(\mathcal{A}_t).
\end{equation}
We define the $t$th composition projection function as
\begin{equation}
    \pi_t: \prod\limits_{k \in I} \mathrm{Range}(\mathcal{A}_k) \rightarrow \mathrm{Range}(\mathcal{A}_t),
\end{equation}
such that $\pi_t\left((s_k)_{k \in I}\right) = s_t$, where $(s_k)_{k \in I}$ is any output of $\mathcal{A}$. The following theorem enables us to translate the task-wise $(\epsilon, \delta, \simeq_{\text{tw}})$-DP guarantees to typical $(\epsilon, \delta, \simeq)$-DP guarantees in both directions. Unless explicitly stated otherwise, the order of task-wise adjacency is assumed to be $1$ when writing task-wise $(\epsilon, \delta)$-DP.
\taskwisedpisdp*
\begin{proof}
    \paragraph{(necessary condition $\Rightarrow$)}
    Assume that $(\mathcal{A}_t)_{t \in I}$ is task-wise $(\epsilon, \delta, \simeq_{\text{tw}(1)})$-DP. Let $\mathcal{D} \simeq \mathcal{D}'$ be any two adjacent datasets, and let $\mathrm{assign}: \mathcal{X} \times \mathcal{Y} \rightarrow I$ be any task assignment function. Assume without loss of generality that $\mathcal{D}' = \mathcal{D} \cup \{(\vx^{*}, y^{*}) \}$. From the task assignment function, we obtain two sequences of datasets $\mathrm{assign}\left[\mathcal{D} \right]  = \left(\mathcal{D}_t\right)_{t \in I}$ and $\mathrm{assign}\left[\mathcal{D}' \right]  = \left(\mathcal{D}_t'\right)_{t \in I}$. We will argue that $\left(\mathcal{D}_t\right)_{t \in I} \simeq_{\text{tw}(1)} \left(\mathcal{D}_t'\right)_{t \in I}$. 
    
    Define $t^{*} = \mathrm{assign}\left((\vx^{*}, y^{*})\right)$, then for all $t \neq t^{*}$, $\mathcal{D}_t = \mathcal{D}_t'$ because $\mathcal{D}' \setminus \{(\vx^{*}, y^{*}) \} = \mathcal{D}$. However, for $t = t^{*}$, we have $\mathcal{D}_{t^{*}}' = \mathcal{D}_{t^{*}} \cup \{ (\vx^{*},y^{*}) \}$, which implies that $\mathcal{D}_{t^{*}}' \simeq \mathcal{D}_{t^{*}}$, and hence $\left(\mathcal{D}_t\right)_{t \in I} \simeq_{\text{tw}(1)} \left(\mathcal{D}_t'\right)_{t \in I}$. Consider the following, for all $S \subseteq \mathrm{Range}(\mathcal{A})$,
    \begin{equation}\label{eq:task-wise-dp-is-dp}
        \begin{split}
        \mathrm{Pr}\left[ \mathcal{A}\circ\mathrm{assign}\left[\mathcal{D}\right]\right]
        &= \mathrm{Pr}\left[ \mathcal{A}\left((\mathcal{D}_t)_{t \in I}\right) \in S\right] \\
        &\leq \exp(\epsilon) \mathrm{Pr}\left[ \mathcal{A}\left((\mathcal{D}_t')_{t \in I}\right) \in S\right] + \delta \\
        &=\exp(\epsilon) \mathrm{Pr}\left[ \mathcal{A}\circ \mathrm{assign}\left[\mathcal{D}'\right] \in S\right] + \delta.
        \end{split}
    \end{equation}
    Therefore, $\mathcal{A}\circ \mathrm{assign}\left[\cdot\right]$ is $(\epsilon, \delta)$-DP.
    
    \paragraph{(sufficient condition $\Leftarrow$)}
    Conversely, assume that $\mathcal{A} \circ \mathrm{assign}[\cdot]$ is $(\epsilon, \delta, \simeq)$-DP for any task assignment function $\mathrm{assign}: \mathcal{X} \times \mathcal{Y} \rightarrow I$. Let $\left(\mathcal{D}_t\right)_{t \in I} \simeq_{\text{tw}(1)} \left(\mathcal{D}_t'\right)_{t \in I}$ be any two adjacent sequences of datasets. Define $\mathcal{D} = \cup_{t \in I} \mathcal{D}_t$ and let $\mathcal{D}' = \cup_{t \in I} \mathcal{D}_t'$. Since for some $t^{*}$, $\mathcal{D}_{t^{*}} \simeq \mathcal{D}_{t^{*}}'$, and for $t \neq t^{*}$, $\mathcal{D}_t = \mathcal{D}_{t}'$, then assume without loss of generality that $\mathcal{D}_{t^{*}}' = \mathcal{D}_{t^{*}} \cup \{ (\vx^{*}, y^{*}) \}$. Thus, $\mathcal{D}' = \mathcal{D} \cup \{ (\vx^{*}, y^{*}) \}$, and hence $\mathcal{D}' \simeq \mathcal{D}$.
    
    To complete the proof, we need to craft a task assignment function $\mathrm{assign}$ that produces the same sequences of adjacent sets $\left(\mathcal{D}_t\right)_{t \in I} \simeq_{\text{tw}(1)} \left(\mathcal{D}_t'\right)_{t \in I}$ from $\mathcal{D} \simeq \mathcal{D}'$. Define $\mathrm{assign}$ as follows
    \begin{equation}
        \mathrm{assign}\left((\vx, y)\right) = \begin{cases} 
        t & \text{if } (\vx, y) \in \mathcal{D}_t' \text{ for some } t \in I, \\
        1 & \text{otherwise}.
        \end{cases}
    \end{equation}
    Therefore, $\mathrm{assign}\left[\mathcal{D}\right] = \left(\mathcal{D}_t\right)_{t \in I}$ and $\mathrm{assign}\left[\mathcal{D}'\right] = \left(\mathcal{D}_t'\right)_{t \in I}$. Hence, by the same argument in \cref{eq:task-wise-dp-is-dp}, we obtain that $(\mathcal{A}_t)_{t \in I}$ is task-wise $(\epsilon, \delta, \simeq_{\text{tw}(1)})$-DP.
\end{proof}

\mechanismtaskwisedpisdp*
\begin{proof}
    Let $t^{*} \in I$ be chosen arbitrarily, and define the following task assignment function:
    \begin{equation}
        \mathrm{assign}\left((\vx, y)\right) = t^{*}
    \end{equation}
    for all $(\vx, y) \in \mathcal{X} \times \mathcal{Y}$, i.e., it is just the constant function. Let $\mathcal{D} \simeq \mathcal{D}'$ be any two adjacent datasets, then
    \begin{equation}
        \mathrm{assign}\left[ \mathcal{D} \right] = \left(\mathcal{D}_t\right)_{t \in I},
    \end{equation}
    such that $\mathcal{D}_{t^{*}} = \mathcal{D}$ and $\mathcal{D}_t = \varnothing$, for all $t \neq t^{*}$. Similarly,
    \begin{equation}
        \mathrm{assign}\left[ \mathcal{D}' \right] = \left(\mathcal{D}_t'\right)_{t \in I},
    \end{equation}
    such that $\mathcal{D}_{t^{*}}' = \mathcal{D}'$ and $\mathcal{D}_t' = \varnothing$, for all $t \neq t^{*}$. By \cref{theorem:task-wise-is-dp}, $\mathcal{A} \circ \mathrm{assign}[\cdot]$ is $(\epsilon, \delta)$-DP, and thus by post processing $\pi_{t^{*}}\circ \mathcal{A} \circ \mathrm{assign}[\cdot]$ is also $(\epsilon, \delta)$-DP.

    Moreover, $\pi_{t^{*}}\circ \mathcal{A} \circ \mathrm{assign}[\mathcal D] = \mathcal{A}_{t^{*}}(\mathcal{D})$ and $\pi_{t^{*}}\circ \mathcal{A} \circ \mathrm{assign}[\mathcal D'] = \mathcal{A}_{t^{*}}(\mathcal{D}')$, completing the proof. 
\end{proof}

\subsection{Task-Wise DP and Composition (for Sec. \ref{sec:task-dp})}
\label{sec:task-level_composition}

Practically, one can apply \cref{theorem:task-wise-is-dp} and \cref{cor:task-wise-dp-mechanisms}, and then apply any composition method. Parallel composition \citep{McSherry10ParallelComposition}, which we use in our experiments, applies when each individual's data appears in only one dataset $\mathcal{D}_t$, and a DP mechanism is invoked on each dataset separately. It is $(\epsilon,\delta)$-DP if each mechanism is $(\epsilon,\delta)$-DP. Although the proof already exists in \citep{McSherry10ParallelComposition}, we also show that parallel composition is task-wise $(\epsilon, \delta)$-DP in \cref{thm:parallel-composition} for completeness. In contrast, sequential composition degrades privacy guarantees with each use. We show that basic sequential composition of a sequence of mechanisms is task-wise $\left(\lvert I \rvert \epsilon, \lvert I \rvert \delta , \simeq_{\text{tw}(T)}\right)$-DP in \cref{thm:task-wise-squential}, and the proof is similar to the one in \cite{dwork_algorithmic_2014}. Adaptive sequential composition \citep{dwork_algorithmic_2014, DworkAdaptiveComposition2010} applies when privacy parameters (including the number of tasks) are known in advance; otherwise, fully adaptive composition is needed \citep{WhitehouseFullyAdaptiveComposition2023}.

A necessary condition for $\mathcal{A}$ in \cref{composed-mechanism} to be DP is that it should not leak that any of the task datasets is empty, i.e. that $\mathcal{D}_t = \varnothing$, almost surely, for some $t$. Let $\bot$ denote that a classifier was not released. We also say that $\bot$ is not a (proper) classifier. If $\mathcal{A}_t(\mathcal{D}_t) = \bot$ (almost surely) when $\mathcal{D}_t = \varnothing$ and $\mathcal{A}_t(\mathcal{D}_t) \neq \bot$ (almost surely) when $\mathcal{D}_t \neq \varnothing$, then $\mathcal{A}$ is not DP according to \cref{proposition-no-undefined}. 
The reason is that this can potentially leak whether the classifier was trained including or excluding a certain sample. Therefore, we will assume that the classifier release mechanism always outputs a proper classifier $\mathcal{A}_t(\mathcal{D}_t) \neq \bot$ (almost surely) for any $t$. In practice, for the case of using pretrained models, when $\mathcal{D}_t = \varnothing$, we can initialize both the fine-tuning weights and the classifier's weights randomly before releasing the classfier. 
\begin{proposition}\label{proposition-no-undefined}
    The mechanism $\mathcal{A}_t$ is not $(\epsilon, \delta)$-DP for $0 \leq \delta < 1$ when $\mathcal{A}_t(\mathcal{D}_t) = \bot$ (almost surely) if and only if $\mathcal{D}_t = \varnothing$.
\end{proposition}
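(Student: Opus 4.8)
The plan is to exhibit a single pair of adjacent datasets on which the $(\epsilon,\delta)$-DP inequality fails for every $\epsilon \ge 0$ and every $\delta < 1$. The natural choice is the empty dataset versus a one-element dataset, which are neighbors under the add/remove adjacency used throughout the paper. This is the same disjoint-support phenomenon exploited in the sketch of \cref{proposition-labels-release}: a one-sample change pushes the output between a region hit with probability one and a region hit with probability zero.

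First I would fix an arbitrary labeled example $(\pmb{x}^{*}, y^{*})$ and set $\mathcal{D}_t = \varnothing$ and $\mathcal{D}_t' = \{(\pmb{x}^{*}, y^{*})\}$, so that $\mathcal{D}_t \simeq \mathcal{D}_t'$. By the stated hypothesis on $\taskwisemechanism_t$, we have $\taskwisemechanism_t(\mathcal{D}_t) = \bot$ almost surely because $\mathcal{D}_t = \varnothing$, and $\taskwisemechanism_t(\mathcal{D}_t') \neq \bot$ almost surely because $\mathcal{D}_t' \neq \varnothing$. Hence $\mathrm{Pr}[\taskwisemechanism_t(\mathcal{D}_t) = \bot] = 1$ while $\mathrm{Pr}[\taskwisemechanism_t(\mathcal{D}_t') = \bot] = 0$.

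Next I would take the outcome set $S = \{\bot\} \subseteq \mathrm{Range}(\taskwisemechanism_t)$ and instantiate \cref{def:DP} in the direction $\mathcal{D}_t \to \mathcal{D}_t'$: $(\epsilon,\delta)$-DP would require $1 = \mathrm{Pr}[\taskwisemechanism_t(\mathcal{D}_t) \in S] \le \exp(\epsilon)\,\mathrm{Pr}[\taskwisemechanism_t(\mathcal{D}_t') \in S] + \delta = \exp(\epsilon)\cdot 0 + \delta = \delta$, contradicting $0 \le \delta < 1$. Therefore $\taskwisemechanism_t$ is not $(\epsilon,\delta)$-DP for any $\epsilon \ge 0$ and $0 \le \delta < 1$, which is the claim.

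I do not expect a genuine obstacle here; the only points that need care are bookkeeping ones — that $\{\bot\}$ is a legitimate (measurable) outcome set, that $\varnothing$ and a singleton are valid neighbors under add/remove adjacency, and that the strict inequality $\delta < 1$ is exactly what makes the contradiction go through (at $\delta = 1$ the bound is vacuous). The substance of the proposition is the modeling observation that "refuse to output a proper classifier precisely when the task data is empty" leaks a one-bit predicate of the dataset and hence cannot satisfy DP.
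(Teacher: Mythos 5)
Your proof is correct and follows essentially the same route as the paper's: both exhibit the adjacent pair $\mathcal{D}_t = \varnothing$ versus $\mathcal{D}_t' = \{(\pmb{x}^{*}, y^{*})\}$ and derive $1 \leq \delta$ from the disjoint supports of the two output distributions. The only cosmetic difference is that you test the event $S = \{\bot\}$ in the direction $\mathcal{D}_t \to \mathcal{D}_t'$, while the paper tests the complementary event $S = \mathcal{H}_t$ in the opposite direction — these are mirror images of the same argument.
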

Denote the range of $\mathcal{A}_t$ as $\mathcal{H}_t \cup \{\bot\}$, i.e. the union of the hypothesis space $\mathcal{H}_t$ (proper-classifiers) and $\{\bot\}$ (non-proper classifier). For the following proof, note that $\bot \notin \mathcal{H}_t$ for any $t$. 
\begin{proof}
    We argue by constructing a counterexample. Assume that $\mathcal{A}_t$ is $(\epsilon, \delta)$-DP for $0 \leq \delta < 1$ and that $\mathcal{A}_t(\mathcal{D}_t) = \bot$ (almost surely) iff $\mathcal{D}_t = \varnothing$. Let $\mathcal{D}_t  = \varnothing$ and $\mathcal{D}_t' =\{(\pmb{x}^{*}, y^{*})\}$ where $(\pmb{x}^{*}, y^{*}) \in \mathcal{X} \times \mathcal{O}_t$ is arbitrary. 
    Since $\mathcal{A}_t$ is $(\epsilon, \delta)$-DP, then for all $S \subseteq \mathcal{H}_t$,
    \begin{equation}\label{equation-1}
        \mathrm{Pr}[\mathcal{A}(\mathcal{D}_t') \in S] \leq e^{\epsilon} \mathrm{Pr}[\mathcal{A}(\mathcal{D}_t) \in S] + \delta.
    \end{equation}
    Let $S = \mathcal{H}_t$, then 
    \begin{equation}
     \mathrm{Pr}\left[\mathcal{A}(\mathcal{D}_t')  \in S\right] = 1.   
    \end{equation}
    However, since $\mathcal{D}_{t} = \varnothing$, then $\mathcal{A}_{t}(\mathcal{D}_t) = \bot \notin \mathcal{H}_{t}$. Thus, 
    \begin{equation}
        \mathrm{Pr}[\mathcal{A}(\mathcal{D}_t) \in S] = 0.
    \end{equation}
    Therefore, \cref{equation-1} implies that
    \begin{equation}
        1 \leq e^{\epsilon} \times 0 + \delta \implies 1 \leq \delta,
    \end{equation}
    but we assumed that $0 < \delta < 1$, a contradiction.
\end{proof}

\begin{theorem}[Parallel composition]\label{thm:parallel-composition}
    Let $(\mathcal{A}_t)_{t \in I}$ be a sequence of $(\epsilon, \delta)$-DP mechanisms, then their parallel composition is task-wise $(\epsilon, \delta, \simeq_{\text{tw}(1)})$-DP.
\end{theorem}
\begin{proof}
    According to \cite{McSherry10ParallelComposition}, for any sequence of mechanisms $(\mathcal{A}_t)_{t \in I}$, their parallel composition is given by
    \begin{equation}
        \left( \mathcal{A}_t(\cdot \cap \mathcal{V}_t) \right)_{t \in I},
    \end{equation}
    where $\mathcal{V}_t \subseteq \mathcal{X} \times \mathcal{Y}$ are disjoint subsets, such that $\pi_{\mathcal{X}}(\mathcal{V}_i) \cap \pi_{\mathcal{X}}(\mathcal{V}_j) = \varnothing$. That is, no two sets of $\left(\mathcal{V}_t \right)_{t \in I}$ share the same individual's data.

    Let $(\mathcal{D}_t)_{t \in I} \simeq_{\text{tw}(1)} (\mathcal{D}_t')_{t \in I}$, then there exists $t^{*}$ such that $\mathcal{D}_{t^{*}} \simeq \mathcal{D}_{t^{*}}'$ and $\mathcal{D}_t = \mathcal{D}_{t}'$ for all $t \neq t^{*}$. Define the following:
    \begin{equation}
        \mathcal{A}_{\mathrm{par}}\left((\mathcal{D}_t)_{t \in I} \right) = \left( \mathcal{A}_t( \mathcal{D}_t \cap \mathcal{V}_t) \right)_{t \in I}.
    \end{equation}
    Observe that for each $t \neq t^{*}$, $\mathcal{D}_t \cap \mathcal{V}_t = \mathcal{D}_t' \cap \mathcal{V}_t$, and that $\mathcal{D}_{t^{*}} \cap \mathcal{V}_{t^{*}} \simeq \mathcal{D}_{t^{*}}' \cap \mathcal{V}_{t^{*}}$. From the fact that $\mathcal{A}_{t^{*}}$ is $(\epsilon, \delta)$-DP, for all $S_{t^{*}} \subseteq \mathrm{Range}(\mathcal{A}_{t^{*}})$, it holds that
    \begin{equation}
        \mathrm{Pr}\left[\mathcal{A}_{t^{*}}(\mathcal{D}_{t^{*}} \cap \mathcal{V}_{t^{*}}) \in S_{t^{*}} \right] \leq \exp(\epsilon) \times\mathrm{Pr}\left[\mathcal{A}_{t^{*}}(\mathcal{D}_{t^{*}}' \cap \mathcal{V}_{t^{*}}) \in S_{t^{*}}\right] + \delta
    \end{equation}

    Let $S \subseteq \mathrm{Range}(\mathcal{A})$, denote $S_t = \pi_t(S)$, for any $t \in I$. Also, denote 
    \begin{equation}
        S[s_{t^{*}}] = \left\{ (s_t)_{t \in I \setminus \{ t^{*}\}} : \left((s_t)_{t \in I \setminus \{ t^{*}\}}; s_{t^{*}} \right)_{t^{*}} \in S  \right\},
    \end{equation} 
    where `$(;)_t$' means insertion at $t$, e.g. $((a,b);c)_2 = (a, c, b)$. For brevity denote, for any $t$ and any $\mathcal{D}$,
    \begin{equation}
        p_{t}(s_{t}; \mathcal{D}) =  \mathrm{Pr}\left[\mathcal{A}_{t}(\mathcal{D} \cap \mathcal{V}_{t}) = s_{t} \right],
    \end{equation}
    From the law of total probability, and since the mechanisms are applied independently, then
    \begin{equation}\label{eq:probability-decomposition-taskwise}
        \begin{split}
            &\mathrm{Pr}\left[ \mathcal{A}_{\mathrm{par}}\left(\left(\mathcal{D}_t\right)_{t \in I}\right) \in S\right] \\ 
            &= \mathbb{E}_{(s_t)_{t \in I}}\left[ \1_S \left(\mathcal{A}_{\mathrm{par}}\left(\left(\mathcal{D}_t\right)_{t \in I}\right) \right) \right] \\
            &= \mathbb{E}_{s_{t^{*}}} \left[\mathbb{E}_{(s_t)_{t \in I \setminus \{ t^{*}\}}}\left[ \1_{S} \left(\mathcal{A}_{\mathrm{par}}\left(\left(\mathcal{D}_t\right)_{t \in I}\right) \right) \middle \vert \mathcal{A}_{t^{*}}\left(\mathcal{D}_{t^{*}} \cap \mathcal{V}_{t^{*}}\right) = s_{t^{*}} \right] \right] \\
            &= \mathbb{E}_{s_{t^{*}}} \left[\mathbb{E}_{(s_t)_{t \in I \setminus \{ t^{*} \}}}\left[ \1_{S[s_{t^{*}}]} \left(\mathcal{A}_{\mathrm{par}}\left(\left(\mathcal{D}_t\right)_{t \in I \setminus \{ t^{*}\}}\right) \right)\right] \right] \\
            &= \int_{s_{t^{*}} \in S_{t^{*}}} \int_{(s_t)_{t \in I \setminus \{ t^{*}\}} \in S[s_{t^{*}}]} \mathrm{Pr}\left[ \mathcal{A}_{\mathrm{par}}\left(\left(\mathcal{D}_t\right)_{t \in I \setminus \{ t^{*}\}}\right)  = (s_t)_{t \in I \setminus  \{ t^{*} \} }\right]    p_{t^{*}}(s_{t^{*}}; \mathcal{D}_{t^{*}}') \\
            &= \int_{s_{t^{*}} \in S_{t^{*}}} \mathrm{Pr}\left[ \mathcal{A}_{\mathrm{par}}\left(\left(\mathcal{D}_t\right)_{t \in I \setminus \{ t^{*}\}}\right) \in S[s_{t^{*}}]\right]    p_{t^{*}}(s_{t^{*}}; \mathcal{D}_{t^{*}}'),
        \end{split}
    \end{equation}
    where $\1_S[\cdot]$ denotes the indicator function.
    Now observe that for any $s_{t^{*}}$,
    \begin{equation}
         \mathrm{Pr}\left[ \mathcal{A}_{\mathrm{par}}\left(\left(\mathcal{D}_t\right)_{t \in I \setminus \{ t^{*}\}}\right) \in S[s_{t^{*}}]\right] = \min\left(1,  \mathrm{Pr}\left[ \mathcal{A}_{\mathrm{par}}\left(\left(\mathcal{D}_t\right)_{t \in I \setminus \{ t^{*}\}}\right) \in S[s_{t^{*}}]\right] \right).
    \end{equation}
    Since $\gD_t = \gD_t'$ for all $t \neq t^{*}$, then
    \begin{equation}\label{eq:parallel-composition-proof-2}
        \begin{split}
            &  \mathrm{Pr}\left[ \mathcal{A}_{\mathrm{par}}\left(\left(\mathcal{D}_t\right)_{t \in I}\right) \in S\right] \\
            &\quad = \int_{s_{t^{*}} \in S_{t^{*}}}
             \min\left(1,  \mathrm{Pr}\left[ \mathcal{A}_{\mathrm{par}}\left(\left(\mathcal{D}_t\right)_{t \in I \setminus \{ t^{*}\}}\right) \in S[s_{t^{*}}]\right] \right) p_{t^{*}}(s_{t^{*}}; \mathcal{D}_{t^{*}})  \\
            &\quad = \int_{s_{t^{*}} \in S_{t^{*}}}   
             \min\left(1,  \mathrm{Pr}\left[ \mathcal{A}_{\mathrm{par}}\left(\left(\mathcal{D}_t'\right)_{t \in I \setminus \{ t^{*}\}}\right) \in S[s_{t^{*}}]\right] \right) p_{t^{*}}(s_{t^{*}}; \mathcal{D}_{t^{*}}) \\
             &\quad = \int_{s_{t^{*}} \in S_{t^{*}}}   
             \min\left(1,  \mathrm{Pr}\left[ \mathcal{A}_{\mathrm{par}}\left(\left(\mathcal{D}_t'\right)_{t \in I \setminus \{ t^{*}\}}\right) \in S[s_{t^{*}}]\right] \right) \\
             &\quad \quad \quad \quad \quad \quad \quad \times \big(p_{t^{*}}(s_{t^{*}}; \mathcal{D}_{t^{*}})  - \exp(\epsilon)p_{t^{*}}(s_{t^{*}}; \mathcal{D}_{t^{*}}') + \exp(\epsilon) p_{t^{*}}(s_{t^{*}}; \mathcal{D}_{t^{*}}') \big)\\
             &\quad=  \int_{s_{t^{*}} \in S_{t^{*}}}   
             \min\left(1,  \mathrm{Pr}\left[ \mathcal{A}_{\mathrm{par}}\left(\left(\mathcal{D}_t'\right)_{t \in I \setminus \{ t^{*}\}}\right) \in S[s_{t^{*}}]\right] \right) \times \exp(\epsilon) p_{t^{*}}(s_{t^{*}}; \mathcal{D}_{t^{*}}') \\
             &\quad \quad + \int_{s_{t^{*}} \in S_{t^{*}}}   
             \min\left(1,  \mathrm{Pr}\left[ \mathcal{A}_{\mathrm{par}}\left(\left(\mathcal{D}_t'\right)_{t \in I \setminus \{ t^{*}\}}\right) \in S[s_{t^{*}}]\right] \right) \\ 
             &\quad \quad \quad \quad \quad \quad \quad \quad \times \big(p_{t^{*}}(s_{t^{*}}; \mathcal{D}_{t^{*}})  - \exp(\epsilon)p_{t^{*}}(s_{t^{*}}; \mathcal{D}_{t^{*}}') \big)
        \end{split}
    \end{equation}
    Now since for any $a, b$, we have $\min(a, b) \leq a$ and $\min(a, b) \leq b$, then
    \begin{equation}\label{eq:parallel-composition-proof-3}
        \begin{split}
            &  \mathrm{Pr}\left[ \mathcal{A}_{\mathrm{par}}\left(\left(\mathcal{D}_t\right)_{t \in I}\right) \in S\right] \\
            &\quad \leq  \int_{s_{t^{*}} \in S_{t^{*}}}   
              \mathrm{Pr}\left[ \mathcal{A}_{\mathrm{par}}\left(\left(\mathcal{D}_t'\right)_{t \in I \setminus \{ t^{*}\}}\right) \in S[s_{t^{*}}]\right]  \times \exp(\epsilon) p_{t^{*}}(s_{t^{*}}; \mathcal{D}_{t^{*}}') \\
             &\quad \quad + \int_{s_{t^{*}} \in S_{t^{*}}}   \big(p_{t^{*}}(s_{t^{*}}; \mathcal{D}_{t^{*}})  - \exp(\epsilon)p_{t^{*}}(s_{t^{*}}; \mathcal{D}_{t^{*}}') \big) \\
            &\quad= \exp(\epsilon) \mathrm{Pr}\left[ \mathcal{A}_{\mathrm{par}} \left( (\mathcal{D}_t')_{t \in I} \right) \in S\right]  + \int_{s_{t^{*}} \in S_{t^{*}}}   \big(p_{t^{*}}(s_{t^{*}}; \mathcal{D}_{t^{*}})  - \exp(\epsilon)p_{t^{*}}(s_{t^{*}}; \mathcal{D}_{t^{*}}') \big) \\
            &\quad \leq \exp(\epsilon) \mathrm{Pr}\left[ \mathcal{A}_{\mathrm{par}} \left( (\mathcal{D}_t')_{t \in I} \right) \in S\right]  + \delta,
        \end{split}
    \end{equation}
    from the fact that $\mathcal{A}_{t^{*}}$ is $(\epsilon, \delta)$-DP.
    Therefore, $\mathcal{A}_{\mathrm{par}}$ is $(\epsilon, \delta, \simeq_{\text{tw}(1)})$-DP which ends the proof.
\end{proof}

While in the proof of \cref{thm:parallel-composition}, we followed the typical setting in where each mechanism is applied to a disjoint dataset, the proof still holds if the datasets where not disjoint. The standard parallel composition theorem for $(\epsilon, \delta)$-DP requires partitioning the domain into disjoint sets $(\mathcal{V}_t)_{t \in I}$, so that for any dataset $\mathcal{D}$, adding or removing a point will only affect one of adjacent datasets of $(\mathcal{D} \cap \mathcal{V}_t)_{t \in I}$. However, the adjacency relation in task-wise DP ($\simeq_{\text{tw}(1)}$), already imposes this on the adjacent sequences $(\mathcal{D}_t)_{t \in I}$ and $(\mathcal{D}_t')_{t \in I}$, making the partitioning of the domain redundant.

\begin{theorem}\label{thm:task-wise-squential}
    Let $I$ be a set of task indices, and let $T = \lvert I \rvert$, If $\left(\mathcal{A}_t\right)_{t \in I}$ is a sequence of $(\epsilon, \delta)$-DP mechanisms, then their basic sequential composition is $(T \epsilon, T \delta, \simeq_{\text{tw}(T)})$-DP.
\end{theorem}
\begin{proof}
    Let $(\mathcal{D}_t)_{t \in I} \simeq_{\text{tw}(T)} (\mathcal{D}_t')_{t \in I}$, and let $S \subseteq \mathrm{Range}(\mathcal{A})$. We will prove the result by induction on $T$. The case $T = 1$ holds trivially. Now assume that the statement of the theorem holds for $\lvert I \rvert = T - 1$, we will prove that it also holds for $\lvert I^{*} \rvert = T$ where $I^{*} = I \cup \{t^{*}\}$. Denote $S_t = \pi_t(S)$, for any $t \in I$. Similar to the steps in \cref{eq:probability-decomposition-taskwise} in the proof of the previous theorem (\cref{thm:parallel-composition}),
    \begin{equation}\label{eq:task-wise-squential-1}
        \begin{split}
            \mathrm{Pr}\left[ \mathcal{A}\left(\left(\mathcal{D}_t\right)_{t \in I^{*}}\right) \in S\right] 
            &= \int_{s_{t^{*}} \in S_{t^{*}}}  \mathrm{Pr}\left[ \mathcal{A}\left(\left(\mathcal{D}_t\right)_{t \in I}\right) \in S[s_{t^{*}}] \right] \mathrm{Pr}\left[ \mathcal{A}_{t^{*}}\left(\mathcal{D}_{t^{*}}\right) = s_{t^{*}}\right].
        \end{split}
    \end{equation}
    From the induction hypothesis, and since $\min(a, b + c) \leq \min(a, b) + c$ for all $c > 0$,
    \begin{equation}\label{eq:task-wise-squential-2}
        \begin{split}
            \mathrm{Pr}\left[ \mathcal{A}\left(\left(\mathcal{D}_t\right)_{t \in I}\right) \in S[s_{t^{*}}] \right] 
            &= \min\left(1,\mathrm{Pr}\left[ \mathcal{A}\left(\left(\mathcal{D}_t\right)_{t \in I}\right) \in S[s_{t^{*}}] \right] \right) \\
            &\leq \min\left(1, \exp(\lvert I \rvert\epsilon) \mathrm{Pr}\left[ \mathcal{A}\left(\left(\mathcal{D}_t'\right)_{t \in I}\right) \in S[s_{t^{*}}] \right] + \lvert I \rvert \delta\right) \\
            &\leq  \min\left(1, \exp(\lvert I \rvert\epsilon)\mathrm{Pr}\left[ \mathcal{A}\left(\left(\mathcal{D}_t'\right)_{t \in I}\right) \in S[s_{t^{*}}] \right] \right) + \lvert I \rvert \delta.
        \end{split}
    \end{equation}
    From \cref{eq:task-wise-squential-1} and \cref{eq:task-wise-squential-2}, we obtain
    \begin{equation}\label{eq:task-wise-squential-3}
        \begin{split}
             &\mathrm{Pr}\left[ \mathcal{A}\left(\left(\mathcal{D}_t\right)_{t \in I^{*}}\right) \in S\right]  \\
             &\leq 
             \int_{s_{t^{*}} \in S_{t^{*}}} \left(\min\left(1, \exp(\lvert I \rvert\epsilon) \mathrm{Pr}\left[ \mathcal{A}\left(\left(\mathcal{D}_t'\right)_{t \in I}\right) \in S[s_{t^{*}}] \right] \right) + \lvert I \rvert \delta\right) \mathrm{Pr}\left[ \mathcal{A}_{t^{*}}\left(\mathcal{D}_{t^{*}}\right) = s_{t^{*}}\right] \\
             &= \int_{s_{t^{*}} \in S_{t^{*}}} \min\left(1, \exp(\lvert I \rvert\epsilon) \mathrm{Pr}\left[ \mathcal{A}\left(\left(\mathcal{D}_t'\right)_{t \in I}\right) \in S[s_{t^{*}}] \right] \right) \mathrm{Pr}\left[ \mathcal{A}_{t^{*}}\left(\mathcal{D}_{t^{*}}\right) = s_{t^{*}}\right] \\
             &\quad \quad \quad   + \lvert I \rvert \delta \mathrm{Pr}\left[ \mathcal{A}_{t^{*}}\left(\mathcal{D}_{t^{*}}\right) = s_{t^{*}}\right]  \\
             &= \lvert I \rvert \delta +  \int_{s_{t^{*}} \in S_{t^{*}}} \min\left(1, \exp(\lvert I \rvert\epsilon) \mathrm{Pr}\left[ \mathcal{A}\left(\left(\mathcal{D}_t'\right)_{t \in I}\right) \in S[s_{t^{*}}] \right] \right) \mathrm{Pr}\left[ \mathcal{A}_{t^{*}}\left(\mathcal{D}_{t^{*}}\right) = s_{t^{*}}\right] \\
             &\leq \lvert I \rvert \delta +  \int_{s_{t^{*}} \in S_{t^{*}}} \min\left(1, \exp(\lvert I \rvert\epsilon) \mathrm{Pr}\left[ \mathcal{A}\left(\left(\mathcal{D}_t'\right)_{t \in I}\right) \in S[s_{t^{*}}] \right]\right) \\
             &  \quad \quad \quad \times \left( \mathrm{Pr}\left[ \mathcal{A}_{t^{*}}\left(\mathcal{D}_{t^{*}}\right) = s_{t^{*}}\right] - \exp(\epsilon)\mathrm{Pr}\left[  \mathcal{A}_{t^{*}}\left(\mathcal{D}_{t^{*}}'\right) = s_{t^{*}}\right] + \exp(\epsilon)\mathrm{Pr}\left[  \mathcal{A}_{t^{*}}\left(\mathcal{D}_{t^{*}}'\right) = s_{t^{*}}\right]\right).
        \end{split}
    \end{equation}

    Now by applying $\min(a, b) \leq a$ and $\min(a, b) \leq b$, we obtain
    \begin{equation}\label{eq:task-wise-squential-4}
        \begin{split}
            &\mathrm{Pr}\left[ \mathcal{A}\left(\left(\mathcal{D}_t\right)_{t \in I^{*}}\right) \in S\right]   \leq \lvert I \rvert \delta + \exp\left((\lvert I \rvert + 1) \epsilon\right) \mathrm{Pr}\left[ \mathcal{A}\left(\left(\mathcal{D}_t'\right)_{t \in I^{*}}\right) \in S \right] \\
            & \quad \quad + \int_{s_{t^{*}} \in S_{t^{*}}}  \left(\mathrm{Pr}\left[ \mathcal{A}_{t^{*}}\left(\mathcal{D}_{t^{*}}\right) = s_{t^{*}}\right] - \exp(\epsilon)\mathrm{Pr}\left[  \mathcal{A}_{t^{*}}\left(\mathcal{D}_{t^{*}}'\right) = s_{t^{*}}\right] \right),
        \end{split}
    \end{equation}
    and the last integral is less than $\delta$ from the fact that $\mathcal{A}_{t^{*}}$ is $(\epsilon, \delta)$-DP, and hence,
    \begin{equation}
        \mathrm{Pr}\left[ \mathcal{A}\left(\left(\mathcal{D}_t\right)_{t \in I^{*}}\right) \in S\right]  \leq  (\lvert I \rvert + 1) \delta +   \exp\left((\lvert I \rvert + 1)\epsilon\right) \mathrm{Pr}\left[ \mathcal{A}\left(\left(\mathcal{D}_t'\right)_{t \in I^{*}}\right) \in S\right] ,
    \end{equation}
    as required.
\end{proof}

In other words, we can say that for a sequence of $(\epsilon, \delta)$-DP mechanisms $\left( \mathcal{A}_t \right)_{t \in I}$, parallel composition corresponds to task-wise $(\epsilon, \delta, \simeq_{\text{tw}(1)})$-DP, and basic sequential composition corresponds to task-wise $(\lvert I \rvert \epsilon, \lvert I \rvert \delta, \simeq_{\text{tw}(T)})$-DP.

\clearpage
\section{Output Label Space Theory Details (for Sec. \ref{sec:theory_sec_output})}\label{app:theory-details}

\subsection{Proof of \texorpdfstring{\Cref{proposition-labels-release}}{Proposition}}\label{app:proposition-labels-release-proof}
For the following proof, let $\pi_{\mathcal{Y}}: \mathcal{X} \times \mathcal{Y} \rightarrow \mathcal{Y}$ be the label projection function where $\mathcal{Y}$ is the space of all possible labels, i.e. $\pi_{\mathcal{Y}}(\pmb{x}, y) = y$ for any $(\pmb{x},y) \in \mathcal{D}_t$.
\begin{proof}
    We argue by constructing a counterexample. Assume that $\taskwisemechanism_t$ is $(\epsilon, \delta)$-DP for $0 \leq \delta < 1$. Let $\mathcal{D}_t$ and $\mathcal{D}_t'$ be any two datasets that differ in only one example such that $\mathcal{D}_t' = \mathcal{D}_t \cup \{(\pmb{x}^{*}, y^{*}) \}$. If either $\gO_t = \privatelabels_t$ or $\gO_t = \bigcup_{i=1}^{t} \privatelabels_i$, then
    \begin{equation}\label{proposition-label-release-eq-1}
        \pi_{\mathcal{Y}} \left( \mathcal{D}_t \right) \subseteq \gO_t.
    \end{equation}
     Let $\pi_2(\cdot)$ be the mapping that takes an ordered pair as an input and outputs the second item, i.e. $\pi_2(A, B) = B$.
     Denote $\gO_t' = \pi_2(\taskwisemechanism_t(\mathcal{D}_t'))$.
     For the counter example, since $\mathcal{O}_t'$ also contains $\pi_{\mathcal{Y}}\left(\mathcal{D}_t'\right)$, then assume that $y^{*} \in \mathcal{O}_t'$, and assume that $y^{*} \notin \mathcal{O}_t$. 
    Note that $\pi_2(\taskwisemechanism_t(\mathcal{D}_t))$ is a post-processing from $\taskwisemechanism_t(\mathcal{D}_t)$, and thus it is also $(\epsilon, \delta)$-DP. Hence, for any $S \subseteq \privatelabels_t$,
    \begin{equation}\label{proposition-2-equation-1}
        \mathrm{Pr}[\pi_2\left(\taskwisemechanism_t(\mathcal{D}_t')\right) \in S] \leq e^{\epsilon} \mathrm{Pr}[\pi_2\left(\taskwisemechanism_t(\mathcal{D}_t)\right) \in S] + \delta.
    \end{equation}
    Equivalently,
    \begin{equation}\label{proposition-2-equation-2}
        \mathrm{Pr}[\mathcal{O}_t' \in S] \leq e^{\epsilon} \mathrm{Pr}[\mathcal{O}_t \in S] + \delta.
    \end{equation}
    Let $S = \{ \mathcal{O}_t' \}$. Since $y^{*} \notin \mathcal{O}_t$, then $\mathcal{O}_t' \neq \mathcal{O}_t$. This implies that,
    \begin{equation}\label{proposition-2-equation-3}
        \mathrm{Pr}[\mathcal{O}_t' \in S] = 1,
    \end{equation}
    and that
    \begin{equation}\label{proposition-2-equation-4}
        \mathrm{Pr}[\mathcal{O}_t \in S] = 0.
    \end{equation}
    From \cref{proposition-2-equation-2,proposition-2-equation-3,proposition-2-equation-4}, we obtain
    \begin{equation}
        1 \leq e^{\epsilon} \times 0 + \delta \implies 1 \leq \delta,
    \end{equation}
    but we assumed that $0 < \delta < 1$, a contradiction.
\end{proof}

\Cref{proposition-labels-release} can be generalized to any function 
\[\mathcal{U}: \mathcal{X} \times \mathcal{Y} \rightarrow \mathcal{Y},\] 
such that there exists two neighboring datasets $\mathcal{D}_t$ and $\mathcal{D}_t'$ that differ in only one example and $\mathcal{U}(\mathcal{D}_t) \neq \mathcal{U}(\mathcal{D}_t')$. Thus, setting $\mathcal{O}_t = \mathcal{U}(\bigcup_{k = 1}^{t}\mathcal{D}_k)$ or $\mathcal{O}_t = \mathcal{U}(\mathcal{D}_t)$, a counterexample can be constructed similarly as in the proof of \cref{proposition-labels-release}.

\subsection{Proof of \texorpdfstring{\cref{prop:assumed-labels-is-dp}}{Proposition}}\label{app:assumed-labels-is-dp-proof}
For the following proof, for any adjacent dataset $\mathcal{D}_t' = \mathcal{D}_t \bigcup \{(\pmb{x}^{*}, y^{*}) \}$, we assume that there is an associated function $\mathcal{R}^{\text{label}'}_t : \mathcal{O}^{\text{data}'}_t \rightarrow \assumedlabels_t \bigcup \{\mathrm{drop}\}$ that differs from $\mathcal{R}^{\text{label}}_t$ only on the new point $\{(\pmb{x}^{*}, y^{*}) \}$, such that similar to \cref{eq:remapped-dataset} we obtain $\mathcal{R}_t^{\text{task}'}(\mathcal{D}_t')$:
\begin{equation}
    \mathcal{R}_t^{\text{task}'}(\mathcal{D}_t') = \{ (\vx, \mathcal{R}^{\text{label}'}_t(y)) : (\vx, y) \in \mathcal{D}_t' \text{ and } \mathcal{R}^{\text{label}'}_t(y) \neq \mathrm{drop} \}.
\end{equation} 

\begin{proof}
    Define $\mathcal{W}$ to be $(\epsilon, \delta)$-DP mechanism that provides the weights $\vtheta_t$. In other words,
    \begin{equation}
        \vtheta_t := \mathcal{W}(\mathcal{R}_t^{\text{task}}(\mathcal{D}_t); \assumedlabels_t).
    \end{equation}
    To show that using $\mathcal{R}_t^\text{task}(\mathcal{D}_t)$ instead of $\mathcal{D}_t$ does not change the privacy guarantees for $\mathcal{W}$, let $\mathcal{D}_t$ be any set and $\mathcal{D}_t' \simeq \mathcal{D}_t$. Assume without the loss of generality that $\mathcal{D}_t' = \mathcal{D}_t \bigcup \{(\pmb{x}^{*}, y^{*}) \}$, i.e. that $\mathcal{D}_t'$ has an additional point. There are two possibilities for $\mathcal{R}_t^{\text{task}'}(\mathcal{D}_t')$:
    \begin{compactenum}
        \item $\mathcal{R}_t^{\text{task}'}(\mathcal{D}_t') = \mathcal{R}_t^{\text{task}}(\mathcal{D}_t)$ when $\mathcal{R}^{\text{label}'}_t(y^{*}) = \mathrm{drop}$.
        \item $\mathcal{R}_t^{\text{task}'}(\mathcal{D}_t') = \mathcal{R}_t^{\text{task}}(\mathcal{D}_t) \bigcup \{ (\pmb{x}, \mathcal{R}^{\text{label}'}_t(y^{*})) \}$ when $\mathcal{R}^{\text{label}'}_t(y^{*}) \in \assumedlabels_t$.
    \end{compactenum}
    First, if $\mathcal{R}_t^{\text{task}'}(\mathcal{D}_t') = \mathcal{R}_t^{\text{task}}(\mathcal{D}_t)$, then it is trivial that for all $S \subseteq \mathrm{Range}(\mathcal{W})$:
    \begin{equation}
        \mathrm{Pr}\left[ \mathcal{W}(\mathcal{R}_t^{\text{task}}(\mathcal{D}_t); \assumedlabels_t) \in S \right] \leq \exp(\epsilon) \times \mathrm{Pr}\left[ \mathcal{W}(\mathcal{R}_t^{\text{task}'}(\mathcal{D}_t'); \assumedlabels_t) \in S \right] + \delta.
    \end{equation}
    Second, if $\mathcal{R}_t^{\text{task}'}(\mathcal{D}_t') = \mathcal{R}_t^{\text{task}}(\mathcal{D}_t) \bigcup \{ (\pmb{x}^{*}, \mathcal{R}^{\text{label}}_t(y^{*})) \}$, then since $\mathcal{W}$ is an $(\epsilon, \delta)$-DP mechanism, setting $\mathcal{D} = \mathcal{R}_t^{\text{task}}(\mathcal{D}_t)$ and $\mathcal{D}' = \mathcal{R}_t^{\text{task}'}(\mathcal{D}_t')$ in \cref{def:DP}, we obtain
    \begin{equation}
        \mathrm{Pr}\left[ \mathcal{W}(\mathcal{R}_t^{\text{task}}(\mathcal{D}_t); \assumedlabels_t) \in S \right] \leq \exp(\epsilon) \times \mathrm{Pr}\left[ \mathcal{W}(\mathcal{R}_t^{\text{task}'}(\mathcal{D}_t'); \assumedlabels_t) \in S \right] + \delta.
    \end{equation}
    Therefore, in both cases, using $\mathcal{R}_t^{\text{task}}(\mathcal{D}_t)$ does not change the privacy guarantees of $\mathcal{W}$. Finally, since $\mathcal{W}$ is $(\epsilon, \delta)$-DP, then the mechanism
    \begin{equation}
        \taskwisemechanism_t(\mathcal{R}_t^{\text{task}}(\mathcal{D}_t); \assumedlabels_t) \mapsto  \left( \mathcal{W}( \mathcal{R}_t^{\text{task}}(\mathcal{D}_t); \assumedlabels_t) , \assumedlabels_t \right)
    \end{equation}
    is a post-processing from $\mathcal{W}$ using only additional public information ($\assumedlabels_t$), and thus is also $(\epsilon, \delta)$-DP.
\end{proof}

\subsection{DP Methods Protecting the Output Label Space}\label{app:label-release-mechanism-delta-bound-proof}

When protecting the set of labels by any DP mechanism, we want to quantify how likely it is to drop a label. To accomplish this, we compute the smallest possible lower bound on the probability of dropping a label, depending on the size of the class. This bound does not depend on any particular mechanism. 

Suppose $y$ appears in $k$ examples of $\mathcal{D}_t$. Dropping $y$ entails dropping all the associated examples from the dataset. From group DP, if a mechanism $\mathcal{L}$ is $(\epsilon, \delta)$-DP when adjacent datasets $\mathcal{D}_t$ and $\mathcal{D}_t'$ differ by at most one example, then it is $\left(k \epsilon, \delta_k\right)$-DP, when the adjacent datasets differ by at most $k$ examples, where 
\begin{equation}
    \delta_k = \sum_{i = 0}^{k - 1} \exp(i \epsilon)\delta = \frac{\exp(k\epsilon) - 1}{\exp(\epsilon) - 1}\delta.    
\end{equation}
Let $\mathcal{D}_t'$ be the dataset without the class $y$, then \(\mathrm{Pr}\left[ y \in \mathcal{L}(\mathcal{D}_t) \right] \leq \exp(k \epsilon) \mathrm{Pr}\left[ y \in \mathcal{L}(\mathcal{D}_t') \right] + \delta_k\).
However, $\mathrm{Pr}\left[ y \in \mathcal{L}(\mathcal{D}_t') \right] = 0$, since $y$ is not in $\mathcal{D}_t'$. Hence, $\mathrm{Pr}\left[ y \in \mathcal{L}(\mathcal{D}_t) \right] \leq \delta_k$. Therefore, $\mathrm{Pr}\left[ y \notin \mathcal{L}(\mathcal{D}_t) \right] \geq 1 - \delta_k$. In other words, the probability of dropping the class is at least $1 - \delta_k$ as depicted in \cref{fig:class-lost-lower-bound-2} for different values of $k$ (class size), $\epsilon$, and $\delta$. 
\begin{figure}
    \centering
    \includegraphics{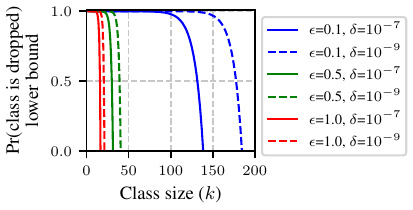}
    \caption{Lower bound of the probability that a new label is not added to the output label space $\gO_t$. Even with $\epsilon=1.0$ and $\delta=10^{-7}$, classes having fewer than $13$ samples are discarded with at least 99\% probability, and thus cannot be learned.}
        \label{fig:class-lost-lower-bound-2}
\end{figure}
To motivate our generalized method for protecting the labels, we partition a dataset $\mathcal{D}$ into subsets $\mathcal{D}_y = \{ (x, y) : (x, y) \in \mathcal{D} \} $ according to the label $y$, and consider the following binary mechanism based on the Laplace mechanism $\mathcal{B}_{\mathrm{Lap}}(\mathcal{D}_y) :=  \lvert \mathcal{D}_y \rvert + \mathrm{Lap}\left(\frac{1}{\epsilon}\right) > \tau $, where $\lvert \cdot \rvert$ denotes the size of a set, $\mathrm{Lap}$ is the Laplace distribution, and $\tau$ is a threshold. The output of $\mathcal{B}_{\mathrm{Lap}}$ is either $\mathrm{true}$ or $\mathrm{false}$. It is known that the Laplace mechanism is $\epsilon$-DP \citep{dwork_algorithmic_2014} and $\mathcal{B}_{\mathrm{Lap}}$ is just a post-processing of it. We can use the binary mechanism $\mathcal{B}_{\mathrm{Lap}}$ to test whether we can add any $y$ from $\privatelabels_t$ to $\learnedlabels_t$. This mechanism is $(\epsilon, \delta)$-DP with a proper $\delta$. We generalize this to any binary mechanism $\mathcal{B}$. First, we argue that there is no data-dependent $\epsilon$-DP mechanism that can protect the labels.

\begin{proposition}\label{prop:label-release-is-nondp}
    Given a dataset $\mathcal{D} = \{ (\pmb{x}_i, y_i) \}_{i = 1}^{N}$, denote $\mathcal{O}^{\text{data}}(\mathcal{D}) = \{y : (x, y) \in \mathcal{D}\}$, then there is no $\epsilon$-DP mechanism $\mathcal{L}$ that can protect the label space such that:
    \begin{equation}
        \mathrm{Pr}\left[\mathcal{L}\left(\mathcal{D}\right) \subseteq \mathcal{O}^{\text{data}}(\mathcal{D})\right] = 1 \text{ and } \mathrm{Pr}\left[\mathcal{L}\left(\mathcal{D}\right) \neq \varnothing \right] = 1.
    \end{equation}
\end{proposition}
\begin{proof}
    We prove this by constructing a counter example, assuming the negation of the theorem. Let $\mathcal{D}' = \mathcal{D} \cup \{ (\vx^{*}, y^{*}) \}$ such that $y^{*} \notin \mathcal{O}^{\text{data}}(\mathcal{D})$. Therefore, $\mathcal{O}^{\text{data}}(\mathcal{D}') = \mathcal{O}^{\text{data}}(\mathcal{D}) \bigcup \{y^{*}\}$. From the statement of the theorem:
    \begin{equation}
        \mathrm{Pr}\left[\mathcal{L}\left(\mathcal{D}\right) \subseteq \mathcal{O}^{\text{data}}(\mathcal{D})\right] = 1,
    \end{equation}
    which implies that, 
    \begin{equation}\label{eq:nondp-thm-failure-case}
        \mathrm{Pr}\left[y^{*} \in \mathcal{L}\left(\mathcal{D}\right)\right] = 0.
    \end{equation}
    Let $\mathcal{S}^{*} = \{ \mathcal{O} \subseteq  \mathcal{O}^{\text{data}}(\mathcal{D}') : y^{*} \in \mathcal{O} \}$, then from \Cref{eq:nondp-thm-failure-case}, 
    \begin{equation}
        \mathrm{Pr}\left[\mathcal{L}\left(\mathcal{D}\right) = \mathcal{O} \right] = 0,
    \end{equation}
    for all $\mathcal{O} \in \mathcal{S}^{*}$. However, we require in the counterexample that for all $\mathcal{O} \in \mathcal{S}^{*}$.
    \begin{equation}
        \mathrm{Pr}\left[\mathcal{L}\left(\mathcal{D}'\right) = \mathcal{O} \right] > 0.
    \end{equation}
    By applying the definition of $\epsilon$-DP,
    \begin{align}
        \begin{split}
            & \mathrm{Pr}\left[\mathcal{L}\left(\mathcal{D}'\right) \in \mathcal{S}^{*} \right] \leq \exp(\epsilon) \mathrm{Pr}\left[\mathcal{L}\left(\mathcal{D}\right) \in \mathcal{S}^{*} \right]    \\
            \iff & \mathrm{Pr}\left[\mathcal{L}\left(\mathcal{D}'\right) \in \mathcal{S}^{*} \right] \leq \exp(\epsilon) \times 0 \\
            \iff & \mathrm{Pr}\left[\mathcal{L}\left(\mathcal{D}'\right) \in \mathcal{S}^{*} \right] \leq 0,
        \end{split}
    \end{align}
    which implies that $0 < \mathrm{Pr}\left[\mathcal{L}\left(\mathcal{D}'\right) = \mathcal{O} \right] \leq 0$, a contradiction.
\end{proof}

According to \Cref{prop:label-release-is-nondp}, we cannot have an $\epsilon$-DP mechanism protecting the set of labels without knowing a superset of all possible labels, which defeats the purpose of searching for such mechanism in the first place. However, we can instead have a binary $\epsilon$-DP mechanism that outputs either $\mathrm{true}$ or $\mathrm{false}$, split the dataset into disjoint sets $\mathcal{D}_{y} = \{ (\pmb{x}, y) : (\pmb{x}, y) \in \mathcal{D} \}$ for all $y \in \mathcal{O}^{\text{data}}(\mathcal{D})$, apply the mechanism to each disjoint dataset to decide whether to include the label or not, and finally apply composition to get the privacy guarantees. The final composition is still not $\epsilon$-DP according to \Cref{prop:label-release-is-nondp} but we can make it $(\epsilon, \delta)$-DP with an appropriate choice of $\delta$.

\begin{definition}\label{def:label-release-mechanism}
    Let $\mathcal{B}$ be any binary $\epsilon$-DP mechanism, i.e., outputs either $\mathrm{true}$ or $\mathrm{false}$ given any set. Let $\mathcal{D} = \{ (\pmb{x}_i, y_i) \}_{i = 1}^{N}$ be any dataset, and denote $\mathcal{D}_{y} = \{ (\pmb{x}, y) : (\pmb{x}, y) \in \mathcal{D} \}$. We define the associated label release mechanism $\mathcal{L}_{\mathcal{B}}$, as the following mechanism:
    \begin{equation}\label{eq:label-release-mechanism-def}
        \mathcal{L}_{\mathcal{B}}\left(\mathcal{D}\right) := \left\{ y \in \mathcal{O}^{\text{data}} : \mathcal{B}(\mathcal{D}_{y}) = \mathrm{true} \right\}.
    \end{equation}
\end{definition}

Now we introduce the tight bound on $\delta$ for the label release mechanism in the following theorem.

\begin{theorem}\label{thm:label-release-mechanism-delta-bound}
    The mechanism $\mathcal{L}_{\mathcal{B}}$ in \Cref{def:label-release-mechanism} is $(\epsilon, \delta)$-DP such that
    \begin{equation}\label{eq:delta-label-release-mechanism}
            \delta \ge \max_{\pmb{x}, y} \max\left(\frac{1 }{1 - \mathrm{Pr} \left[ \mathcal{B}\left( \left\{  (\pmb{x}, y) \right\} \right) = \mathrm{true} \right]} -  \exp(\epsilon) , \mathrm{Pr} \left[ \mathcal{B}\left( \left\{  (\pmb{x}, y) \right\} \right) = \mathrm{true} \right] \right).
    \end{equation}
\end{theorem}

To prove \cref{thm:label-release-mechanism-delta-bound}, we first need the following lemmas.

\begin{lemma}\label{lem:label-release-mechanism}
    Let $\mathcal{O} \subseteq \mathcal{O}^{\text{data}}$, then the probability of having $\mathcal{O}$ as an output for $\mathcal{L}_{\mathcal{B}}$ is given by
    \begin{equation}\label{eq:label-release-mechanism-lem}
        \mathrm{Pr} \left[ \mathcal{L}_{\mathcal{B}}\left(\mathcal{D} \right) = \mathcal{O} \right] = \prod_{y \in \mathcal{O}} \mathrm{Pr} \left[ \mathcal{B}\left(\mathcal{D}_y\right) = \mathrm{true} \right] \times \prod_{y \in \mathcal{O}^{\text{data}} \setminus \mathcal{O}} \mathrm{Pr} \left[ \mathcal{B}\left(\mathcal{D}_y\right) = \mathrm{false} \right]
    \end{equation}
\end{lemma}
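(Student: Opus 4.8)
The plan is to unfold the definition of $\mathcal{L}_{\mathcal{B}}$, rewrite the event $\{\mathcal{L}_{\mathcal{B}}(\mathcal{D}) = \mathcal{O}\}$ as an intersection of one event per label, and then factor the probability using the independence of the randomness that $\mathcal{B}$ consumes on the disjoint blocks of the partition. First I would recall from \Cref{def:label-release-mechanism} that $\mathcal{L}_{\mathcal{B}}(\mathcal{D}) = \{y \in \mathcal{O}^{\text{data}} : \mathcal{B}(\mathcal{D}_y) = \mathrm{true}\}$, where $\{\mathcal{D}_y\}_{y \in \mathcal{O}^{\text{data}}}$ partitions $\mathcal{D}$ by label and each $\mathcal{D}_y$ is nonempty precisely because $y \in \mathcal{O}^{\text{data}}$. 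Since the output set is a deterministic function of the Boolean tuple $(\mathcal{B}(\mathcal{D}_y))_{y \in \mathcal{O}^{\text{data}}}$ and $\mathcal{O} \subseteq \mathcal{O}^{\text{data}}$, a label $y \in \mathcal{O}^{\text{data}}$ lies in $\mathcal{L}_{\mathcal{B}}(\mathcal{D})$ iff $\mathcal{B}(\mathcal{D}_y) = \mathrm{true}$; hence the output equals $\mathcal{O}$ exactly when $\mathcal{B}$ returns $\mathrm{true}$ on the blocks indexed by $\mathcal{O}$ and $\mathrm{false}$ on the blocks indexed by $\mathcal{O}^{\text{data}} \setminus \mathcal{O}$. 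In symbols, the event $\{\mathcal{L}_{\mathcal{B}}(\mathcal{D}) = \mathcal{O}\}$ coincides with
\[
\Big(\bigcap_{y \in \mathcal{O}} \{\mathcal{B}(\mathcal{D}_y) = \mathrm{true}\}\Big) \;\cap\; \Big(\bigcap_{y \in \mathcal{O}^{\text{data}} \setminus \mathcal{O}} \{\mathcal{B}(\mathcal{D}_y) = \mathrm{false}\}\Big).
\]

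Next I would invoke that $\mathcal{L}_{\mathcal{B}}$ runs an \emph{independent} copy of $\mathcal{B}$ on each block $\mathcal{D}_y$ of the partition; this is exactly the structure that underlies the parallel-composition argument used throughout \cref{app:label-release-mechanism-delta-bound-proof}. Consequently the family of events $\{\mathcal{B}(\mathcal{D}_y) = b_y\}_{y \in \mathcal{O}^{\text{data}}}$ (for any choice of $b_y \in \{\mathrm{true},\mathrm{false}\}$) is mutually independent, so the probability of the intersection above factors into a product over $y$, yielding precisely \cref{eq:label-release-mechanism-lem}.

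I expect the only point that needs care is the independence hypothesis: the claimed product identity fails if $\mathcal{B}$ reuses randomness across different $\mathcal{D}_y$, so I would state explicitly that $\mathcal{L}_{\mathcal{B}}$ is implemented with fresh per-block randomness (consistent with the parallel-composition viewpoint). The remaining bookkeeping is minor — namely that $\mathcal{O}^{\text{data}} \setminus \mathcal{O}$ is exactly the set of labels present in $\mathcal{D}$ but absent from $\mathcal{O}$, so no label outside $\mathcal{O}^{\text{data}}$ can appear in the output and every factor in both products is well-defined — and everything else is a one-line application of independence.
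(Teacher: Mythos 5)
Your proposal is correct and follows essentially the same route as the paper's proof: rewrite the event $\{\mathcal{L}_{\mathcal{B}}(\mathcal{D}) = \mathcal{O}\}$ as the conjunction of the per-label events $\{\mathcal{B}(\mathcal{D}_y) = \mathrm{true}\}$ for $y \in \mathcal{O}$ and $\{\mathcal{B}(\mathcal{D}_y) = \mathrm{false}\}$ for $y \in \mathcal{O}^{\text{data}} \setminus \mathcal{O}$, then factor by independence. Your explicit remark that the factorisation requires $\mathcal{B}$ to be run with fresh, independent randomness on each block $\mathcal{D}_y$ is a welcome clarification of a hypothesis the paper invokes only implicitly with the phrase ``the result follows from independence.''
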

\begin{proof}
    By applying basic definitions and rules of probability, 
    \begin{equation}
        \mathrm{Pr} \left[ \mathcal{L}_{\mathcal{B}}\left(\mathcal{D} \right) = \mathcal{O} \right] = \mathrm{Pr} \left[ \bigwedge\limits_{y \in \mathcal{O}} \left( \mathcal{B}\left(\mathcal{D}_y\right) = \mathrm{true}\right) \wedge \bigwedge\limits_{y \in \mathcal{O}^{\text{data}} \setminus \mathcal{O}}\left( \mathcal{B}\left(\mathcal{D}_y\right) = \mathrm{false} \right) \right],
    \end{equation}
    where $\wedge$ and $\bigwedge$ denote logical conjunction. Hence, the result follows from independence. We will sometimes denote $\mathcal{O}^{\text{data}}\left(\mathcal{D}\right) \setminus \mathcal{O}$ as $\mathcal{O}^{c}$ when $\mathcal{D}$ is known from the context.
\end{proof}

\begin{lemma}\label{lem:count-laplace-mechanism}
    For any two neighboring datasets $\mathcal{D} \sim \mathcal{D}'$ (by the add/remove adjacency relation), and for any $y \in \mathcal{O}^{\text{data}}\left(\mathcal{D}\right) \cap \mathcal{O}^{\text{data}}\left(\mathcal{D}'\right)$, we have
    \begin{equation}\label{eq:count-laplace-mechanism-lem-above-threshold}
        \mathrm{Pr}\left[ \mathcal{B}\left(\mathcal{D}_y\right) = \mathrm{true} \right] \leq \exp(\epsilon) \mathrm{Pr}\left[ \mathcal{B}\left(\mathcal{D}'_y\right) = \mathrm{true} \right],
    \end{equation}
    and
    \begin{equation}\label{eq:count-laplace-mechanism-lem-below-threshold}
        \mathrm{Pr}\left[ \mathcal{B}\left(\mathcal{D}_y\right) = \mathrm{false} \right] \leq \exp(\epsilon) \mathrm{Pr}\left[ \mathcal{B}\left(\mathcal{D}'_y\right) = \mathrm{false} \right].
    \end{equation}
\end{lemma}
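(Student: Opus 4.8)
The plan is to reduce both claimed inequalities to a single application of the definition of $\epsilon$-DP for $\mathcal{B}$, after first checking that restricting each dataset to the samples carrying label $y$ maps neighboring datasets to neighboring-or-equal datasets.

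First I would fix the adjacency without loss of generality: write $\mathcal{D}' = \mathcal{D} \cup \{(\pmb{x}^{*}, y^{*})\}$ for the add/remove relation (the removal case is obtained by swapping the roles of $\mathcal{D}$ and $\mathcal{D}'$). Then, for the fixed label $y$, I split into two cases. If $y^{*} \neq y$, no sample with label $y$ is added, so $\mathcal{D}'_y = \mathcal{D}_y$ and both inequalities hold trivially since $\exp(\epsilon) \geq 1$. If $y^{*} = y$, then $\mathcal{D}'_y = \mathcal{D}_y \cup \{(\pmb{x}^{*}, y)\}$, so $\mathcal{D}_y$ and $\mathcal{D}'_y$ differ in exactly one element, i.e.\ $\mathcal{D}_y \sim \mathcal{D}'_y$ under the same add/remove adjacency. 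The hypothesis $y \in \mathcal{O}^{\text{data}}(\mathcal{D}) \cap \mathcal{O}^{\text{data}}(\mathcal{D}')$ guarantees that neither $\mathcal{D}_y$ nor $\mathcal{D}'_y$ is empty, so this is a genuine pair of neighboring inputs in the domain of $\mathcal{B}$.

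Second, with $\mathcal{D}_y \sim \mathcal{D}'_y$ in hand, I would invoke that $\mathcal{B}$ is $\epsilon$-DP, i.e.\ $(\epsilon, 0)$-DP in the sense of \cref{def:DP}. Since $\mathrm{Range}(\mathcal{B}) = \{\mathrm{true}, \mathrm{false}\}$ is finite, every subset is an admissible event: applying the DP inequality with $S = \{\mathrm{true}\}$ yields \cref{eq:count-laplace-mechanism-lem-above-threshold} and applying it with $S = \{\mathrm{false}\}$ yields \cref{eq:count-laplace-mechanism-lem-below-threshold}. Here I would note explicitly that $\epsilon$-DP is a statement over ordered neighboring pairs and that add/remove adjacency is symmetric, so the bound is available in the required direction no matter which of $\mathcal{D}$, $\mathcal{D}'$ carries the extra sample.

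There is essentially no obstacle; the only point requiring care is the case split and the observation that restricting the dataset to a single label preserves (or collapses) the adjacency relation — which is precisely the structural fact that lets parallel composition be applied over the disjoint per-label subsets $\mathcal{D}_y$ in \cref{def:label-release-mechanism}. The argument is insensitive to the specific choice of binary $\epsilon$-DP mechanism $\mathcal{B}$, and would go through verbatim for other symmetric adjacency notions provided restriction to label $y$ still sends neighbors to neighbors-or-equal.
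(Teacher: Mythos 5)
Your proof is correct and takes essentially the same route as the paper, which simply notes that the result "follows from the fact that $\mathcal{B}$ is $\epsilon$-DP." Your case split on whether the differing sample carries label $y$ (yielding $\mathcal{D}_y = \mathcal{D}'_y$ or $\mathcal{D}_y \sim \mathcal{D}'_y$) and the application of the DP inequality to $S=\{\mathrm{true}\}$ and $S=\{\mathrm{false}\}$ merely make explicit the details the paper leaves implicit.
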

\begin{proof}
    The proof follows from the fact that $\mathcal{B}$ is $\epsilon$-DP.
\end{proof}

The following is the proof of \cref{thm:label-release-mechanism-delta-bound}.
\begin{proof}
    Assume without loss of generality, that $\mathcal{D}' = \mathcal{D} \bigcup \{ (\pmb{x}^{*}, y^{*}) \}$ such that $\pmb{x}^{*} \notin \pi_{\mathcal{X}} (\mathcal{D})$. First, note that an output of the mechanism $\mathcal{L}_{\mathcal{B}}$ is a subset of $\mathcal{O}^{\text{data}}$, so we need to consider any $\mathcal{S} \subseteq \mathcal{P}\left(\mathcal{O}^{\text{data}} \right)$ where $\mathcal{P}(\cdot)$ denotes the power set. We need to consider two cases for $y^{*}$:    
    \begin{enumerate}
        \item First, when $y^{*} \in \mathcal{O}^{\text{data}}\left(\mathcal{D} \right)$, and thus $\mathcal{O}^{\text{data}}\left(\mathcal{D}'\right) = \mathcal{O}^{\text{data}}\left(\mathcal{D}\right)$. Consider the following,
        \begin{align}\label{eq:theorem-first-case}
            \begin{split}
                \mathrm{Pr}\left[ \mathcal{L}_{\mathcal{B}}\left(\mathcal{D}\right) \in \mathcal{S} \right] &= \sum_{\mathcal{O} \in \mathcal{S}} \mathrm{Pr} \left[ \mathcal{L}_{\mathcal{B}}\left(\mathcal{D}\right) = \mathcal{O} \right] \\
                &= \sum_{\mathcal{O} \in \mathcal{S}} \left(\prod_{y \in \mathcal{O}} \mathrm{Pr} \left[ \mathcal{B}\left(\mathcal{D}_y\right) = \mathrm{true} \right] \times \prod_{y \in \mathcal{O}^{\text{data}}(\mathcal{D}) \setminus \mathcal{O}} \mathrm{Pr} \left[ \mathcal{B}\left(\mathcal{D}_y\right) = \mathrm{false} \right] \right),
            \end{split}
        \end{align}
        by applying \Cref{lem:label-release-mechanism}. Since $\mathcal{O}^{\text{data}}\left(\mathcal{D}'\right) = \mathcal{O}^{\text{data}}\left(\mathcal{D}\right)$, then for each $\mathcal{O} \in \mathcal{S}$, if $y^{*} \notin \mathcal{O}$, then 
        \begin{equation}
            \mathrm{Pr} \left[ \mathcal{B}\left(\mathcal{D}_y\right) = \mathrm{true} \right] = \mathrm{Pr} \left[ \mathcal{B}\left(\mathcal{D}'_y\right) = \mathrm{true} \right],
        \end{equation}
        for all $y \in \mathcal{O}$. Similarly, if $y^{*} \notin \mathcal{O}^{\text{data}}(\mathcal{D}') \setminus \mathcal{O}$, then 
        \begin{equation}
            \mathrm{Pr} \left[ \mathcal{B}\left(\mathcal{D}_y\right) = \mathrm{false} \right] = \mathrm{Pr} \left[ \mathcal{B}\left(\mathcal{D}'_y\right) = \mathrm{false} \right],
        \end{equation}
        for all $y \in \mathcal{O}^{\text{data}}(\mathcal{D}') \setminus \mathcal{O}$. However, if $y^{*} \in \mathcal{O}$ then, by applying \Cref{lem:count-laplace-mechanism},
        \begin{equation}
            \mathrm{Pr} \left[ \mathcal{B}\left(\mathcal{D}_{y^{*}}\right) = \mathrm{true} \right] \leq \exp(\epsilon) \mathrm{Pr} \left[ \mathcal{B}\left(\mathcal{D}'_{y^{*}}\right) = \mathrm{true} \right],
        \end{equation}
        which implies that 
        \begin{align}\label{eq:product-of-count-lap-above-threshold}
            \begin{split}
                \prod_{y \in \mathcal{O}} \mathrm{Pr} \left[ \mathcal{B}\left(\mathcal{D}_y\right) = \mathrm{true} \right] &=  \prod_{y \in \mathcal{O}\setminus \{y^{*}\}} \mathrm{Pr} \left[ \mathcal{B}\left(\mathcal{D}_y\right) = \mathrm{true} \right] \times \mathrm{Pr} \left[ \mathcal{B}\left(\mathcal{D}_{y^{*}}\right) = \mathrm{true} \right] \\
                &\leq \prod_{y \in \mathcal{O}\setminus \{y^{*}\}} \mathrm{Pr} \left[ \mathcal{B}\left(\mathcal{D}'_y\right) = \mathrm{true} \right] \times \exp(\epsilon) \mathrm{Pr} \left[ \mathcal{B}\left(\mathcal{D}'_{y^{*}}\right) = \mathrm{true} \right] \\
                &\leq \exp(\epsilon)  \prod_{y \in \mathcal{O}\setminus \{y^{*}\}} \mathrm{Pr} \left[ \mathcal{B}\left(\mathcal{D}'_y\right) = \mathrm{true} \right] \times \mathrm{Pr} \left[ \mathcal{B}\left(\mathcal{D}'_{y^{*}}\right) = \mathrm{true} \right] \\
                &= \exp(\epsilon) \prod_{y \in \mathcal{O}} \mathrm{Pr} \left[ \mathcal{B}\left(\mathcal{D}'_y\right) = \mathrm{true} \right].
            \end{split}
        \end{align}
        On the other hand, if $y^{*} \in \mathcal{O}^{\text{data}}\left(\mathcal{D}\right) \setminus \mathcal{O}$, then by also applying \Cref{lem:count-laplace-mechanism},
        \begin{equation}
            \mathrm{Pr} \left[ \mathcal{B}\left(\mathcal{D}_{y^{*}}\right) = \mathrm{false} \right] \leq \exp(\epsilon) \mathrm{Pr} \left[ \mathcal{B}\left(\mathcal{D}'_{y^{*}}\right) = \mathrm{false} \right],
        \end{equation}
        and by following similar steps as \Cref{eq:product-of-count-lap-above-threshold} combined with the fact that $\mathcal{O}^{\text{data}}\left(\mathcal{D}\right)  = \mathcal{O}^{\text{data}}\left(\mathcal{D}'\right) $, we obtain
        \begin{equation}\label{eq:product-of-count-lap-below-threshold}
            \prod_{y \in \mathcal{O}^{\text{data}}(\mathcal{D}) \setminus \mathcal{O}} \mathrm{Pr} \left[ \mathcal{B}\left(\mathcal{D}_y\right) = \mathrm{false} \right] \leq \exp(\epsilon)  \prod_{y \in \mathcal{O}^{\text{data}}(\mathcal{D}') \setminus \mathcal{O}} \mathrm{Pr} \left[ \mathcal{B}\left(\mathcal{D}'_y\right) = \mathrm{false} \right].
        \end{equation}
        Combining \Cref{eq:theorem-first-case}, \Cref{eq:product-of-count-lap-above-threshold}, and \Cref{eq:product-of-count-lap-below-threshold}, and using \Cref{lem:label-release-mechanism} again, we obtain
        \begin{align}
            \begin{split}
                \mathrm{Pr}\left[ \mathcal{L}_{\mathcal{B}}\left(\mathcal{D}\right) \in \mathcal{S} \right] &\leq \sum_{\mathcal{O} \in \mathcal{S}} \exp(\epsilon) \mathrm{Pr} \left[ \mathcal{L}_{\mathcal{B}}\left(\mathcal{D}'\right) = \mathcal{O} \right] \\
                &= \exp(\epsilon)\sum_{\mathcal{O} \in \mathcal{S}}  \mathrm{Pr} \left[ \mathcal{L}_{\mathcal{B}}\left(\mathcal{D}'\right) = \mathcal{O} \right] \\
                &=\exp(\epsilon)\mathrm{Pr}\left[ \mathcal{L}_{\mathcal{B}}\left(\mathcal{D}'\right) \in \mathcal{S} \right]\\
                &\leq \exp(\epsilon)\mathrm{Pr}\left[ \mathcal{L}_{\mathcal{B}}\left(\mathcal{D}'\right) \in \mathcal{S} \right]+ \delta,
            \end{split}
        \end{align}
        for any $\delta > 0$, including the one in the statement of the theorem. A similar argument can also be applied to prove that
        \begin{equation}
            \mathrm{Pr}\left[ \mathcal{L}_{\mathcal{B}}\left(\mathcal{D}'\right) \in \mathcal{S} \right] \leq \exp(\epsilon)\mathrm{Pr}\left[ \mathcal{L}_{\mathcal{B}}\left(\mathcal{D}\right) \in \mathcal{S} \right]+ \delta.
        \end{equation}

        \item Second, when $y^{*} \notin \mathcal{O}^{\text{data}}\left(\mathcal{D}\right)$, then $\mathcal{O}^{\text{data}}\left(\mathcal{D}'\right) = \mathcal{O}^{\text{data}}\left(\mathcal{D}\right) \cup \{ y^{*} \}$. For any $\mathcal{O} \in \mathcal{S} \subseteq \mathcal{P}\left(\mathcal{O}^{\text{data}}(\mathcal{D})\right)$, it is obvious that $y^{*} \notin \mathcal{O}$ and $y^{*} \notin \mathcal{O}^{\text{data}}(\mathcal{D}) \setminus \mathcal{O}$. Since the range of $\mathcal{L}_{\mathcal{B}}$ is data-dependent, we must consider which of the datasets ($\mathcal{D}, \mathcal{D}'$) we are using. If we take any $\mathcal{S} \subseteq \mathcal{P}\left( \mathcal{O}^{\text{data}} (\mathcal{D}) \right)$, then by \Cref{lem:label-release-mechanism}:
        \begin{align}\label{eq:relation-between-probs-when-different-label}
            \begin{split}
                \mathrm{Pr}\left[ \mathcal{L}_{\mathcal{B}}\left(\mathcal{D}\right) \in \mathcal{S} \right] &= \sum_{\mathcal{O} \in \mathcal{S}} \mathrm{Pr} \left[ \mathcal{L}_{\mathcal{B}}\left(\mathcal{D}\right) = \mathcal{O} \right] \\
                &= \sum_{\mathcal{O} \in \mathcal{S}} \left(\prod_{y \in \mathcal{O}} \mathrm{Pr} \left[ \mathcal{B}\left(\mathcal{D}_y\right) = \mathrm{true} \right] \times \prod_{y \in \mathcal{O}^{\text{data}}(\mathcal{D}) \setminus \mathcal{O}} \mathrm{Pr} \left[ \mathcal{B}\left(\mathcal{D}_y\right) = \mathrm{false} \right] \right) \\
                &= \sum_{\mathcal{O} \in \mathcal{S}} \left(\prod_{y \in \mathcal{O}} \mathrm{Pr} \left[ \mathcal{B}\left(\mathcal{D}'_y\right) = \mathrm{true} \right] \times \prod_{y \in \mathcal{O}^{\text{data}}(\mathcal{D}) \setminus \mathcal{O}} \mathrm{Pr} \left[ \mathcal{B}\left(\mathcal{D}'_y\right) = \mathrm{false} \right] \right) \\
                &= \sum_{\mathcal{O} \in \mathcal{S}} \left(\prod_{y \in \mathcal{O}} \mathrm{Pr} \left[ \mathcal{B}\left(\mathcal{D}'_y\right) = \mathrm{true} \right] \times \prod_{y \in \mathcal{O}^{\text{data}}(\mathcal{D}) \setminus \mathcal{O}} \mathrm{Pr} \left[ \mathcal{B}\left(\mathcal{D}'_y\right) = \mathrm{false} \right] \right) \\ 
                &\quad \times \frac{\mathrm{Pr} \left[ \mathcal{B}\left(\mathcal{D}'_{y^{*}}\right) = \mathrm{false} \right]}{\mathrm{Pr} \left[ \mathcal{B}\left(\mathcal{D}'_{y^{*}}\right) = \mathrm{false} \right]} \\
                &= \sum_{\mathcal{O} \in \mathcal{S}} \Bigg( \prod_{y \in \mathcal{O}} \mathrm{Pr} \left[ \mathcal{B}\left(\mathcal{D}'_y\right) = \mathrm{true} \right] \times \prod_{y \in \mathcal{O}^{\text{data}}(\mathcal{D}) \setminus \mathcal{O}} \mathrm{Pr} \left[ \mathcal{B}\left(\mathcal{D}'_y\right) = \mathrm{false} \right] \\ 
                & \quad \quad \quad \quad \times \mathrm{Pr} \left[ \mathcal{B}\left(\mathcal{D}'_{y^{*}}\right) = \mathrm{false} \right] \Bigg) \times \frac{1}{\mathrm{Pr} \left[ \mathcal{B}\left(\mathcal{D}'_{y^{*}}\right) = \mathrm{false} \right]} \\
                &= \sum_{\mathcal{O} \in \mathcal{S}} \Bigg( \prod_{y \in \mathcal{O}} \mathrm{Pr} \left[ \mathcal{B}\left(\mathcal{D}'_y\right) = \mathrm{true} \right] \times \prod_{y \in \mathcal{O}^{\text{data}}(\mathcal{D}') \setminus \mathcal{O}} \mathrm{Pr} \left[ \mathcal{B}\left(\mathcal{D}'_y\right) = \mathrm{false} \right] \Bigg) \\
                & \quad \times \frac{1}{\mathrm{Pr} \left[ \mathcal{B}\left(\mathcal{D}'_{y^{*}}\right) = \mathrm{false} \right]} \\
                &= \frac{\mathrm{Pr}\left[ \mathcal{L}_{\mathcal{B}}\left(\mathcal{D}'\right) \in \mathcal{S} \right]}{\mathrm{Pr} \left[ \mathcal{B}\left(\mathcal{D}'_{y^{*}}\right) = \mathrm{false} \right]}.
            \end{split}
        \end{align}
        To prove that $\mathcal{L}_{\mathcal{B}}$ is $(\epsilon, \delta)$-DP, we have two cases:
        \begin{enumerate}
            \item To obtain that
                \begin{equation}
                    \mathrm{Pr}\left[ \mathcal{L}_{\mathcal{B}}\left(\mathcal{D}\right) \in \mathcal{S} \right] \leq  \exp(\epsilon) \mathrm{Pr}\left[ \mathcal{L}_{\mathcal{B}}\left(\mathcal{D}'\right) \in \mathcal{S} \right] + \delta,
                \end{equation}
                we need 
                \begin{equation}
                    \mathrm{Pr}\left[ \mathcal{L}_{\mathcal{B}}\left(\mathcal{D}\right) \in \mathcal{S} \right] - \exp(\epsilon) \mathrm{Pr}\left[ \mathcal{L}_{\mathcal{B}}\left(\mathcal{D}'\right) \in \mathcal{S} \right] \leq \delta,
                \end{equation}
                equivalently, by using \Cref{eq:relation-between-probs-when-different-label},
                \begin{equation}
                    \mathrm{Pr}\left[ \mathcal{L}_{\mathcal{B}}\left(\mathcal{D}\right) \in \mathcal{S} \right] - \exp(\epsilon) \mathrm{Pr}\left[ \mathcal{L}_{\mathcal{B}}\left(\mathcal{D}\right) \in \mathcal{S} \right] \mathrm{Pr} \left[ \mathcal{B}\left(\mathcal{D}'_{y^{*}}\right) = \mathrm{false} \right] \leq \delta.
                \end{equation}
                Therefore, 
                \begin{equation}
                    \mathrm{Pr}\left[ \mathcal{L}_{\mathcal{B}}\left(\mathcal{D}\right) \in \mathcal{S} \right]\left(1  - \exp(\epsilon)\mathrm{Pr} \left[ \mathcal{B}\left(\mathcal{D}'_{y^{*}}\right) = \mathrm{false} \right]  \right) \leq \delta.
                \end{equation}
                However, since $y^{*}$ appears only once in $\mathcal{D}'$, then
                \begin{equation}
                    \mathrm{Pr}\left[ \mathcal{L}_{\mathcal{B}}\left(\mathcal{D}\right) \in \mathcal{S} \right]\left(1  - \exp(\epsilon)\mathrm{Pr} \left[ \mathcal{B}\left( \left\{  (\pmb{x}^{*}, y^{*}) \right\} \right) = \mathrm{false} \right]  \right) \leq \delta
                \end{equation}
                Since this has to hold for all $\mathcal{S}$, we need to select $\delta$ such that:
                \begin{equation}
                    \delta \geq \max_{\mathcal{S} \subseteq \mathcal{P}\left(\mathcal{O}^{\text{data}}\left(\mathcal{D}\right)\right)} \mathrm{Pr}\left[ \mathcal{L}_{\mathcal{B}}\left(\mathcal{D}\right) \in \mathcal{S} \right]\left(1  - \exp(\epsilon)\mathrm{Pr} \left[ \mathcal{B}\left( \left\{  (\pmb{x}^{*}, y^{*}) \right\} \right) = \mathrm{false} \right]  \right),
                \end{equation}
                and the maximum occurs when $\mathcal{S} = \mathcal{P}\left(\mathcal{O}^{\text{data}}\left(\mathcal{D}\right)\right)$ for which $\mathrm{Pr}\left[ \mathcal{L}_{\mathcal{B}}\left(\mathcal{D}\right) \in \mathcal{S} \right] = 1$, and thus,
                \begin{equation}\label{eq:delta-first-max}
                    \delta \geq 1  - \exp(\epsilon)\mathrm{Pr} \left[ \mathcal{B}\left( \left\{  (\pmb{x}^{*}, y^{*}) \right\} \right) = \mathrm{false} \right],
                \end{equation}
                and one can work backwards to obtain the $(\epsilon, \delta)$-DP guarantee. Again repeating similar steps but substituting for $\mathrm{Pr}\left[ \mathcal{L}_{\mathcal{B}}\left(\mathcal{D}\right) \in \mathcal{S} \right]$, yields
                \begin{equation}
                    \frac{\mathrm{Pr}\left[ \mathcal{L}_{\mathcal{B}}\left(\mathcal{D}'\right) \in \mathcal{S} \right]}{\mathrm{Pr} \left[ \mathcal{B}\left(\mathcal{D}'_{y^{*}}\right) = \mathrm{false} \right]} - \exp(\epsilon) \mathrm{Pr}\left[ \mathcal{L}_{\mathcal{B}}\left(\mathcal{D}'\right) \in \mathcal{S} \right] \leq \delta,
                \end{equation}
                and hence
                \begin{equation}\label{eq:delta-first-max2}
                    \delta \geq \frac{1}{\mathrm{Pr} \left[ \mathcal{B}\left( \left\{  (\pmb{x}^{*}, y^{*}) \right\} \right) = \mathrm{false} \right]}  - \exp(\epsilon).
                \end{equation}
                However, \Cref{eq:delta-first-max2} is trivial since $\exp(\epsilon) > 1$ for any $\epsilon > 0$ and $\frac{1}{\mathrm{Pr}[\cdot]} \geq 1$.
            \item To obtain that
            \begin{equation}
                    \mathrm{Pr}\left[ \mathcal{L}_{\mathcal{B}}\left(\mathcal{D}'\right) \in \mathcal{S} \right] \leq  \exp(\epsilon) \mathrm{Pr}\left[ \mathcal{L}_{\mathcal{B}}\left(\mathcal{D}\right) \in \mathcal{S} \right] + \delta,
                \end{equation}
                we need 
                \begin{equation}
                    \mathrm{Pr}\left[ \mathcal{L}_{\mathcal{B}}\left(\mathcal{D}'\right) \in \mathcal{S} \right] - \exp(\epsilon) \mathrm{Pr}\left[ \mathcal{L}_{\mathcal{B}}\left(\mathcal{D}\right) \in \mathcal{S} \right] \leq \delta,
                \end{equation}
                equivalently, by using \Cref{eq:relation-between-probs-when-different-label},
                \begin{equation}
                    \mathrm{Pr}\left[ \mathcal{L}_{\mathcal{B}}\left(\mathcal{D}'\right) \in \mathcal{S} \right] - \exp(\epsilon) \frac{\mathrm{Pr}\left[ \mathcal{L}_{\mathcal{B}}\left(\mathcal{D}'\right) \in \mathcal{S} \right]}{ \mathrm{Pr} \left[ \mathcal{B}\left(\mathcal{D}'_{y^{*}}\right) = \mathrm{false} \right]} \leq \delta.
                \end{equation}
                Therefore, 
                \begin{equation}
                    \mathrm{Pr}\left[ \mathcal{L}_{\mathcal{B}}\left(\mathcal{D}'\right) \in \mathcal{S} \right]\left(1  - \frac{\exp(\epsilon)}{\mathrm{Pr} \left[ \mathcal{B}\left(\mathcal{D}'_{y^{*}}\right) = \mathrm{false} \right] } \right) \leq \delta.
                \end{equation}
                Similarly to the previous point,
                \begin{equation}
                    \mathrm{Pr}\left[ \mathcal{L}_{\mathcal{B}}\left(\mathcal{D}'\right) \in \mathcal{S} \right]\left(1  - \frac{\exp(\epsilon)}{\mathrm{Pr} \left[ \mathcal{B}\left( \left\{  (\pmb{x}^{*}, y^{*}) \right\} \right) = \mathrm{false} \right]}  \right) \leq \delta,
                \end{equation}
                which implies that
                \begin{equation}\label{eq:delta-second-max}
                    \delta \geq 1  - \frac{\exp(\epsilon)}{\mathrm{Pr} \left[ \mathcal{B}\left( \left\{  (\pmb{x}^{*}, y^{*}) \right\} \right) = \mathrm{false} \right]},
                \end{equation}
                which is trivial and just implies that $\delta > 0$. One can work backwards to obtain the $(\epsilon, \delta)$-DP guarantee. Again repeating similar steps but substituting for $\mathrm{Pr}\left[ \mathcal{L}_{\mathcal{B}}\left(\mathcal{D}'\right) \in \mathcal{S} \right]$, yields
                \begin{equation}
                    \mathrm{Pr}\left[ \mathcal{L}_{\mathcal{B}}\left(\mathcal{D}\right) \in \mathcal{S} \right]\mathrm{Pr} \left[ \mathcal{B}\left( \left\{  (\pmb{x}^{*}, y^{*}) \right\} \right) = \mathrm{false} \right] - \exp(\epsilon) \mathrm{Pr}\left[ \mathcal{L}_{\mathcal{B}}\left(\mathcal{D}\right) \in \mathcal{S} \right] \leq \delta,
                \end{equation}
                and hence
                \begin{equation}\label{eq:delta-second-max2}
                    \delta \geq \mathrm{Pr} \left[ \mathcal{B}\left( \left\{  (\pmb{x}^{*}, y^{*}) \right\} \right) = \mathrm{false} \right]  - \exp(\epsilon).
                \end{equation}
                However, \Cref{eq:delta-second-max2} is also trivial since $\exp(\epsilon) > 1$ for any $\epsilon > 0$.
        \end{enumerate}

        On the other hand, if we take $\mathcal{S} \subseteq \mathcal{P}\left( \mathcal{O}^{\text{data}} (\mathcal{D}') \right)$, then there are two cases for any $\mathcal{O} \in \mathcal{S}$:
        \begin{itemize}
            \item $y^{*} \in \mathcal{O}$ implies that $\mathrm{Pr} \left[ \mathcal{L}_{\mathcal{B}}\left(\mathcal{D}\right) = \mathcal{O} \right] = 0$.
            \item $y^{*} \notin \mathcal{O}$ implies that $\mathrm{Pr} \left[ \mathcal{L}_{\mathcal{B}}\left(\mathcal{D}\right) = \mathcal{O} \right] > 0$.
        \end{itemize}
        Therefore, 
        \begin{align}
            \begin{split}
                \mathrm{Pr}\left[ \mathcal{L}_{\mathcal{B}}\left(\mathcal{D}\right) \in \mathcal{S} \right] &= \sum_{\mathcal{O} \in \mathcal{S}} \mathrm{Pr} \left[ \mathcal{L}_{\mathcal{B}}\left(\mathcal{D}\right) = \mathcal{O} \right] \\
                 &= \sum_{\mathcal{O} \in \mathcal{S} \wedge y^{*} \in \mathcal{O}} \mathrm{Pr} \left[ \mathcal{L}_{\mathcal{B}}\left(\mathcal{D}\right) = \mathcal{O} \right]  + \sum_{\mathcal{O} \in \mathcal{S} \wedge y^{*} \notin \mathcal{O}} \mathrm{Pr} \left[ \mathcal{L}_{\mathcal{B}}\left(\mathcal{D}\right) = \mathcal{O} \right]  \\
                 &= 0  + \sum_{\mathcal{O} \in \mathcal{S} \wedge y^{*} \notin \mathcal{O}} \mathrm{Pr} \left[ \mathcal{L}_{\mathcal{B}}\left(\mathcal{D}\right) = \mathcal{O} \right] \\
                 &= \sum_{\mathcal{O} \in \mathcal{S} \wedge y^{*} \notin \mathcal{O}} \mathrm{Pr} \left[ \mathcal{L}_{\mathcal{B}}\left(\mathcal{D}\right) = \mathcal{O} \right].
            \end{split}
        \end{align}
        Similar to the previous argument in \Cref{eq:relation-between-probs-when-different-label}, this becomes:
        \begin{align}\label{eq:new-label-in-range-relation-1}
            \mathrm{Pr}\left[ \mathcal{L}_{\mathcal{B}}\left(\mathcal{D}\right) \in \mathcal{S} \right]
             &= \sum_{\mathcal{O} \in \mathcal{S} \wedge y^{*} \notin \mathcal{O}} \mathrm{Pr} \left[ \mathcal{L}_{\mathcal{B}}\left(\mathcal{D}\right) = \mathcal{O} \right] \\
             &= \sum_{\mathcal{O} \in \mathcal{S} \wedge y^{*} \notin \mathcal{O}} \mathrm{Pr} \left[ \mathcal{L}_{\mathcal{B}}\left(\mathcal{D}'\right) = \mathcal{O} \right] \times \frac{1}{\mathrm{Pr} \left[ \mathcal{B}\left(\mathcal{D}'_{y^{*}}\right) = \mathrm{false} \right]},
        \end{align}
        which implies that
        \begin{align}
            \begin{split}
                 \mathrm{Pr}\left[ \mathcal{L}_{\mathcal{B}}\left(\mathcal{D}'\right) \in \mathcal{S} \right] &=  \sum_{\mathcal{O} \in \mathcal{S} \wedge y^{*} \in \mathcal{O}} \mathrm{Pr} \left[ \mathcal{L}_{\mathcal{B}}\left(\mathcal{D}'\right) = \mathcal{O} \right] + \sum_{\mathcal{O} \in \mathcal{S} \wedge y^{*} \notin \mathcal{O}} \mathrm{Pr} \left[ \mathcal{L}_{\mathcal{B}}\left(\mathcal{D}'\right) = \mathcal{O} \right] \\
                 &= \sum_{\mathcal{O} \in \mathcal{S} \wedge y^{*} \in \mathcal{O}} \mathrm{Pr} \left[ \mathcal{L}_{\mathcal{B}}\left(\mathcal{D}'\right) = \mathcal{O} \right] \\
                 & \quad\quad + \mathrm{Pr} \left[ \mathcal{B}\left(\mathcal{D}'_{y^{*}}\right) = \mathrm{false} \right]\mathrm{Pr}\left[ \mathcal{L}_{\mathcal{B}}\left(\mathcal{D}\right) \in \mathcal{S} \right].
             \end{split}
        \end{align}

        Again, to prove that $\mathcal{L}_{\mathcal{B}}$ is $(\epsilon, \delta)$-DP, we must consider when the maximum difference between:
        \begin{equation}\label{eq:new-label-in-range-dp-1}
            \mathrm{Pr}\left[ \mathcal{L}_{\mathcal{B}}\left(\mathcal{D}\right) \in \mathcal{S} \right]  - \exp(\epsilon)\mathrm{Pr}\left[ \mathcal{L}_{\mathcal{B}}\left(\mathcal{D}'\right) \in \mathcal{S} \right]
        \end{equation}
        or
        \begin{equation}\label{eq:new-label-in-range-dp-2}
            \mathrm{Pr}\left[ \mathcal{L}_{\mathcal{B}}\left(\mathcal{D}'\right) \in \mathcal{S} \right] - \exp(\epsilon) \mathrm{Pr}\left[ \mathcal{L}_{\mathcal{B}}\left(\mathcal{D}\right) \in \mathcal{S} \right] 
        \end{equation}
        occurs and if we choose a $\delta$ larger than both, we can guarantee DP. For the first case, from \Cref{eq:new-label-in-range-relation-1}, observe that:
        \begin{equation}
             \mathrm{Pr}\left[ \mathcal{L}_{\mathcal{B}}\left(\mathcal{D}\right) \in \mathcal{S} \right] \leq  \mathrm{Pr}\left[ \mathcal{L}_{\mathcal{B}}\left(\mathcal{D}'\right) \in \mathcal{S} \right]  \times \frac{1}{\mathrm{Pr} \left[ \mathcal{B}\left(\mathcal{D}'_{y^{*}}\right) = \mathrm{false} \right]}.
        \end{equation}
        Thus,
        \begin{align}
            \begin{split}
                &\mathrm{Pr}\left[ \mathcal{L}_{\mathcal{B}}\left(\mathcal{D}\right) \in \mathcal{S} \right]  - \exp(\epsilon)\mathrm{Pr}\left[ \mathcal{L}_{\mathcal{B}}\left(\mathcal{D}'\right) \in \mathcal{S} \right] \\
                &\leq \frac{ \mathrm{Pr}\left[ \mathcal{L}_{\mathcal{B}}\left(\mathcal{D}'\right) \in \mathcal{S} \right]  }{\mathrm{Pr} \left[ \mathcal{B}\left(\mathcal{D}'_{y^{*}}\right) = \mathrm{false} \right]} -  \exp(\epsilon)\mathrm{Pr}\left[ \mathcal{L}_{\mathcal{B}}\left(\mathcal{D}'\right) \in \mathcal{S} \right]  \\
                 &=  \mathrm{Pr}\left[ \mathcal{L}_{\mathcal{B}}\left(\mathcal{D}'\right) \in \mathcal{S} \right]\left(\frac{1 }{\mathrm{Pr} \left[ \mathcal{B}\left(\mathcal{D}'_{y^{*}}\right) = \mathrm{false} \right]} -  \exp(\epsilon) \right) \\
                 &\leq \frac{1 }{\mathrm{Pr} \left[ \mathcal{B}\left(\mathcal{D}'_{y^{*}}\right) = \mathrm{false} \right]} -  \exp(\epsilon),
            \end{split}
        \end{align}
        which implies 
        \begin{equation}\label{eq:delta-third-max}
            \delta \geq \frac{1 }{\mathrm{Pr} \left[ \mathcal{B}\left( \left\{  (\pmb{x}^{*}, y^{*}) \right\} \right) = \mathrm{false} \right]} -  \exp(\epsilon),
        \end{equation}
        and one can work backwards to obtain the DP guarantees. For \Cref{eq:new-label-in-range-dp-2}, the largest possible difference occurs when $\mathcal{S} = \mathcal{S}^{*} = \left\{ \mathcal{O} \subseteq \mathcal{O}^{\text{data}}(\mathcal{D}') : y^{*} \in \mathcal{O} \right\}$, i.e., only the subsets of the label space of $\mathcal{D}'$ that contain the new label $y^{*}$. Which implies that:
        \begin{align}
            \begin{split}
                \mathrm{Pr}\left[ \mathcal{L}_{\mathcal{B}}\left(\mathcal{D}'\right) \in \mathcal{S}^{*} \right] - \exp(\epsilon) \mathrm{Pr}\left[ \mathcal{L}_{\mathcal{B}}\left(\mathcal{D}\right) \in \mathcal{S^{*}} \right] &= \mathrm{Pr}\left[ \mathcal{L}_{\mathcal{B}}\left(\mathcal{D}'\right) \in \mathcal{S}^{*} \right]  - \exp(\epsilon) \times 0 \\
                &= \mathrm{Pr}\left[ \mathcal{L}_{\mathcal{B}}\left(\mathcal{D}'\right) \in \mathcal{S}^{*} \right]. 
            \end{split}
        \end{align}
        Therefore, 
        \begin{equation}\label{eq:delta-fourth-max}
            \delta \geq \mathrm{Pr}\left[ \mathcal{L}_{\mathcal{B}}\left(\mathcal{D}'\right) \in \mathcal{S} \right]. 
        \end{equation}
        However, since it holds that the following events satisfy
        \begin{equation}
            \{
            \omega \in \Omega:\mathcal{L}_{\mathcal{B}}\left(\mathcal{D}'\right)(\omega) \in \mathcal{S}^{*}\} = \{ \omega \in \Omega:\mathcal{B}\left(\mathcal{D}_{y^{*}}'\right)(\omega) = \mathrm{true} \},
        \end{equation}
        because the mechanism to include $y^{*}$ in its output, it must be that $\mathcal{B}\left(\mathcal{D}_{y^{*}}'\right) = \mathrm{true}$, and the other probabilities get marginalized out, and thus
        \begin{equation}
            \delta \geq \mathrm{Pr}\left[\mathcal{B}\left(\mathcal{D}_{y^{*}}'\right) = \mathrm{true}\right] = 1 - \mathrm{Pr}\left[\mathcal{B}\left(\mathcal{D}_{y^{*}}'\right) = \mathrm{false}\right].
        \end{equation}
    \end{enumerate}

    Combining all the previous results from \Cref{eq:delta-first-max}, \Cref{eq:delta-third-max}, and \Cref{eq:delta-fourth-max}, we obtain:
    \begin{align}
        \begin{split}
            \delta \geq \max\Bigg(&1  - \exp(\epsilon)\mathrm{Pr} \left[ \mathcal{B}\left( \left\{  (\pmb{x}^{*}, y^{*}) \right\} \right) = \mathrm{false} \right],\\
            &\frac{1 }{\mathrm{Pr} \left[ \mathcal{B}\left( \left\{  (\pmb{x}^{*}, y^{*}) \right\} \right) = \mathrm{false} \right]} -  \exp(\epsilon) , \\ 
            &1 - \mathrm{Pr} \left[ \mathcal{B}\left( \left\{  (\pmb{x}^{*}, y^{*}) \right\} \right) = \mathrm{false} \right]
            \Bigg).
        \end{split}
    \end{align}
    Since for any $\epsilon > 0$, we have $\exp(\epsilon) \geq 1$, which implies that $-\exp(\epsilon) \leq -1$, and thus $1 - \exp(\epsilon) \mathrm{Pr} \left[ \mathcal{B}\left( \left\{  (\pmb{x}^{*}, y^{*}) \right\} \right) = \mathrm{false} \right] \leq 1 - \mathrm{Pr} \left[ \mathcal{B}\left( \left\{  (\pmb{x}^{*}, y^{*}) \right\} \right) = \mathrm{false} \right]$, then
        \begin{equation}
        \begin{split}
            \delta \geq \max\left(\frac{1 }{\mathrm{Pr} \left[ \mathcal{B}\left( \left\{  (\pmb{x}^{*}, y^{*}) \right\} \right) = \mathrm{false} \right]} -  \exp(\epsilon), 1 - \mathrm{Pr} \left[ \mathcal{B}\left( \left\{  (\pmb{x}^{*}, y^{*}) \right\} \right) = \mathrm{false} \right] \right),
        \end{split}
    \end{equation}
    which ends the proof.
\end{proof}

For example. we can apply \cref{thm:label-release-mechanism-delta-bound} to the binary mechanism based on the Laplace mechanism $\mathcal{B}_{\mathrm{Lap}}$ to compute $\delta_{\mathrm{Lap}}$. First, since for any singleton $\{ (\vx, y) \}$, we have
\begin{equation}
    \begin{split}
        \mathrm{Pr}\left[\mathcal{B}\left(\{ (\vx, y) \}\right) = \mathrm{true} \right] 
        &= \mathrm{Pr}\left[\left\lvert \{ (\vx, y) \} \right\rvert + \mathrm{Lap}\left(\frac{1}{\epsilon}\right) \geq \tau \right] \\
        &= \mathrm{Pr}\left[1 + \mathrm{Lap}\left(\frac{1}{\epsilon}\right) \geq \tau \right] 
        . 
    \end{split}
\end{equation}
Define 
\begin{equation}
    \delta^{*} =  \mathrm{Pr}\left[1 + \mathrm{Lap}\left(\frac{1}{\epsilon}\right) \geq \tau \right],
\end{equation}
then observe that, 
\begin{equation}
    \delta_{\mathrm{Lap}} \geq \max\left( \frac{1}{ 1 - \delta^{*}} -\exp(\epsilon),  \delta^{*}\right).
\end{equation}


\clearpage
\section{Experimental Details  (for Sec. \ref{sec:experiments})}
\label{app:experimental_details}

\paragraph{Pre-trained Model} Throughout all experiments, we utilise a Vision Transformer VIT-Base-16 (VIT-B) \citep{dosovitskiy2020image} with 85.8M parameters, pretrained on the ImageNet-21K~\citep{ILSVRC15} dataset. We assume that the pre-training data (ImageNet-21K) is public, and the downstream data $\gD$ is private and needs to be protected with DP. 

For all methods but the Cosine Classifier we fine-tune the pre-trained model.
We set the weights of the last linear layer of the ViT-B to zero and always learn them when fine-tuning on $\mathcal{D}$. Additionally, we employ Parameter-Efficient Fine-Tuning in some experiments by learning FiLM~\citep{perez2018film} layers. Although there are many other such adapters such as Model Patch~\citep{mudrakarta_k_2018}, LoRA~\citep{hu2021lora}, CaSE~\citep{PatacchiolaBSHN22} etc., we chose FiLM as it has proven to be highly effective in prior works on (DP) parameter-efficient few-shot transfer learning \citep{shysheya2022fit,tobaben2023Efficacy}. The number of FiLM parameters for the ViT-B are $38400$ as we implement it by freezing all weights but the layer norm scale and bias weights. The the number of the last layer in comparison are $768C+C$ weights, so for example for Split-CIFAR-100 we fine-tune $38400 + 76900 = 115300$ parameters in the Base Dataset Baseline.  

We implement our methods using PyTorch~\citep{PyTorch}, continuum~\citep{douillardlesort2021continuum}, and opacus~\citep{opacus} with the PRV accounting~\citep{Gopi2021}.

\paragraph{Metrics} We report the average accuracy and the average forgetting as \citet{Chaudhry2018Riemannian,mirzadeh2021linear,yoon2022online}:
\begin{itemize}[leftmargin=*,noitemsep,topsep=0pt]
    \item \textbf{Accuracy}: The average accuracy at task $t$ is defined as 
    \begin{align}
        \text{average accuracy}_t = \frac{1}{t} \sum_{i=1}^{t} \text{test set accuracy}_{t,i} \in [0, 1], 
    \end{align}
    where $\text{test set accuracy}_{t,i} \in [0,1]$ is the test set accuracy for task $i$ after learning task $t$.
    Note that $\text{average accuracy}_T$ is the average accuracy across all tasks after the final task $T$ has been learned. 
    
    \item \textbf{Forgetting}: The forgetting of task $i$ is defined as the difference between its highest accuracy and its accuracy at the current task $t$ as 
    \begin{align}
        \text{forgetting}_{t,i} = \max_{k \in \{1,...,t-1\}} (\text{test set accuracy}_{k,i} - \text{test set accuracy}_{t,i}) \in [-1, 1] ,
    \end{align}
    such that the average forgetting at task $t$ is then given by 
    \begin{align}
        \text{average forgetting}_t = \frac{1}{t-1} \sum_{i=1}^{t-1} \text{forgetting}_{t,i} \in [-1, 1].
    \end{align}
    Note that $\text{average forgetting}_T$ is the average forgetting across the first $T-1$ tasks as there is no forgetting of the final learned task $T$.  
\end{itemize}

\paragraph{Data sets} We experiment with the following data sets: 

\begin{itemize}[leftmargin=*,noitemsep,topsep=0pt]
    \item \textbf{Split CIFAR-100}: Split CIFAR-100 which is CIFAR-100~\citep{krizhevsky2009learning} split into 10 tasks with 10 classes/task. We randomly permute the class order for each seed we run.   
    
    \item \textbf{Split ImageNet-R}: This data set consists 30,000 images of renditions (art, cartoons, etc.) of 200 ImageNet classes~\citep{hendrycks2021many} and was introduced by \citet{wang2022dualprompt} as a benchmark for CL with pre-trained models. As in \citet{Jason22ASimpleBaseline,wang2022dualprompt}, we split the classes into 10 tasks with 20 classes/task. We make a random 80/20\% train/test split across the whole dataset. The samples per class are imbalanced in the original data set, and we obtain a 41-334 samples/class in our training sets with our split.  

     \item \textbf{5-Datasets}: This data set concatenates the five 10-class data sets, MNIST~\citep{lecun2010mnist}, SVHN~\citep{netzer2011reading}, notMNIST~\citep{bulatov2011notmnist}, FashionMNIST~\citep{xiao2017/online} and CIFAR-10~\citep{krizhevsky2009learning}. Each data set is considered a task, such that there are 5 tasks with 10 classes/task. We run experiments with all permutations of the tasks and use it in \cref{app:domainshift}. 
    
\end{itemize}

\subsection{Hyperparameters}
We tune the hyperparameters for the DP-SGD methods for each combination of privacy budget ($\epsilon$, $\delta$) and seed once using the hyperparameter tuning library Optuna~\citep{optuna_2019} with the Gaussian process~\citep{Rasmussen2006GP} sampler for 20 iterations. The ranges for the hyperparameters can be found in \cref{tab:hyperparam_ranges}.

\begin{table}[htbp]
    \caption{Hyperparameter ranges used for the tuning.}
    \label{tab:hyperparam_ranges}
    \centering
    \begin{small}
    \begin{tabular}{lcc}
        \toprule
              & \multicolumn{1}{l}{\textbf{lower bound}} & \multicolumn{1}{l}{\textbf{upper bound}} \\
        \midrule
        epochs & \multicolumn{2}{c}{40}\\
        learning rate & \multicolumn{1}{r}{1e-7} & \multicolumn{1}{r}{1e-2} \\
        batch size & \multicolumn{1}{r}{10} & \multicolumn{1}{r}{tuning dataset size} \\
        clipping norm & \multicolumn{1}{r}{0.2} & \multicolumn{1}{r}{10} \\
        noise multiplier & \multicolumn{2}{c}{Based on target $\epsilon$} \\
        \bottomrule
        \end{tabular}
    \end{small}
\end{table}

The tuning is done using a smaller subset than the final training dataset to reduce the required compute budget. For the 5-dataset datasets (CIFAR-10, FashionMNIST, MNIST, notMNIST and SVHN) we tune using a subset of CIFAR-10 that is 10\% the size of the final training dataset and select the hyperparameters that yield the best validation accuracy on a validation dataset of size 4.28\% of the final training dataset. For Split-CIFAR-100 and ImageNet-R we tune using a dataset that is representative of one task (10/20 classes) by splitting the representative dataset 70:30 into tuning and validation dataset. For all datasets, we linearly scale the (expected) batch size to keep the subsampling ratio constant for the new training dataset size~\citep{Koskela2023Tuning} and scale the learning rate of Adam by $\sqrt{scaling\_factor}$ as suggested by prior work~\citep{Granziol2022LearningRates,Malladi2022Scaling}. We do not scale the clipping bound and keep the number of epochs constant at 40.

Similarly to prior work~\citep{de2022unlocking,mehta2023large,tobaben2023Efficacy} we do not account for additional privacy budget spending during the tuning of the hyperparameters.

\subsection{PEFT Ensemble Aggregation Rules}\label{app:aggregation_rules}

We also compared two other aggregation rules in \cref{fig:agg-rules} to the aggregation rule introduced in \cref{eq:argmax-aggregation-rule}, which we refer to as ArgMax. The Median aggregation rule is obtained by subtracting the median of the logits for each model separately, given by the following equation:
\begin{equation}\label{eq:median-aggregation-rule}
	\hat{y}^{*} = \underset{o \in \cup_{k=1}^t \gO_k, l \in \{1,\dots,t\}}{\textstyle{\argmax}}
     \left(\classifier_{l}(\vx^{*})_{o} - \underset{o' \in \cup_{k=1}^t \gO_k}{\textstyle{\mathrm{median}}} \classifier_{l}(\vx^{*})_{o'}  \right).
\end{equation}
We additionally implement the entropy-based aggregation rule proposed by \citep{zhao2024safe} in Equation (11), using the recommended parameter value $\gamma=1$ from Section 4.1, which is further supported by the ablation study in Appendix D. We refer to this aggergation rule as Entropy.

\begin{figure}[ht]
    \centering
    \includegraphics[width=0.5\linewidth]{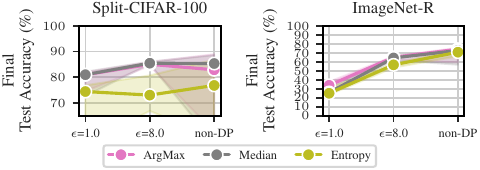}
    \caption{Median accuracy comparison between ArgMax (\cref{eq:argmax-aggregation-rule}), Median (\cref{eq:median-aggregation-rule}), and Entropy (Equation (11) of \citet{zhao2024safe}) aggregation rules for the PEFT model Ensemble. The error bars are the min/max accuracies obtained over five repeats with different class ordering, hyperparameter tuning runs and DP noise.}
    \label{fig:agg-rules}
\end{figure}

\subsection{Experiments Compute Resources}\label{app:runtime}
The expected runtime of the experiments depends on the dataset and method. We run all experiments with the Cosine Classifier on CPU with the runtime being in the order of five minutes per repeat. We use 10 cores of a Xeon Gold 6230 and 10 GB of host memory. We avoid excessive runtime by caching the feature space representations on disk and thus avoid forwarding passes through the model. The experiments that fine-tune a model with FiLM layers (e.g. PEFT Ensemble) use additionally one NVIDIA V100 GPU with 32 GB of VRAM. The runtime for one repeat is around 16 hours then. All experiments together consumed of the order of 30 GPU days and some CPU days.

\subsection{The \texorpdfstring{\citet{desai2021continual}}{Desai et al. (2021)} Baseline}
\label{sec:desai2021-baseline}
We implemented an enhanced version of Algorithm 1 of \citet{desai2021continual} with enhanced privacy accounting to fine-tune the same ViT-B model (with PEFT) that we used in our experiments, by doing the following:
\begin{enumerate}
    \item Instead of adding noise to each per-example gradient, we add noise to the sum of per-example gradients of each training batch. Similarly, we add the noise to the sum of per-example gradients of each memory buffer batch.
    \item We implement Poisson subsampling for both the training data and the memory buffer.
    \item Since for each task $t$, the dataset $\mathcal{D}_t$ is split into two disjoint sets $\mathcal{D}_t^{\text{train}}$ and $\mathcal{D}_t^{\text{ref}}$ ($\mathcal{D}_t = \mathcal{D}_t^{\text{train}} \cup \mathcal{D}_t^{\text{ref}}$ and $\mathcal{D}_t^{\text{train}} \cap \mathcal{D}_t^{\text{ref}} = \varnothing$), we can do the following privacy accounting:
    \begin{enumerate}
        \item Parallel composition between the memory buffer DP gradient computations and the training data DP gradient computations.
        \item The standard DP-SGD sequential composition for the number of epochs required for each task. The projection in step 15 of Algorithm 1 \citep{desai2021continual}: 
        \begin{equation}
            \tilde{g}
            = g
            - \frac{g^\top g_{\mathrm{ref}}}{g_{\mathrm{ref}}^\top g_{\mathrm{ref}}}
              \, g_{\mathrm{ref}},
        \end{equation} is just post-processing from the training data DP gradient $g$ and the memory buffer DP gradient $g_{\mathrm{ref}}$.
        \item Parallel composition over all the tasks for the training data DP gradient computations, since it is assumed that each individuals' data appears in only one task's data.
        \item Since the memory buffer is empty in the first task, and then the memory buffer is nonempty in all the following tasks, we apply $T - 1$ sequential compositions for the memory buffer DP gradient computations (where $T$ is the total number of tasks).
        \item We take into account the privacy amplification by Poisson subsampling for both the training and memory buffer batches. 
        \item We use the PRV accountant~\citep{Gopi2021} to compute the noise multipliers for $g$ and $g_{\mathrm{ref}}$ separately.
    \end{enumerate}
\end{enumerate}

We tried several different memory buffer sizes for CIFAR-100, including 13 samples per class that \citep{desai2021continual} used in their code, and observed that $50$ samples per class was the best. This is $10\%$ the size of a class in the CIFAR-100 training data. We also used $10\%$ the size of a training class size for the ImageNet-R dataset, which is $20$ samples per class. We used the same DP-SGD hyperparameters that we use in our PEFT method for this experiment.

\subsection{Runtime Scalability (w.r.t. number of classes)}\label{sec:runtime-scalability}
\paragraph{Cosine classifier}
For training the cosine classifier, the pre-trained model forward pass takes the same time and only depends on the number of classes in the training dataset; however, the prototype noise addition scales linearly with the number of labels in the output label space $\lvert \cup_{i = 1}^{t} \mathcal{O}_i \rvert$. The inference runtime scales linearly with the number of prototypes which is $\lvert \cup_{i = 1}^{t} \mathcal{O}_i \rvert$. In our experiments we did not measure any significant runtime differences in terms of wall-clock runtime.

\paragraph{PEFT model}
On the other hand, training the PEFT model depends on the number of FiLM parameters and the size of the last layer. Theoretically speaking, since the last layer is a linear operation $Wx + b$, the rows of $W$ and the size of the vector $b$ depend on the number of labels in the output label space $\lvert \cup_{i = 1}^{t} \mathcal{O}_i \rvert$, which implies that the gradient computations in DP-SGD for the last layer will also scale linearly with the number of labels. The inference runtime also scales by the number of FiLM parameters and the size of the last layer which depends on the number of labels in the output label space $\lvert \cup_{i = 1}^{t} \mathcal{O}_i \rvert$.

\clearpage
\section{Detailed Results (for Sec. \ref{sec:experiments})}

\subsection{Additional Figures}

\cref{fig:exp_blurry_comp_add} provides an additional illustration of \cref{fig:exp_blurry_comp}.

\begin{figure}[ht]
    \centering
    \includegraphics[width=\linewidth]{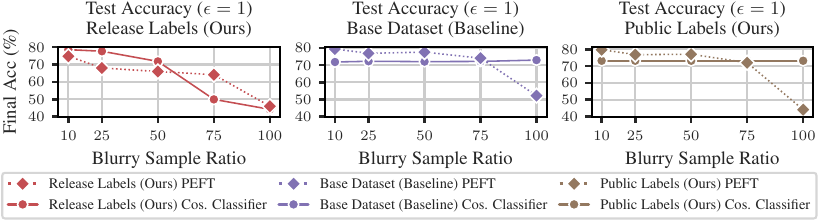}
    \caption{Impact of Blurriness: Increased blurriness degrades the utility at $\epsilon=1$ when releasing the labels. For PEFT Ensemble this is the case with all label methods at $\epsilon=1$. We show the median over five repeats at $\delta=10^{-5}$  (see full results in \cref{tab:datafig6}). Public Label is $1000 \times$ size.}
    \label{fig:exp_blurry_comp_add}
\end{figure}

\subsection{Tabular Results (for Figures in Sec. \ref{sec:experiments})}\label{app:tabularresults}

The following section contains the tabular results for \cref{fig:compare_approaches,fig:explain_learning,fig:explain_prior,fig:exp_blurry_comp} in \cref{tab:datafig3,tab:datafig4a,tab:datafig4b,tab:datafig5,tab:datafig6}.

\begin{table}[ht]
\caption{Performance (final test accuracy in \%) of different methods when varying the label space as tabular version for \cref{fig:compare_approaches}. We report the median as well as the 2nd and 4th best performing seed in parentheses.}
\label{tab:datafig3}
\resizebox{\linewidth}{!}{%
\begin{tabular}{lllllll}
\toprule
Dataset &  $\epsilon$& Classifier & Label Oracle & Base Dataset & Public Labels & Release Labels \\
&  & & (Baseline)  & (Baseline)  & (Ours)  & (Ours)  \\
\midrule
\multirow[t]{4}{*}{CIFAR-100} & \multirow[t]{2}{*}{1} & Cosine Classifier & (78.3), 78.4, (78.5) & (72.0), 72.3, (72.5) & (72.2), 73.2, (73.2) & (78.4), 78.4, (78.4) \\
 &  & PEFT Ensemble & (78.5), 79.0, (80.0) & (80.0), 81.0, (81.5) & (80.0), 80.8, (80.9) & (78.3), 79.0, (79.8) \\
\cline{2-7}
 & \multirow[t]{2}{*}{8} & Cosine Classifier & (79.0), 79.0, (79.0) & (78.8), 78.9, (78.9) & (78.8), 78.8, (78.9) & (79.0), 79.0, (79.0) \\
 &  & PEFT Ensemble & (84.4), 84.9, (85.1) & (85.0), 85.1, (85.9) & (84.9), 85.7, (85.7) & (84.4), 84.9, (85.1) \\
\cline{1-7} \cline{2-7}
\multirow[t]{2}{*}{ImageNet-R} & \multirow[t]{2}{*}{8} & Cosine Classifier & (56.0), 56.1, (56.2) & (46.5), 46.6, (46.7) & (46.7), 46.8, (46.8) & (55.9), 55.9, (55.9) \\
 &  & PEFT Ensemble & (61.4), 64.4, (65.6) & (62.6), 64.9, (65.8) & (63.6), 65.2, (65.3) & (61.5), 64.0, (65.4) \\
\cline{1-7} \cline{2-7}
\bottomrule
\end{tabular}}
\end{table}

\begin{table}[ht]
\caption{Performance (test accuracy in \%) of different methods when varying the budget for releasing the labels as tabular form of \cref{fig:explain_learning}. We report the median as well as the 2nd and 4th best performing seed in parentheses. We will add one missing cell during rebuttal.}
\label{tab:datafig4a}
\centering
\resizebox{\linewidth}{!}{%
\begin{tabular}{llllllllll}
\toprule
 Dataset & $\epsilon$  & Classifier & Budget & Budget: & Fraction: & Budget: & Budget: & Budget: & Budget: \\
 &  & & 2.5\%& 3.59\%& 5.99\%& 10\% & 20\% & 30\% & 50\%\\
\midrule
\multirow[t]{2}{*}{CIFAR-100} & \multirow[t]{2}{*}{1} & Cosine Classifier & (69.5), 71.9, (73.1) & (78.4), 78.5, (78.5) & (78.3), 78.5, (78.5) & (78.3), 78.5, (78.5) & (78.2), 78.2, (78.3) & (77.9), 78.0, (78.1) & (76.7), 76.7, (77.0) \\
 &  & PEFT Ensemble & (70.5), 72.2, (73.4) & (78.3), 79.0, (79.8) & (78.2), 78.9, (79.6) & (78.0), 78.9, (79.3) & (77.6), 78.5, (78.7) & (76.9), 78.0, (78.1) & (74.6), 75.4, (76.5) \\
\cline{1-10} \cline{2-10}
\multirow[t]{2}{*}{ImageNet-R} & \multirow[t]{2}{*}{8} & Cosine Classifier & (52.6), 52.6, (53.2) & (55.7), 55.8, (55.9) & (55.8), 55.8, (55.8) & (55.7), 55.7, (55.8) & (55.4), 55.5, (55.6) & (54.9), 55.1, (55.1) & (52.6), 52.9, (52.9) \\
 &  & PEFT Ensemble & - & (60.6), 64.0, (65.1) & (61.5), 64.0, (65.4) & (61.4), 63.9, (65.0) & (60.6), 63.2, (64.2) & (59.4), 62.4, (62.9) & (57.0), 59.8, (60.1) \\
\cline{1-10} \cline{2-10}
\bottomrule
\end{tabular}}
\end{table}

\begin{table}[ht]
\caption{Number of released labels when varying the the budget for releasing the lables as tabular form of \cref{fig:explain_learning}. Both Cosine Classifier and PEFT Ensemble have the same numbers as they are independent from the label release mechanism. We report the median as well as the 2nd and 4th best performing seed in parentheses.}
\label{tab:datafig4b}
\centering
\resizebox{\linewidth}{!}{%
\begin{tabular}{llrrrrrrr}
\toprule
 Dataset & $\epsilon$  & Budget & Budget: & Fraction: & Budget: & Budget: & Budget: & Budget: \\
 &  & 2.5\%& 3.59\%& 5.99\%& 10\% & 20\% & 30\% & 50\%\\
\midrule
\multirow[t]{1}{*}{CIFAR-100} & 1 & (87), 90, (93) & (100), 100, (100) & (100), 100, (100) & (100), 100, (100) & (100), 100, (100) & (100), 100, (100) & (100), 100, (100) \\
\cline{1-9}
\multirow[t]{1}{*}{ImageNet-R} & 8 & (164), 164, (167) & (194), 194, (194) & (200), 200, (200) & (200), 200, (200) & (200), 200, (200) & (200), 200, (200) & (200), 200, (200) \\
\cline{1-9}
\bottomrule
\end{tabular}}
\end{table}

\begin{table}[hbt!]
\caption{Performance (final test accuracy in \%) of different methods when varying the label space method as tabular version for \cref{fig:explain_prior}. We report the median as well as the 2nd and 4th best performing seed in parentheses.}
\label{tab:datafig5}
\centering
\resizebox{\linewidth}{!}{%
\begin{tabular}{lllllll}
\toprule
Dataset & $\epsilon$ & Classifier & Label Oracle & Base Dataset & Public Labels & Public Labels \\
& & & (Baseline) & (Baseline) & (10$\times$) & (1000$\times$) \\
\midrule
\multirow[t]{4}{*}{CIFAR-100} & \multirow[t]{2}{*}{1} & Cosine Classifier & (78.3), 78.4, (78.5) & (72.0), 72.3, (72.5) & (72.1), 72.2, (72.6) & (72.2), 73.2, (73.2) \\
 &  & PEFT Ensemble & (78.5), 79.0, (80.0) & (80.0), 81.0, (81.5) & (80.0), 81.1, (81.2) & (80.0), 80.8, (80.9) \\
\cline{2-7}
 & \multirow[t]{2}{*}{8} & Cosine Classifier & (79.0), 79.0, (79.0) & (78.8), 78.9, (78.9) & (78.9), 78.9, (79.0) & (78.8), 78.8, (78.9) \\
 &  & PEFT Ensemble & (84.4), 84.9, (85.1) & (85.0), 85.1, (85.9) & (85.3), 85.4, (85.6) & (84.9), 85.7, (85.7) \\
\cline{1-7} \cline{2-7}
\multirow[t]{4}{*}{ImageNet-R} & \multirow[t]{2}{*}{1} & Cosine Classifier & (32.9), 33.3, (33.5) & (12.4), 13.1, (13.1) & (12.8), 13.3, (13.4) & (10.0), 10.1, (10.6) \\
 &  & PEFT Ensemble & (33.2), 33.3, (35.6) & (32.0), 33.2, (35.2) & (30.6), 33.5, (34.9) & (27.4), 27.8, (35.4) \\
\cline{2-7}
 & \multirow[t]{2}{*}{8} & Cosine Classifier & (56.0), 56.1, (56.2) & (46.5), 46.6, (46.7) & (46.5), 46.8, (46.9) & (46.7), 46.8, (46.8) \\
 &  & PEFT Ensemble & (61.4), 64.4, (65.6) & (62.6), 64.9, (65.8) & (63.9), 65.7, (65.7) & (63.6), 65.2, (65.3) \\
\cline{1-7} \cline{2-7}
\bottomrule
\end{tabular}}
\end{table}

\begin{table}[htb!]
\caption{Performance (test accuracy in \%) of different methods when varying blurry sample ratio as tabular version for \cref{fig:exp_blurry_comp}.  The dataset is Split-CIFAR-100. We report the median as well as the 2nd and 4th best performing seed in parentheses.}
\label{tab:datafig6}
\resizebox{\linewidth}{!}{%
\begin{tabular}{llllllll}
\toprule
$\epsilon$ & Classifier & Label Method & \multicolumn{5}{|c|}{Blurry Sample Ratio} \\
&  &  & 10 & 25 & 50 & 75 & 100 \\

\midrule
\multirow[t]{6}{*}{1} & Cosine Classifier & Base Dataset (Baseline) & (72.2), 72.7, (73.1) & (72.6), 72.8, (72.9) & (72.4), 73.0, (73.4) & (72.2), 72.5, (73.7) & (72.5), 73.0, (73.3) \\
 &  & Public Labels (Ours) & (72.2), 73.2, (73.2) & (72.2), 73.2, (73.2) & (72.2), 73.2, (73.2) & (72.2), 73.2, (73.2) & (72.2), 73.2, (73.2) \\
 & & Release Labels (Ours) & (78.5), 78.6, (78.6) & (77.2), 77.6, (77.9) & (70.9), 71.8, (72.1) & (49.2), 49.9, (50.3) & (43.4), 44.2, (44.3) \\
 & PEFT Ensemble & Base Dataset (Baseline) & (77.7), 79.4, (80.5) & (70.2), 76.9, (80.5) & (48.2), 77.6, (79.5) & (44.4), 74.0, (74.6) & (44.2), 52.2, (54.5) \\
 & & Public Labels (Ours) & (76.3), 78.9, (79.8) & (70.4), 75.4, (76.8) & (48.9), 75.4, (77.1) & (44.4), 72.0, (72.5) & (43.7), 44.2, (44.3) \\
 & & Release Labels (Ours) & (73.5), 74.8, (78.6) & (61.1), 68.0, (77.9) & (44.0), 66.0, (75.4) & (43.6), 64.1, (67.0) & (43.8), 46.0, (47.2) \\
\cline{1-8}
\multirow[t]{6}{*}{8} & Cosine Classifier & Base Dataset (Baseline) & (78.6), 79.0, (79.0) & (78.9), 79.0, (79.0) & (78.9), 79.0, (79.1) & (78.8), 78.8, (78.9) & (78.8), 78.9, (78.9) \\
 &  & Public Labels (Ours) & (78.8), 78.8, (78.9) & (78.8), 78.8, (78.9) & (78.8), 78.8, (78.9) & (78.8), 78.8, (78.9) & (78.8), 78.8, (78.9) \\
 &  & Release Labels (Ours) & (79.0), 79.1, (79.1) & (79.1), 79.1, (79.1) & (78.9), 79.0, (79.2) & (78.8), 79.0, (79.1) & (78.9), 78.9, (79.1) \\
 & PEFT Ensemble & Base Dataset (Baseline) & (86.4), 86.6, (86.7) & (86.7), 87.0, (87.1) & (87.3), 87.4, (87.4) & (87.2), 87.3, (87.8) & (86.0), 86.9, (87.1) \\
 & & Public Labels (Ours) & (84.9), 85.7, (85.7) & (84.9), 85.7, (85.7) & (84.9), 85.7, (85.7) & (84.9), 85.7, (85.7) & (84.4), 84.9, (85.0) \\
 & & Release Labels (Ours) & (83.8), 83.8, (84.5) & (86.0), 86.0, (86.2) & (86.7), 86.8, (87.1) & (86.5), 86.7, (86.7) & (84.8), 86.0, (86.1) \\
\cline{1-8} \cline{2-8}
\bottomrule
\end{tabular}}
\end{table}

\clearpage


\section{Impact of Varying Degrees of Domain Shift (for Sec. \ref{sec:experiments})}\label{app:domainshift}

In \cref{fig:domainshift}, we assess the performance of our methods on standard CL benchmarks with varying degrees of domain shift between tasks and pre-training data.

\paragraph{Baselines} 
We compare against the following baselines (more details in \cref{app:shift_baselines}: 
\begin{itemize}[leftmargin=*,noitemsep,topsep=-4pt]
    \item \textbf{Naive (Lower)}: We fine-tune one pre-trained model with DP-SGD over all $T$ tasks sequentially, which is a lower bound as no means to mitigate catastrophic forgetting are in place (\cref{alg:naive-baseline}).
    \item \textbf{Non CL Baseline (Upper)}: We fine-tune one pre-trained model with DP-SGD with all data at once, showing the cost of DP training without CL since there is no split into tasks (\cref{alg:full-data-baseline}).
\end{itemize}

PEFT Ensemble outperforms the Cosine Classifier in all experiments in test accuracy but is more expensive in storage and compute. \cref{fig:domainshift} (left) shows that the Cosine Classifier is a viable alternative when storage or compute are limited while the domain shift to the pre-training data is small.  However, with larger domain shift (see right panel of \cref{fig:domainshift}), PEFT is the only option to achieve good privacy/utility trade-offs. All methods show similar trends when the privacy budget changes, but the PEFT Ensemble has a larger variability in utility, especially without DP, due to the overconfidence of models on unseen classes. Neither method is particularly affected by growing shifts between tasks (as can be seen with 5-dataset in \cref{app:5-dataset}).

\begin{figure}[H]
    \centering
    \includegraphics{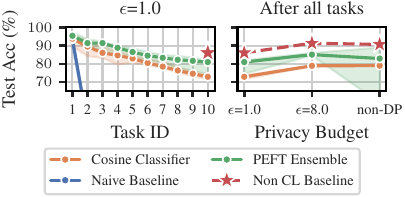}
    \includegraphics{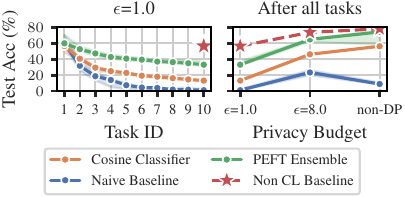}
    \caption{Split-CIFAR-100 (left) and Split-ImageNet-R (right) with $\delta$=$10^{-5}$: PEFT Ensemble outperforms, but Cosine Classifier feasible at lower domain shift (left). Detailed results in \cref{app:cifar100_additional,app:imagenet-r,app:5-dataset}
    }\label{fig:domainshift}
    \vspace{-4mm}

\end{figure}

\clearpage
\subsection{Baseline Details}\label{app:shift_baselines}
Below, we provide pseudo-codes for the baselines introduced in \cref{app:domainshift}.

\subsubsection{Naive Baseline}

We start training a model $\classifier$ at task $t=1$ under DP and continue further training it for all tasks $T$ using DP-SGD. This is a lower bound as no measures are in place to mitigate catastrophic forgetting.

    \begin{algorithm}[H]
	\caption{Naive Baseline}\label{alg:naive-baseline}
	
	\begin{algorithmic}[1]
		\STATE \textbf{Initialise model $\classifier$}
		\STATE $\classifier \leftarrow \text{init()}$
		\FOR{$t \in [T]$}
		\STATE \textbf{Continue to train the model with the data of the current task $t$}
		\STATE $\classifier \leftarrow \text{DP-SGD}(\classifier; D_t)$
		\ENDFOR
	\end{algorithmic}
	\textbf{Output:}  a set of $T$ model checkpoints $\{\classifier_1 \dots \classifier_T\}$ (note that this is for evaluating)
	
	\vspace{1cm}
\textbf{Test of the models}
\begin{algorithmic}[1]
	\FOR{$t \in [T]$}
	\STATE \textbf{Predict test data of all tasks seen so far}
	\FOR {$x_i^* \in \cup_i^t \mathcal{D}_i^{\text{test}}$}
	\STATE $\hat{y_i}^* \leftarrow \argmax_{k}\classifier_t(x_i*)$
	\ENDFOR
	\STATE \textbf{Evaluate average accuracy and forgetting}
	\STATE $\text{average accuracy}_t, \text{forgetting}_t \leftarrow \text{eval}(\{y_i \dots\}, \{y^*_i \dots\}, \text{per-task accuracy history})$
	\ENDFOR
\end{algorithmic}
\textbf{Output:} average accuracy for all tasks $T$ $\{\text{average accuracy}_1 \dots \text{average accuracy}_T\}$, average forgetting for all tasks $T$ $\{\text{forgetting}_1 \dots \text{forgetting}_T\}$ 
\end{algorithm}

\subsubsection{Full Data Baseline}
We assume the availability $\cup_i^t \gD_i$ and train a model with DP-SGD~\citep{abadi2016deep}. While this is DP it is not CL.

    \begin{algorithm}[H]
	\caption{Full Data Baseline}\label{alg:full-data-baseline}
	
	\textbf{Training of the model}
	\begin{algorithmic}[1]
        \STATE \textbf{Initialise model $\classifier$}
        \STATE $\classifier \leftarrow \text{init()}$
        \STATE \textbf{Train a model with all the data}
        \STATE $\classifier \leftarrow \text{DP-SGD}(\classifier; \cup_i^T D_i)$
	\end{algorithmic}
	\textbf{Output:} a model $\classifier$ 
	
	\vspace{1cm}
	\textbf{Test of the models}
	\begin{algorithmic}[1]
    \STATE \textbf{Predict test data}
    \FOR{$x_i^* \in \cup_i^T \mathcal{D}_i^{\text{test}}$}
    \STATE $\hat{y_i}^* \leftarrow \argmax_{k}\classifier(x_i*)$
    \ENDFOR
    \STATE \textbf{Evaluate average accuracy}
    \STATE $\text{average accuracy} \leftarrow \text{eval}(\{y_i \dots\}, \{y^*_i \dots\})$
	\end{algorithmic}
	\textbf{Output:} average accuracy
	
\end{algorithm}

\clearpage
\subsection{Split-CIFAR-100}\label{app:cifar100_additional}
This subsection complements the results of \cref{fig:domainshift} (left).

\begin{table}[ht]
\centering
\caption{Average accuracy (AA) and average forgetting (AF) (scaled by 100) after learning the final task in \% on 10-task Split-CIFAR-100. We report the mean and std of the metrics averaged over 5 seeds. We did not compute the AF numbers of the naive baseline at other privacy levels than $\epsilon=1$ as the performance is expected to be bad.}
\resizebox{0.95\linewidth}{!}{%
\begin{tabular}{lcccccc}
\toprule
    & \multicolumn{2}{c}{$\epsilon = 1$, $\delta= $1e-5} & \multicolumn{2}{c}{$\epsilon = 8$, $\delta= $1e-5} & \multicolumn{2}{c}{non-DP} \\
    \cmidrule(lr){2-3} \cmidrule(lr){4-5} \cmidrule(lr){6-7}
\textbf{Method} & AA ($\uparrow$) & AF ($\downarrow$) & AA ($\uparrow$) & AF  ($\downarrow$) & AA ($\uparrow$) & AF ($\downarrow$) \\
\midrule
Naive & 9.35 $\pm$ 0.14 & 94.10 $\pm$ 0.00 & 9.55 $\pm$ 0.23 & - & 9.63 $\pm$ 0.05  & -    \\
Cosine classifier & 72.78 $\pm$ 0.51  & 9.92 $\pm$ 0.51 & 78.93 $\pm$ 0.10 & 6.15$\pm$ 0.24 & 79.02 $\pm$ 0.00 & 6.02 $\pm$ 0.56 \\
PEFT Ensemble (FiLM) & 79.79 $\pm$ 3.02 & 7.17 $\pm$ 2.56 & 85.26 $\pm$ 0.633 & 6.07 $\pm$ 0.59 & 79.39 $\pm$ 12.20 &  10.43 $\pm$ 9.28 \\
Non CL Baseline (FiLM) & 86.06 & - & 91.31 & - & 90.68 & -    \\
PEFT Ensemble (last layer) & 78.81 $\pm$ 0.48 & 0.06 $\pm$ 0.00 & 82.48 $\pm$ 0.31 & - & 82.60 $\pm$ 0.38 &  - \\
Non CL Baseline (last layer) & 85.1 & - & 88.5 & - & 88.4 & -    \\
\bottomrule
\end{tabular}
}
\label{tab:split-cifar100}
\end{table}

\begin{figure}[ht]
    \includegraphics[width=\linewidth]{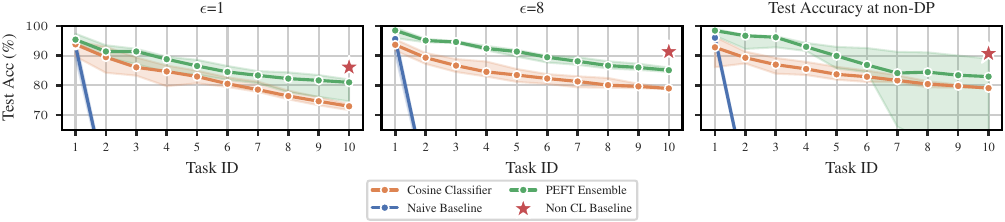}
    \caption{Median test accuracy per task on Split-CIFAR-100 (ViT-B pre-trained on ImageNet-21k). The error bars are the min/max accuracies obtained over five repeats with different class ordering, hyperparameter tuning runs and DP noise.}
\end{figure}

\begin{figure}[ht]
    \includegraphics[width=\linewidth]{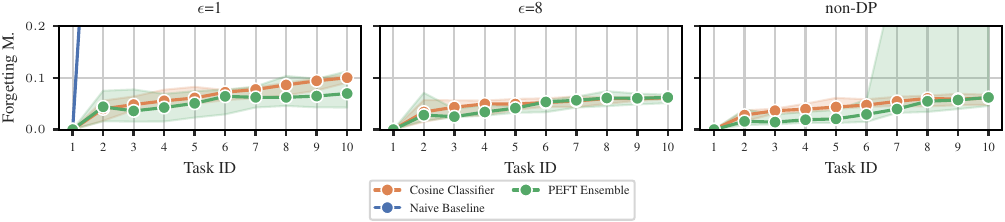}
    \caption{Median forgetting measure per task on Split-CIFAR-100 (ViT-B pre-trained on ImageNet-21k). The error bars are the min/max forgetting measure obtained over five repeats with different class ordering, hyperparameter tuning runs and DP noise.}
\end{figure}

\clearpage


\subsection{ImageNet-R}\label{app:imagenet-r}
This subsection complements the results of \cref{fig:domainshift} (right).

\begin{table*}[ht]
\centering
\caption{Average accuracy (AA) and average forgetting (AF) (scaled by 100) after learning the final task in \% on 10-task Split ImageNet-R. We report the mean and std of the metrics averaged over three seeds. }
\resizebox{0.95\linewidth}{!}{%
\begin{tabular}{lcccccc}
\toprule
    & \multicolumn{2}{c}{$\epsilon = 1$, $\delta= $1e-5} & \multicolumn{2}{c}{$\epsilon = 8$, $\delta= $1e-5} & \multicolumn{2}{c}{non-DP} \\
    \cmidrule(lr){2-3} \cmidrule(lr){4-5} \cmidrule(lr){6-7}
\textbf{Method} & AA ($\uparrow$) & AF ($\downarrow$) & AA ($\uparrow$) & AF  ($\downarrow$) & AA ($\uparrow$) & AF ($\downarrow$) \\
\midrule
Naive & 1.52 $\pm$ 0.79 & 37.90 $\pm$ 8.05 & 23.49 $\pm$ 2.50  & 66.92 $\pm$ 1.94 & 9.12 $\pm$  0.99  & 87.51 $\pm$ 1.59 \\
Cosine classifier & 13.04 $\pm$ 1.23  & 13.89 $\pm$ 2.97 & 46.17 $\pm$ 0.21 & 12.21 $\pm$ 2.15 & 56.30 $\pm$ 0.00 & 7.51 $\pm$ 1.49   \\
PEFT Ensemble (FiLM)  & 33.19 $\pm$ 2.37 & 12.22 $\pm$ 2.30 & 64.91 $\pm$ 1.75 & 8.87 $\pm$ 0.93 & 74.32 $\pm$ 7.63 & 6.07 $\pm$ 1.11 \\
Non CL Baseline (FiLM) & 56.62 & - & 73.77  & - & 78.24 & -    \\
PEFT Ensemble (last layer) & 7.29 $\pm$ 2.82 & 3.54 $\pm$ 0.62 & 47.97 $\pm$ 2.22 & 6.33 $\pm$ 0.72 & 48.16 $\pm$ 12.80 & 6.54 $\pm$ 4.19 \\
Non CL Baseline (last layer) & 16.57 $\pm$ 2.86 & - & 52.16 $\pm$ 4.35 & - & 62.17 $\pm$ 2.03 & -    \\
\bottomrule
\end{tabular}
}
\label{tab:imagenet-r}
\end{table*}

\begin{figure}[ht]
    \includegraphics[width=\linewidth]{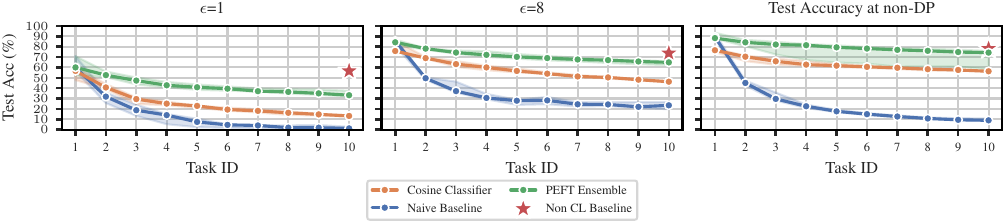}
    \caption{Median test accuracy per task on ImageNet-R (ViT-B pre-trained on ImageNet-21k). The error bars are the min/max accuracies obtained over five repeats with different class ordering, hyperparameter tuning runs and DP noise.}
\end{figure}

\clearpage

\subsection{5-dataset}\label{app:5-dataset}
5-Datasets~\citep{Ebrahimi20AdversarialCL} consist of the five datasets, MNIST~\citep{lecun2010mnist}, SVHN~\citep{netzer2011reading}, notMNIST~\citep{bulatov2011notmnist}, FashionMNIST~\citep{xiao2017/online} and CIFAR-10~\citep{krizhevsky2009learning} where each forms one task. 

In this section we analyse the results the results of 5-datasets. They are displayed in \cref{fig:5-dataset-acc,fig:5-dataset-forgetting,tab:5-dataset}.

\begin{table}[ht]
    \centering
    \caption{
        Average accuracy (AA) and average forgetting (AF) (scaled by 100) after learning the final task in \% on 5-dataset. We report the mean and std of the metrics averaged over all task order permutations.
    }
    \resizebox{0.95\linewidth}{!}{%
        \begin{tabular}{lcccccc}
            \toprule
                & \multicolumn{2}{c}{$\epsilon = 1$, $\delta= $1e-5} & \multicolumn{2}{c}{$\epsilon = 8$, $\delta= $1e-5} & \multicolumn{2}{c}{non-DP} \\
                \cmidrule(lr){2-3} \cmidrule(lr){4-5} \cmidrule(lr){6-7}
            \textbf{Method} & AA ($\uparrow$) & AF ($\downarrow$) & AA ($\uparrow$) & AF  ($\downarrow$) & AA ($\uparrow$) & AF ($\downarrow$) \\
            \midrule
            Naive &  15.93 $\pm$ 1.35 & 85.00 $\pm$ 5.00 &  17.56 $\pm$ 1.35 & 90.00 $\pm$ 2.00 & 13.15 $\pm$ 5.84 & 79.00 $\pm$ 12.00   \\
            Cosine classifier & 58.54 $\pm$ 0.35 & 1.00 $\pm$ 0.00 & 59.78 $\pm$ 0.07 & 0.00 $\pm$ 0.00 & 59.87 $\pm$ 0.00 & 0.00 $\pm$ 0.00  \\
            PEFT Ensemble (FiLM)  & 79.69 $\pm$ 3.51 & 5.00 $\pm$ 4.00 & 87.83 $\pm$ 0.00 & 3.00 $\pm$ 2.00 & 65.75 $\pm$ 15.07  & 14.00 $\pm$ 11.00 \\
            \bottomrule
        \end{tabular}
    }\label{tab:5-dataset}
\end{table}

\begin{figure}[ht]
    \includegraphics[width=\linewidth]{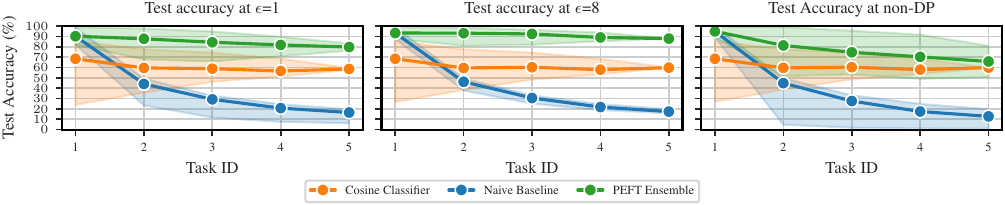}
    \caption{Median test accuracy per task on 5-dataset (ViT-B pre-trained on ImageNet-21k). The error bars are the min/max test accuracy obtained over all permutations of tasks after training three models with hyperparameter tuning runs and DP noise.}\label{fig:5-dataset-acc}
\end{figure}

\begin{figure}[ht]
    \includegraphics[width=\linewidth]{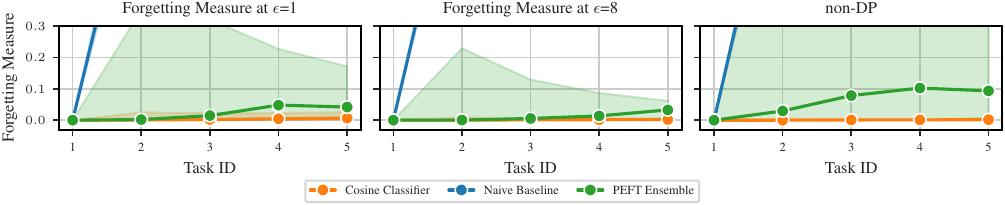}
    \caption{Median forgetting measure per task on 5-dataset (ViT-B pre-trained on ImageNet-21k). The error bars are the min/max forgetting measure obtained over all permutations of tasks after training three models with hyperparameter tuning runs and DP noise.}\label{fig:5-dataset-forgetting}
\end{figure}

\clearpage
\section{Licenses and Access for Models and Datasets}~\label{sec:licenses}

The Vision Transformer ViT-Base-16 (ViT-B) \citep{dosovitskiy2020image} is licensed with the Apache-2.0 license and can be obtained through the instructions on \url{https://github.com/google-research/vision_transformer}.

The licenses and means to access the data sets can be found below. We downloaded all data sets but notMNIST and ImageNet-R from torchvision (version 0.18.1).

\begin{itemize}[leftmargin=*,noitemsep,topsep=0pt]
    \item CIFAR10 \citep{krizhevsky2009learning} is licensed with an unknown license and the data set as specified on \url{https://pytorch.org/vision/main/generated/torchvision.datasets.CIFAR10.html#torchvision.datasets.CIFAR10}.
    \item CIFAR100 \citep{krizhevsky2009learning} is licensed with an unknown license and we the data set as specified on \url{https://pytorch.org/vision/stable/generated/torchvision.datasets.CIFAR100.html#torchvision.datasets.CIFAR100}.
    \item FashionMNIST \citep{xiao2017/online} is licensed under MIT and we use the data set as specified on \url{https://pytorch.org/vision/stable/generated/torchvision.datasets.FashionMNIST.html#torchvision.datasets.FashionMNIST}.
    \item ImageNet-R \citep{hendrycks2021many} is licensed with an unknown license and we use the version from \url{https://people.eecs.berkeley.edu/~hendrycks/imagenet-r.tar}.
    \item MNIST~\citep{lecun2010mnist} is licensed with an unknown license and wethe data set as specified on \url{https://pytorch.org/vision/stable/generated/torchvision.datasets.MNIST.html#torchvision.datasets.MNIST}.
    \item notMNIST~\citep{bulatov2011notmnist} is licensed under an unknown license and we use the version at git hash 339df59 found at \url{https://github.com/facebookresearch/Adversarial-Continual-Learning/blob/main/data/notMNIST.zip}
    \item SVHN~\citep{netzer2011reading} is licensed under CC and the data set as specified on \url{https://pytorch.org/vision/stable/generated/torchvision.datasets.SVHN.html#torchvision.datasets.SVHN}.
\end{itemize}

\clearpage

\section{LLM Usage}\label{app:llm_usage}
We primarily used large language models (LLMs) to assist with the editing of this manuscript. Particularly, to improve the readability and conciseness of certain sentences. 
We used LLMs for code completion in visual studio code, but manually verified the suggestions.
The models were not used for generating novel content.

\end{document}